\documentclass{article} 
\usepackage[numbers]{natbib}
\bibliographystyle{plainnat}


\usepackage[preprint]{neurips_2022}


\usepackage{amsmath,amsfonts,bm}









\def\eqref#1{equation~\ref{#1}}









\def\1{\bm{1}}










\DeclareMathAlphabet{\mathsfit}{\encodingdefault}{\sfdefault}{m}{sl}
\SetMathAlphabet{\mathsfit}{bold}{\encodingdefault}{\sfdefault}{bx}{n}











\newcommand{\R}{\mathbb{R}}

\newcommand{\KL}{D_{\mathrm{KL}}}



\DeclareMathOperator*{\argmax}{arg\,max}

\usepackage{hyperref}
\usepackage{url}

\usepackage[utf8]{inputenc} 
\usepackage[T1]{fontenc}    
\usepackage{hyperref}       
\usepackage{url}            
\usepackage{booktabs}       
\usepackage{amsfonts}       
\usepackage{nicefrac}       
\usepackage{microtype}      
\usepackage{amsmath}
\usepackage{algorithm}
\usepackage{algorithmic}
\usepackage{multirow}
\usepackage{array}
\usepackage{booktabs}
\usepackage{graphicx}
\usepackage{svg}
\usepackage{paralist}
\usepackage{subfig}
\usepackage{float}
\usepackage[stable]{footmisc}
\usepackage{wrapfig}
\usepackage{floatflt}
\usepackage{tikz}
\usepackage{enumitem}
\setlist{leftmargin=5.3mm}

\usepackage[english]{babel}
\usepackage{amsthm}
\usepackage{mathtools}
\usepackage{bm}
\usepackage{amsmath}
\usepackage{xcolor}
\usepackage{array}

\newcolumntype{P}[1]{>{\centering\arraybackslash}p{#1}}
\newcolumntype{M}[1]{>{\centering\arraybackslash}m{#1}}

\renewcommand{\eqref}[1]{(\ref{#1})}

\providecommand{\customgenericname}{}
\newcommand{\newcustomtheorem}[2]{%
	\newenvironment{#1}[1]
	{%
		\renewcommand\customgenericname{#2}%
		\renewcommand\theinnercustomgeneric{##1}%
		\innercustomgeneric
	}
	{\endinnercustomgeneric}
}

\newcustomtheorem{customthm}{Theorem}
\newcustomtheorem{customlemma}{Lemma}

\title{Truly Deterministic Policy Optimization}

\author{%
	Ehsan Saleh\textsuperscript{1}, Saba Ghaffari\textsuperscript{1}, Timothy Bretl\textsuperscript{1,2}, Matthew West\textsuperscript{3}\\
	\textsuperscript{1}Department of Computer Science\\
	\textsuperscript{2}Department of Aerospace Engineering\\
	\textsuperscript{3}Department of Mechanical Science and Engineering\\
	University of Illinois Urbana-Champaign\\
	\texttt{ehsans2,sabag2,tbretl,mwest@illinois.edu} \\
}


%

\newcommand{\insertnlrfig}[2]{
	\begin{figure}[!htbp]
		\includegraphics[width=\linewidth]{figures/supplements/nlrbench/nlr_return_vs_samples_v#1}
		\includegraphics[width=\linewidth]{figures/supplements/nlrbench/nlr_angle_vs_time_freq_v#1}
		\caption{Results for the #2 variant of the simple pendulum with non-local rewards. Upper panel: training curves with empirical discounted payoffs. Lower panels: trajectories in both the time domain and frequency domain, showing target values of oscillation frequency, amplitude, and offset.}
		\label{fig:nlrv#1}
	\end{figure}
}

\DeclarePairedDelimiter{\ip}\langle\rangle
\newcommand{\EE}{\mathbb{E}}
\newcommand{\PP}{\mathbb{P}}

\newtheorem{theorem}{Theorem}[section]

\newtheorem{lemma}[theorem]{Lemma}
\newcommand{\dth}{\delta\theta}
\DeclareMathOperator*{\Lip}{Lip}
\DeclareMathOperator*{\TV}{TV}

\DeclarePairedDelimiter{\Bip}{\Big\langle}{\Big\rangle}
\DeclarePairedDelimiter{\bip}{\big\langle}{\big\rangle}

\newcommand{\bs}{s}
\newcommand{\ba}{a}


\begin{document}

\vspace{-2mm}
\maketitle
\vspace{-2mm}

\begin{abstract}
	In this paper, we present a policy gradient method that avoids exploratory noise injection and performs policy search over the deterministic landscape. By avoiding noise injection all sources of estimation variance can be eliminated in systems with deterministic dynamics (up to the initial state distribution). Since deterministic policy regularization is impossible using traditional non-metric measures such as the KL divergence, we derive a Wasserstein-based quadratic model for our purposes. We state conditions on the system model under which it is possible to establish a monotonic policy improvement guarantee, propose a surrogate function for policy gradient estimation, and show that it is possible to compute exact advantage estimates if both the state transition model and the policy are deterministic. Finally, we describe two novel robotic control environments---one with non-local rewards in the frequency domain and the other with a long horizon (8000 time-steps)---for which our policy gradient method (TDPO) significantly outperforms existing methods (PPO, TRPO, DDPG, and TD3). Our implementation with all the experimental settings is available at \href{https://github.com/ehsansaleh/code_tdpo}{https://github.com/ehsansaleh/code\_tdpo}.
\end{abstract}

Policy Gradient (PG) methods can be broadly characterized by three defining elements: the policy gradient estimator, the regularization measures, and the exploration profile. For gradient estimation, episodic \citep{williams1992simple}, importance-sampling-based \citep{schulman2015trust}, and deterministic \citep{silver2014deterministic} gradients are some of the most common estimation oracles. As for regularization measures, either an Euclidean distance within the parameter space \citep{williams1992simple,silver2014deterministic,lillicrap2015continuous}, or dimensionally consistent non-metric measures \citep{schulman2015trust,kakade2002approximately,schulman2017proximal,kakade2002natural,wu2017scalable} have been frequently adapted. Common exploration profiles include Gaussian \citep{schulman2015trust} and stochastic processes \citep{lillicrap2015continuous}. These elements form the basis of many model-free and stochastic policy optimization methods successfully capable of learning high-dimensional policy parameters.

Both stochastic and deterministic policy search can be useful in applications. A stochastic policy has the effect of smoothing or filtering the policy landscape, which is desirable for optimization. Searching through stochastic policies has enabled the effective control of challenging environments under a general framework \citep{schulman2015trust,schulman2017proximal}. The same method could either learn robotic movements or play basic games (1) with minimal domain-specific knowledge, (2) regardless of function approximation classes, and (3) with less human intervention (ignoring reward engineering and hyper-parameter tuning) \citep{duan2016benchmarking}. Using stochasticity for exploration, although it imposes approximations and variance, has provided a robust way to actively search for higher rewards. Despite many successes, there are practical environments which remain challenging for current policy gradient methods. For example, non-local rewards (e.g., those defined in the frequency domain), long time horizons, and naturally-resonant environments all occur in realistic robotic systems \citep{golnaraghi2002auto,meirovitch1975elements,preumont2008active} but can present issues for policy gradient search.

To tackle challenging environments such as these, this paper considers policy gradient methods based on deterministic policies and deterministic gradient estimates, which could offer advantages by allowing the estimation of global reward gradients on long horizons without the need to inject noise into the system for exploration. To facilitate a dimensionally consistent and low-variance deterministic policy search, a compatible policy gradient estimator and a metric measure for regularization should be employed. For gradient estimation we focus on Vine estimators \citep{schulman2015trust}, which can be easily applied to deterministic policies. As a metric measure, we use the Wasserstein distance, which can measure meaningful distances between deterministic policy functions that have non-overlapping supports (in contrast to the Kullback-Liebler (KL) divergence and the Total Variation (TV) distance).

The Wasserstein metric has seen substantial recent application in a variety of machine-learning domains, such as the successful stable learning of generative adversarial models \citep{arjovsky2017wasserstein}. Theoretically, this metric has been studied in the context of Lipschitz-continuous Markov decision processes in reinforcement learning \citep{hinderer2005lipschitz,ferns2012metrics}. \citet{pirotta2015policy} defined a policy gradient method using the Wasserestein distance by relying on Lipschitz continuity assumptions with respect to the policy gradient itself. Furthermore, for Lipschitz-continuous Markov decision processes, \citet{asadi2018lipschitz} and \citet{oatao17977} used the Wasserstein distance to derive model-based value-iteration and policy-iteration methods, respectively. On a more practical note, \citet{pacchiano2019wasserstein} utilized Wasserstein regularization for behavior-guided stochastic policy optimization. Moreover,  \citet{abdullah2019wasserstein} has proposed another robust stochastic policy gradient formulation. Estimating the Wasserstein distance for general distributions is more complicated than typical KL-divergences \citep{villani2008optimal}. This fact constitutes and emphasizes the contributions of \citet{abdullah2019wasserstein} and  \citet{pacchiano2019wasserstein}. However, for our deterministic observation-conditional policies, closed-form computation of Wasserstein distances is possible without any approximation.

Existing deterministic policy gradient methods (e.g., DDPG and TD3) use \emph{deterministic policies} \citep{silver2014deterministic,lillicrap2015continuous,fujimoto2018addressing}, meaning that they learn a deterministic policy function from states to actions. However, such methods still use \emph{stochastic search} (i.e., they add stochastic noise to their deterministic actions to force exploration during policy search). In contrast, we will be interested in a method which not only uses \emph{deterministic policies}, but also uses \emph{deterministic search} (i.e., without constant stochastic noise injection). We call this method \emph{truly deterministic policy optimization} (TDPO) and it may have lower estimation variances and better scalability to long horizons, as we will show in numerical examples.

Scalability to long horizons is one of the most challenging aspects of policy gradient methods that use stochastic search. This issue is sometimes referred to as the \emph{curse of horizon} in reinforcement learning \citep{liu2018breaking}. General worst-case analyses suggest that the sample complexity of reinforcement learning is exponential with respect to the horizon length \citep{kakade2003sample,kearns2000approximate,kearns2002sparse}. Deriving polynomial lower-bounds for the sample complexity of reinforcement learning methods is still an open problem \citep{jiang2018open}. Lower-bounding the sample complexity of reinforcement learning for long horizons under different settings and simplifying assumptions has been a topic of theoretical research \citep{dann2015sample,wang2020long}. Some recent work has examined the scalability of importance sampling gradient estimators to long horizons in terms of both theoretical and practical estimator variances \citep{liu2018breaking,kallus2019efficiently,kallus2020statistically}. All in all, long horizons are challenging for all reinforcement learning methods, especially the ones suffering from excessive estimation variance due to the use of stochastic policies for exploration, and our truly deterministic method may have advantages in this respect.

In this paper, we focus on continuous-domain robotic environments with reset capability to previously visited states. The main contributions of this work are: (1) we introduce a Deterministic Vine (DeVine) policy gradient estimator which avoids constant exploratory noise injection; (2) we derive a novel deterministically-compatible surrogate function and provide monotonic payoff improvement guarantees; (3) we show how to use the DeVine policy gradient estimator with the Wasserstein-based surrogate in a practical algorithm (TDPO: Truly Deterministic Policy Optimization); (4) we illustrate the robustness of the TDPO policy search process in robotic control environments with non-local rewards, long horizons, and/or resonant frequencies.

\section{Background} \label{sec:prelim}

\paragraph{MDP preliminaries.}
An infinite-horizon discounted Markov decision process (MDP) is specified by $(\mathcal{S}, \mathcal{A}, P, R, \mu, \gamma)$, where $\mathcal{S}$ is the state space, $\mathcal{A}$ is the action space, $P: \mathcal{S}\times\mathcal{A}\to\Delta(\mathcal{S})$ is the transition dynamics, $R: \mathcal{S}\times\mathcal{A}\to[0, R_{\max}]$ is the reward function, $\gamma \in [0, 1)$ is the discount factor, and $\mu(s)$ is the initial state distribution of interest (where $\Delta(\mathcal{F})$ denotes the set of all probability distributions over $\mathcal{F}$, otherwise known as the Credal set of $\mathcal{F}$). The transition dynamics $P$ is defined as an operator which produces a distribution over the state space for the next state $s' \sim P(s,a)$. The transition dynamics can be easily generalized to take distributions of states or actions as input (i.e., by having $P$ defined as $P: \Delta(\mathcal{S})\times\mathcal{A}\to\Delta(\mathcal{S})$ or $P: \mathcal{S}\times\Delta(\mathcal{A})\to\Delta(\mathcal{S})$). We may abuse the notation and replace $\delta_s$ and $\delta_a$ by $s$ and $a$, where $\delta_s$ and $\delta_a$ are the deterministic distributions concentrated at the state $s$ and action $a$, respectively. A policy $\pi: \mathcal{S}\to\Delta(\mathcal{A})$ specifies a distribution over actions for each state, and induces trajectories from a given starting state $s$ as follows: $s_1=s$, $a_1\sim \pi(s_1)$, $r_1 = R(s_1, a_1)$, $s_2 \sim P(s_2, a_2)$, $a_2\sim \pi(s_2)$, etc. We will denote trajectories as state-action tuples $\tau = (s_1,a_1,s_2,a_2, \ldots)$. One can generalize the dynamics (1) by using a policy instead of an action distribution $\PP(\mu_s, \pi) := \EE_{s\sim \mu_s}[\EE_{a\sim \pi(s)}[P(s,a)]]$, and (2) by introducing the $t$-step transition dynamics recursively as $\PP^t(\mu_s,\pi):=\PP(\PP^{t-1}(\mu_s, \pi), \pi)$ with $\PP^0(\mu_s,\pi):=\mu_s$, where $\mu_s$ is a distribution over $\mathcal{S}$. The visitation frequency can be defined as $\rho_{\mu}^{\pi}:=(1-\gamma)\sum_{t=1}^\infty \gamma^{t-1}\PP^{t-1}(\mu, \pi)$. Table~\ref{tab:mdpnotation} of the Supplementary Material summarizes all MDP notation.

The value function of $\pi$ is defined as $V^\pi(s):= \EE[\sum_{t=1}^\infty \gamma^{t-1} r_t \mid s_1 = s; \pi]$. Similarly, one can define $Q^\pi(s,a)$  by conditioning on the first action. The advantage function can then be defined as their difference (i.e. $A^\pi(s,a):= Q^\pi(s,a) - V^\pi(s)$). Generally, one can define the advantage/value of one policy with respect to another using $A^\pi(s,\pi'):= \EE[Q^\pi(s,a) - V^\pi(s) \mid a\sim \pi'(\cdot|s)]$ and $Q^\pi(s,\pi'):= \EE[Q^\pi(s,a) \mid a\sim \pi'(\cdot|s)]$.
Finally, the payoff of a policy $\eta_\pi := \EE[V^\pi(s); s\sim\mu]$ is the average value over the initial states distribution of the MDP. 

\paragraph{Probabilistic and mathematical notations.} Sometimes we refer to $\int f(x)g(x)dx$ integrals as $\langle f, g\rangle_x$ Hilbert space inner products. Assuming that $\zeta$ and $\nu$ are two probabilistic densities, the Kulback-Liebler (KL) divergence is $\KL(\zeta|\nu):=\langle \zeta(x), \log(\frac{\zeta(x)}{\nu(x)})\rangle_x$, the Total-Variation (TV) distance is $\TV(\zeta,\nu)=:\frac{1}{2}\langle |\zeta(x)-\nu(x)|, 1\rangle_x$, and the Wasserstein distance is $W(\zeta, \nu)=\inf_{\gamma \in \Gamma(\zeta, \nu)} \langle \|x-y\|, \gamma(x,y) \rangle_{x,y}$ where $\Gamma(\zeta, \nu)$ is the set of couplings for $\zeta$ and $\nu$. We define $\Lip(f(x,y);x):=\sup_{x} \|\nabla_x f(x,y)\|_2$ and assume the existence of $\Lip(Q^{\pi}(\bs, a);a)$ and $\| \Lip(\nabla_s Q^{\pi}(s, a); a) \|_2$ constants. Under this notation, the Rubinstein-Kantrovich (RK) duality states that the $|\ip{\zeta(x)-\nu(x), f(x)}_x| \leq  W(\zeta, \nu) \cdot \Lip(f;x)$ bound is tight for all $f$. For brevity, we may abuse the notation and denote $\sup_{s}W(\pi_1(\cdot|s), \pi_2(\cdot|s))$ with $W(\pi_1, \pi_2)$ (and similarly for other measures). For parameterized policies, we define $\nabla_{\pi} f(\pi):=\nabla_{\theta} f(\pi)$ where $\pi$ is parameterized by the vector $\theta$. Table~\ref{tab:mathnotation} of the Supplementary Material summarizes all these mathematical definitions.

\paragraph{Policy gradient preliminaries.}
The advantage decomposition lemma provides insight into the relationship between payoff improvements and advantages~\citep{kakade2002approximately}. That is,
\begin{equation}
\eta_{\pi_2} - \eta_{\pi_1} = \frac{1}{1-\gamma}\cdot\EE_{s\sim \rho_\mu^{\pi_2}}[A^{\pi_1}(s, \pi_2)].
\end{equation}

We will denote the current and the candidate next policy as $\pi_1$ and $\pi_2$, respectively. Taking derivatives of both sides with respect to $\pi_2$ at $\pi_1$ yields
\begin{equation}
\nabla_{\pi_2} \eta_{\pi_2} = \frac{1}{1-\gamma}\bigg[\langle \nabla_{\pi_2}\rho_\mu^{\pi_2}(\cdot), A^{\pi_1}(\cdot, \pi_1)\rangle + \langle \rho_\mu^{\pi_1}(\cdot), \nabla_{\pi_2} A^{\pi_1}(\cdot, \pi_2)\rangle \bigg].
\end{equation}
Since $\pi_1$ does not have any advantage over itself (i.e., $A^{\pi_1}(\cdot, \pi_1)=0$), the first term is zero. Thus, the Policy Gradient (PG) theorem is derived as
\begin{equation}
\nabla_{\pi_2} \eta_{\pi_2}\Big|_{\pi_2=\pi_1} = \frac{1}{1-\gamma}\cdot\EE_{s\sim \rho_\mu^{\pi_1}}[\nabla_{\pi_2} A^{\pi_1}(s, \pi_2)]\Big|_{\pi_2=\pi_1}.
\end{equation}
For policy iteration with function approximation, we assume $\pi_2$ and $\pi_1$ to be parameterized by $\theta_2$ and $\theta_1$, respectively. One can view the PG theorem as a Taylor expansion of the payoff at $\theta_1$.

A brief introduction of the Conservative Policy Iteration (CPI) \citep{kakade2002approximately}, the Trust Region Policy Optimization (TRPO) \citep{schulman2015trust}, the Proximal Policy Optimization (PPO) \citep{schulman2015high}, the Deep Deterministic Policy Gradient (DDPG) \citep{lillicrap2015continuous}, and the Twin-Delayed Deterministic Policy Gradient (TD3) \citep{fujimoto2018addressing}  policy gradient methods is left to the Supplementary Material. Whether using deterministic policy gradients (e.g., DDPG and TD3) or stochastic policy gradients (e.g., TRPO and PPO), all these methods still perform stochastic search by adding stochastic noise to the deterministic policies to force exploration.

\paragraph{Reinforcement learning challenges.} Non-local rewards and soft horizon scalability are two major challenges in reinforcement learning. Due to space constraints, we leave the discussion of these challenges with simple and intuitive numerical examples to the Supplementary Material. The rest of the paper assumes the reader's familiarity with these concepts.

\section{Monotonic Policy Improvement Guarantee}

We use the Wasserstein metric because it allows the effective measurement of distances between probability distributions or functions with non-overlapping support, such as deterministic policies, unlike the KL divergence or TV distance which are either unbounded or maximal in this case. The physical transition model's smoothness enables the use of the Wasserstein distance to regularize deterministic policies. Therefore, we make the following two assumptions about the transition model:
\begin{align}
W(\PP(\mu, \pi_1), \PP(\mu, \pi_2))&\leq L_\pi \cdot W(\pi_1, \pi_2) \label{eq:piassumption}, \\
W(\PP(\mu_1, \pi), \PP(\mu_2, \pi))&\leq L_\mu \cdot W(\mu_1, \mu_2) \label{eq:sassumption}.
\end{align}

Also, we make the dynamics stability assumption $\sup \sum_{k=1}^t \hat{L}_{\mu, \pi_1, \pi_2}^{(k-1)} \prod_{i=k+1}^{t-1} \tilde{L}_{\mu, \pi_1, \pi_2}^{(i)} < \infty$, with the definitions of the new constants and further discussion of the implications deferred to the Supplementary Material where we also discuss Assumptions~\ref{eq:piassumption} and~\ref{eq:sassumption} and the existence of other Lipschitz constants which appear as coefficients in the final lower bound.

The advantage decomposition lemma can be rewritten as
\begin{equation}
\eta_{\pi_2} = \eta_{\pi_1} + \frac{1}{1-\gamma}\cdot \EE_{s\sim \rho_\mu^{\pi_1}}[A^{\pi_1}(s, \pi_2)] + \frac{1}{1-\gamma}\cdot \ip{\rho^{\pi_2}_{\mu} - \rho^{\pi_1}_{\mu}, A^{\pi_1}(\cdot, \pi_2)}_s.
\label{eq:advdecompmainpaper}
\end{equation}
The $\ip{\rho^{\pi_2}_{\mu} - \rho^{\pi_1}_{\mu}, A^{\pi_1}(\cdot, \pi_2)}$ term has zero gradient at $\pi_2=\pi_1$, which qualifies it to be crudely called ``the second-order term''. We dedicate a full section of our Supplementary Material to the theoretical derivations and proofs necessary to lower-bound this second-order term into an objective well-suited form for practical optimization. Next, we present the theoretical bottom line and the final bound:
\begin{align}
\label{eq:wtrpompibound}
\mathcal{L}_{\pi_1}^{\sup}(\pi_2) &= \eta_{\pi_1} +\frac{1}{1-\gamma} \mathbb{E}_{s\sim \rho^{\pi_1}_\mu}[A^{\pi_1}(s,\pi_2)] - C_2 \cdot \sup_{s} \bigg[ W(\pi_2(a|s), \pi_1(a|s))^2  \bigg] \nonumber\\
\qquad - &C_1 \cdot \sup_{s} \bigg[ \bigg\| \nabla_{s'} W\bigg(\frac{\pi_2(a|s') + \pi_1(a|s)}{2}, \frac{\pi_2(a|s) + \pi_1(a|s')}{2}\bigg)\bigg|_{s'=s} \bigg\|_2^2 \bigg].
\end{align}


For brevity, we denote the $\big\| \nabla_{s'} W(\cdots)\big|_{s'=s} \big\|_2^2$ expression as $\mathcal{L}_{G^2}(\pi_1, \pi_2; s)$ in the rest of the paper. We have $\eta_{\pi_2} \geq \mathcal{L}_{\pi_1}^{\sup}(\pi_2)$ and $\mathcal{L}_{\pi_1}^{\sup}(\pi_1) = \eta_{\pi_1}$. This facilitates the application of Theorem~\ref{thm:nondecreasepayoff} as an instance of Minorization-Maximization algorithms \citep{hunter2004tutorial}.

\begin{theorem}
	\label{thm:nondecreasepayoff}
	Successive maximization of $\mathcal{L}^{\sup}$ yields non-decreasing policy payoffs.
\end{theorem}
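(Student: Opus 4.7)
The plan is to apply the standard Minorization-Maximization (MM) template \citep{hunter2004tutorial}, since the two key properties needed are already stated in the paragraph immediately preceding the theorem: the global lower bound $\eta_{\pi_2} \geq \mathcal{L}_{\pi_1}^{\sup}(\pi_2)$ for every $\pi_2$ (established via the chain of arguments culminating in \eqref{eq:wtrpompibound}) and the tightness identity $\mathcal{L}_{\pi_1}^{\sup}(\pi_1) = \eta_{\pi_1}$. With these two ingredients in hand, the theorem is essentially a one-line consequence.

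Let $(\pi_k)_{k \ge 0}$ denote the sequence generated by the update rule $\pi_{k+1} \in \arg\max_{\pi} \mathcal{L}_{\pi_k}^{\sup}(\pi)$. I would then chain three inequalities:
\begin{equation*}
\eta_{\pi_{k+1}} \;\geq\; \mathcal{L}_{\pi_k}^{\sup}(\pi_{k+1}) \;\geq\; \mathcal{L}_{\pi_k}^{\sup}(\pi_k) \;=\; \eta_{\pi_k},
\end{equation*}
where the first inequality is the minorization property applied at $\pi_2 = \pi_{k+1}$, the middle inequality uses that $\pi_{k+1}$ maximizes $\mathcal{L}_{\pi_k}^{\sup}(\cdot)$ so its value is at least that attained at the feasible point $\pi_k$, and the final equality is the tightness of the surrogate at the current iterate. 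This directly yields $\eta_{\pi_{k+1}} \geq \eta_{\pi_k}$ for every $k$, which is the claimed non-decreasing payoff.

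The only nontrivial points worth checking are (i) that $\pi_k$ is admissible in the maximization defining $\pi_{k+1}$ (which it is, since $\mathcal{L}_{\pi_k}^{\sup}$ is evaluated over the same policy class that $\pi_k$ belongs to), and (ii) that the two penalty terms in \eqref{eq:wtrpompibound} vanish when $\pi_2 = \pi_1$, justifying the tightness equality $\mathcal{L}_{\pi_1}^{\sup}(\pi_1) = \eta_{\pi_1}$; this follows because $W(\pi_1(\cdot|s), \pi_1(\cdot|s)) = 0$ and the symmetric Wasserstein argument inside $\mathcal{L}_{G^2}$ collapses to $W$ of identical distributions, whose gradient in $s'$ is zero. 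Hence neither step is a real obstacle.

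The genuine intellectual work has already been spent in deriving the surrogate bound \eqref{eq:wtrpompibound} itself (the material the paper defers to the Supplementary Material); once that bound is available, the present theorem is a routine corollary of the MM principle and requires no further estimates. Consequently, my proof proposal is simply to state the MM chain above, cite the minorization and tightness facts from the preceding paragraph, and note that (as is standard for MM schemes) the argument requires no assumption about global optimality of $\pi_{k+1}$ beyond improvement over $\pi_k$, so any update satisfying $\mathcal{L}_{\pi_k}^{\sup}(\pi_{k+1}) \geq \mathcal{L}_{\pi_k}^{\sup}(\pi_k)$ already suffices to guarantee monotone non-decrease of $\eta_{\pi_k}$.
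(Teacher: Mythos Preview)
Your proposal is correct and mirrors the paper's own proof almost verbatim: the paper also invokes the minorization $\eta_{\pi_2}\ge \mathcal{L}_{\pi_1}^{\sup}(\pi_2)$, the tightness $\mathcal{L}_{\pi_1}^{\sup}(\pi_1)=\eta_{\pi_1}$, and the maximizer inequality $\mathcal{L}_{\pi_1}^{\sup}(\pi_2)\ge \mathcal{L}_{\pi_1}^{\sup}(\pi_1)$ to conclude $\eta_{\pi_2}-\eta_{\pi_1}\ge 0$. Your additional remarks (why the penalty terms vanish at $\pi_2=\pi_1$, and that mere improvement rather than exact maximization suffices) are valid elaborations but do not change the approach.
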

\textit{Proof.}
	With $\pi_2=\argmax_{\pi} \mathcal{L}_{\pi_1}^{\sup}(\pi)$, we have $\mathcal{L}_{\pi_1}^{\sup}(\pi_2) \geq \mathcal{L}_{\pi_1}^{\sup}(\pi_1)$. Thus,
	\begin{equation}\eta_{\pi_2} \geq \mathcal{L}_{\pi_1}^{\sup}(\pi_2) \text{ and } \eta_{\pi_1} = \mathcal{L}_{\pi_1}^{\sup}(\pi_1) \Longrightarrow \eta_{\pi_2} - \eta_{\pi_1} \geq \mathcal{L}_{\pi_1}^{\sup}(\pi_2) - \mathcal{L}_{\pi_1}^{\sup}(\pi_1) \geq 0. \quad \qedsymbol{}\end{equation}

\begin{algorithm}[t]
	\caption{Truly Deterministic Policy Optimization (TDPO)}
	\begin{algorithmic}[1]
		\label{alg:wppoalg}
		\REQUIRE Initial policy $\pi_0$.
		\REQUIRE Advantage estimator and sample collector oracle $\mathbb{A}^{\pi}$.
		\FOR{$k = 1, 2, \ldots$}
		\STATE Collect trajectories and construct the advantage estimator oracle $\mathbb{A}^{\pi_k}$.
		\STATE Compute the policy gradient $g$ at $\theta_k$ :
		$\displaystyle g\leftarrow \nabla_{\theta'}\mathbb{A}^{\pi_k}(\pi') |_{\pi'=\pi_k}$
		\STATE Construct a surrogate Hessian vector product oracle $v \rightarrow H\cdot v$ such that for $\theta' = \theta_k + \delta\theta$,
		\begin{equation}
		\mathbb{E}_{s\sim \rho^{\pi_k}_\mu} \bigg[ W(\pi'(a|s), \pi_k(a|s))^2 \bigg] + \frac{C_1}{C_2} \mathbb{E}_{s\sim \rho^{\pi_k}_\mu} \bigg[ \mathcal{L}_{G^2}(\pi', \pi_k; s) \bigg] = \frac{1}{2} \delta\theta^T H \delta\theta + \text{h.o.t.},
		\end{equation}
		where $\text{h.o.t.}$ denotes higher order terms in $\delta\theta$.
		\STATE Find the optimal update direction $\delta\theta^* = H^{-1} g$ using the Conjugate Gradient algorithm.
		\STATE \textbf{(Basic Variant)} Determine the best step size $\alpha^*$ within the trust region:
		\begin{align}
		\alpha^* = &\argmax_{\alpha} g^T(\alpha \delta\theta^*) - \frac{C_2}{2} (\alpha \delta\theta^*)^T H (\alpha \delta\theta^*) \nonumber\\
		&\text{s.t.} \qquad \frac{1}{2} (\alpha^* \delta\theta^*)^T H (\alpha^* \delta\theta^*) \leq \delta_{\max}^2
		\end{align}
		\STATE \textbf{(Advanced Variant)} Determine the best step size $\alpha^*$ using a line-search procedure and pick the best one; each coefficient's performance can be evaluated by sampling from the environment.
		\STATE \textbf{(Advanced Variant)} Update the exploration scale in $\mathbb{A}^{\pi}$ using the collected samples.
		\STATE Update the policy parameters: $\theta_{k+1} \leftarrow \theta_k + \alpha^* \delta\theta^*$.
		\ENDFOR
	\end{algorithmic}
\end{algorithm}


Successive optimization of $\mathcal{L}_{\pi_1}^{\sup}(\pi_2)$ generates non-decreasing payoffs. However, it is impractical due to the large number of constraints and statistical estimation of maximums. To mitigate this, we take a similar approach to TRPO and replace the maximums with expectations over the observations.


The coefficients $C_1$ and $C_2$ are dynamics-dependent. In the \textit{basic variant} of our method, we used constant coefficients and a trust region. This yields the Truly Deterministic Policy Optimization (TDPO) as given in Algorithm~\ref{alg:wppoalg}. See the Supplementary Material for practical notes on the manual choice of $C_1$ and $C_2$. Alternatively, one could adopt processes similar to \citet{schulman2015trust} where the IS-based advantage estimator used a line search for proper step size selection, or the adaptive penalty coefficient setting in \citet{schulman2017proximal}. In the \textit{advanced variant} of our method, we implement such a line search procedure by collecting samples from the environment and picking the coefficient yielding the best improvement. This comes at an increase in the sample complexity, but the benefits can outweigh the added costs. Furthermore, the exploration scale in the sampling oracle is adaptively tuned using the collected payoffs in the advanced variant; by constructing an importance sampling derivative estimate for the exploration scale parameter, stochastic gradient descent can be used to tune the exploration scale individually.

\begin{algorithm}[t]
	\caption{Deterministic Vine (DeVine) Policy Advantage Estimator}
	\begin{algorithmic}[1]
		\label{alg:vineAdv}
		\REQUIRE The number of parallel workers $K$
		\REQUIRE A policy $\pi$, an exploration policy $q$, discrete time-step distribution $\nu(t)$, initial state distribution $\mu(s)$, and the discount factor $\gamma$.
		\STATE Sample an initial state $s_0$ from $\mu$, and then roll out a trajectory $\tau=(s_0,a_0,s_1,a_1, \cdots)$ using $\pi$.
		\FOR{$k = 1, 2, \cdots, K$}
		\STATE Sample the integer number $t=t_k$ from $\nu$.
		\STATE Compute the value $V^{\pi_1}(s_t) = \sum_{i=t}^{\infty} \gamma^{t-i} R(s_i, a_i)$.
		\STATE Reset the initial state to $s_t$, sample the first action $a'_t$ according to $q(\cdot|s_t)$, and use $\pi$ for the rest of the trajectory. This will create $\tau'=(s_t,a_t',s_{t+1}',a_{t+1}', \cdots)$.
		\STATE Compute the value $Q^{\pi_1}(s_t, a_t') = \sum_{i=t}^{\infty} \gamma^{t-i} R(s_i', a_i')$.
		\STATE Compute the advantage $A^{\pi_1}(s_t, a_t') = Q^\pi(s_t, a_t') - V^\pi(s_t)$.
		\ENDFOR
		\STATE Define $\displaystyle \mathbb{A}^{\pi_1}(\pi_2):=\frac{1}{K}\sum_{k=1}^K \frac{\dim(\mathcal{A})\cdot\gamma^{t_k}}{\nu(t_k)} \cdot \frac{(\pi_2(s)-a_{t_k})^T(a_{t_k}'-a_{t_k})}{(a_{t_k}'-a_{t_k})^T(a_{t_k}'-a_{t_k})} \cdot A^{\pi_1}(s_{t_k}, a_{t_k}').$
		\STATE Return $\mathbb{A}^{\pi_1}(\pi_2)$ and $\nabla_{\pi_2} \mathbb{A}^{\pi_1}(\pi_2)$ as unbiased estimators for $E_{s\sim \rho_\mu^{\pi_1}}[A^{\pi_1}(s, \pi_2)]$ and the PG.
	\end{algorithmic}
\end{algorithm}

\subsection{On the Interpretation of the Surrogate Function}

For deterministic policies, the squared Wasserstein distance $W(\pi_2(a|s), \pi_1(a|s))^2$ degenerates to the Euclidean distance over the action space. Any policy defines a sensitivity matrix at a given state $s$, which is the Jacobian matrix of the policy output with respect to $s$. The policy sensitivity term $\mathcal{L}_{G^2}(\pi_1, \pi_2; s)$ is essentially the squared Euclidean distance over the action-to-observation Jacobian matrix elements. In other words, our surrogate prefers to step in directions where the action-to-observation sensitivity is preserved within updates. 

Although our surrogate uses a metric distance instead of the traditional non-metric measures for regularization, we do not consider this sole replacement a major contribution. The squared Wasserestein distance and the KL divergence of two identically-scaled Gaussian distributions are the same up to a constant (i.e., $\KL(\mathcal{N}(m_1, \sigma)\| \mathcal{N}(m_2, \sigma)) = W(\mathcal{N}(m_1, \sigma), \mathcal{N}(m_2, \sigma))^2 / 2\sigma^2 $). On the other hand, our surrogate's compatibility with deterministic policies makes it a valuable asset for our policy gradient algorithm; both $W(\pi_2(a|s), \pi_1(a|s))^2$ and $\mathcal{L}_{G^2}(\pi_1, \pi_2; s)$ can be evaluated for two deterministic policies $\pi_1$ and $\pi_2$ numerically without any approximations to overcome singularities.

\section{Model-Free Estimation of Policy Gradient}
The DeVine advantage estimator is formally defined in Algorithm~\ref{alg:vineAdv}. Unlike DDPG and TD3, the DeVine estimator allows our method to perform \emph{deterministic search} by not consistently injecting noise in actions for exploration. Under deterministic dynamics and policies, if DeVine samples each dimension at each time-step exactly once then in the limit of small exploration scale $\sigma$ it can produce exact advantages, as stated in Theorem~\ref{thm:accuratevinepg}, whose proof is deferred to the Supplementary Material.


\begin{figure}[t]
  \includegraphics[width=\linewidth]{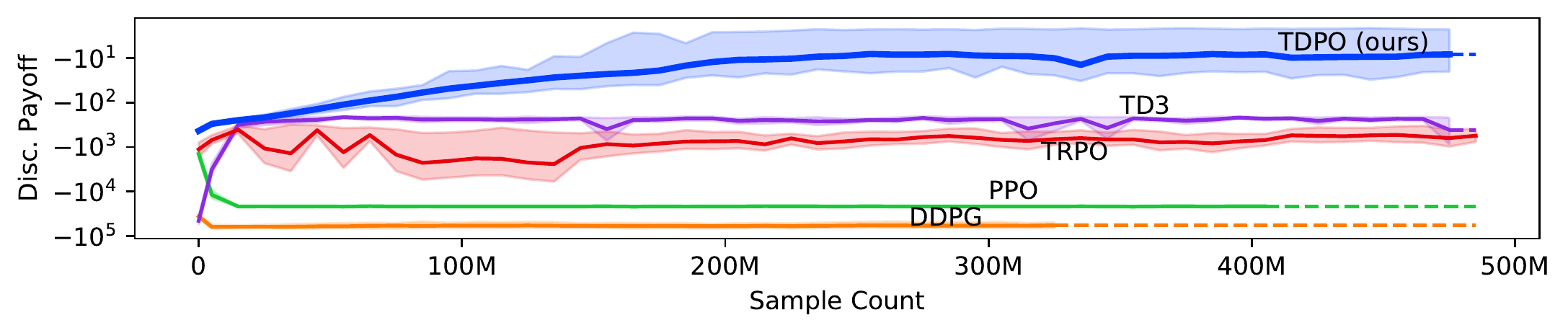}
  \includegraphics[width=\linewidth]{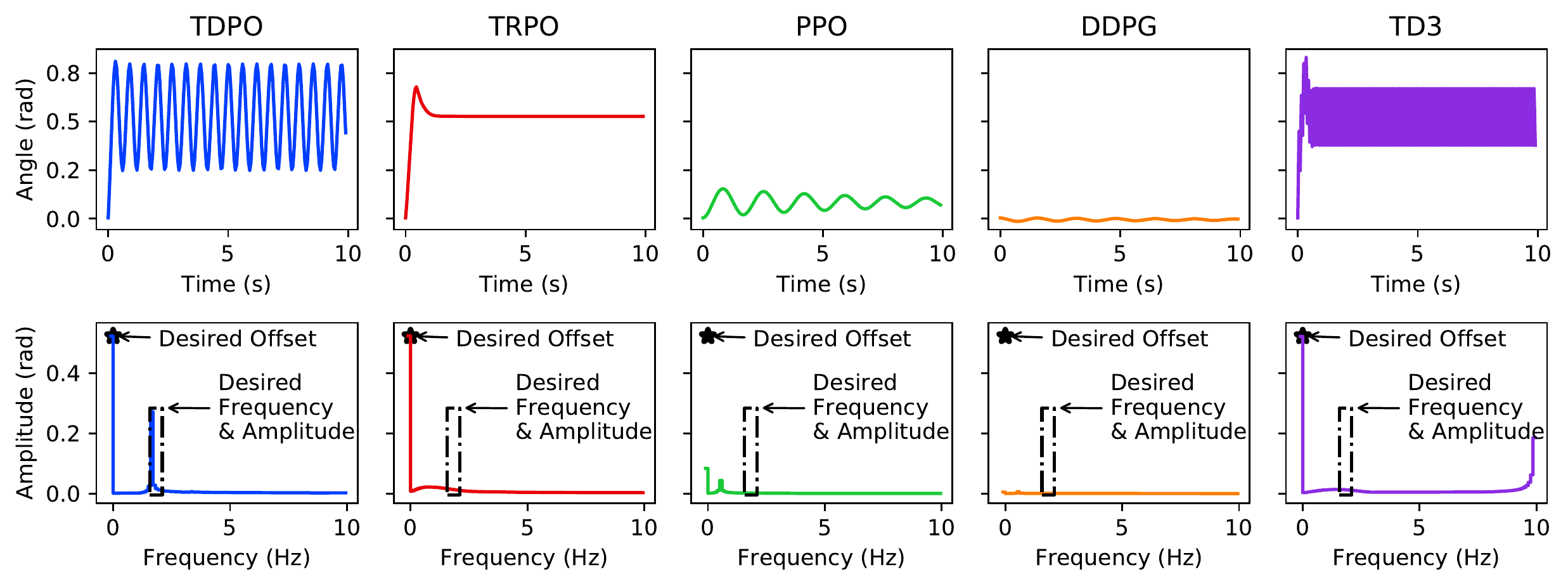}
  \caption{Results for the simple pendulum with non-local rewards. Upper panel: training curves with empirical discounted payoffs. Lower panels: trajectories in both the time domain and frequency domain, showing target values of oscillation frequency, amplitude, and offset. The basic variant of our method (non-adaptive exploration scales and update coefficients) was used in this experiment.}
	\label{fig:nlr}
\end{figure}

\begin{theorem}
	\label{thm:accuratevinepg}
	Assume a finite horizon MDP with both deterministic transition dynamics $P$ and initial distribution $\mu$, with maximal horizon length of $H$. Define $K=H\cdot \dim(\mathcal{A})$, a uniform $\nu$, and $q(s;\sigma)=\pi_1(s) + \sigma \mathbf{e}_j$ in Algorithm~\ref{alg:vineAdv} with $\mathbf{e}_j$ being the $j^{th}$ basis element for $\mathcal{A}$. If the $(j,t_k)$ pairs are sampled to exactly cover $\{1,\cdots,\dim(\mathcal{A})\} \times \{1,\cdots,H\}$, then we have
	\begin{equation}
	\lim_{\sigma\rightarrow 0}\nabla_{\pi_2} \mathbb{A}^{\pi_1}(\pi_2)\big|_{\pi_2=\pi_1} = \nabla_{\pi_2} \eta_{\pi_2}\big|_{\pi_2=\pi_1} .
	\end{equation}
\end{theorem}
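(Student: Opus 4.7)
The plan is to simplify the Vine estimator under the deterministic setting step by step, take the $\sigma\to 0$ limit of the gradient, and then recognize the result as the policy gradient under the specialized visitation frequency that arises with deterministic dynamics. The key identity driving everything is that $A^{\pi_1}(s,\pi_1(s))=0$, which turns the normalized finite-difference ratio into an exact directional derivative in the limit.

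First I would substitute the specific choices into Algorithm~\ref{alg:vineAdv}. Because both $\mu$ and $P$ are deterministic and the rollout uses $\pi_1$, the trajectory $\tau$ is a deterministic sequence $(s_t)_{t}$ with $a_t = \pi_1(s_t)$. Under $q(\cdot;\sigma) = \pi_1(s) + \sigma \mathbf{e}_{j}$, the perturbation $a'_{t_k} - a_{t_k}$ equals $\sigma \mathbf{e}_{j_k}$, so the normalized projection reduces to
\begin{equation}
\frac{(\pi_2(s_{t_k}) - a_{t_k})^T(a'_{t_k}-a_{t_k})}{(a'_{t_k}-a_{t_k})^T(a'_{t_k}-a_{t_k})} \;=\; \frac{(\pi_2(s_{t_k}) - \pi_1(s_{t_k}))_{j_k}}{\sigma}.
\end{equation}
With uniform $\nu(t)=1/H$ and $K=H\cdot\dim(\mathcal{A})$, the prefactor $\dim(\mathcal{A})\gamma^{t_k}/(K\nu(t_k))$ collapses to $\gamma^{t_k}$. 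Since the $(j,t)$ pairs are hit exactly once, the estimator becomes a double sum
\begin{equation}
\mathbb{A}^{\pi_1}(\pi_2) \;=\; \sum_{t}\sum_{j=1}^{\dim(\mathcal{A})} \gamma^{t}\cdot \frac{(\pi_2(s_t) - \pi_1(s_t))_j}{\sigma}\cdot A^{\pi_1}\!\bigl(s_t,\,\pi_1(s_t)+\sigma \mathbf{e}_j\bigr).
\end{equation}

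Next I would differentiate with respect to the parameters $\theta_2$ of $\pi_2$ and evaluate at $\pi_2=\pi_1$. The advantage factor does not depend on $\theta_2$, so only the linear-in-$\pi_2$ factor contributes, giving
\begin{equation}
\nabla_{\pi_2}\mathbb{A}^{\pi_1}(\pi_2)\Big|_{\pi_2=\pi_1} \;=\; \sum_{t}\gamma^{t}\sum_{j} \frac{A^{\pi_1}(s_t,\pi_1(s_t)+\sigma \mathbf{e}_j)}{\sigma}\cdot \nabla_{\theta}(\pi_{\theta}(s_t))_j\Big|_{\theta=\theta_1}.
\end{equation}
Now I would send $\sigma \to 0$. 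Because $A^{\pi_1}(s_t,\pi_1(s_t))=0$, each scalar quotient is a symmetric-in-$\sigma$ first-order finite difference of a function vanishing at the base point, so its limit is exactly $\partial_{a_j}A^{\pi_1}(s_t,a)|_{a=\pi_1(s_t)}$. Summing over $j$ and applying the chain rule collapses the inner sum into $\nabla_{\theta}A^{\pi_1}(s_t,\pi_{\theta}(s_t))|_{\theta=\theta_1}$.

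Finally I would reconcile the result with the policy gradient theorem~(3). Under deterministic $\mu$ and $P$ the visitation frequency degenerates to $\rho_\mu^{\pi_1} = (1-\gamma)\sum_{t}\gamma^{t}\delta_{s_t}$, so the $1/(1-\gamma)$ factor in the PG theorem cancels and the expectation reduces to $\sum_{t}\gamma^{t}\nabla_{\pi_2}A^{\pi_1}(s_t,\pi_2)|_{\pi_2=\pi_1}$, matching the limit computed above. The main obstacle is a purely technical one: justifying the interchange of $\nabla_{\pi_2}$ and $\lim_{\sigma\to 0}$, which requires $A^{\pi_1}(s,\cdot)$ to be $C^1$ near $\pi_1(s)$; this is implicit from the Lipschitz assumptions on $Q^{\pi_1}$ stated in the preliminaries and the differentiability of $\pi_\theta$ in $\theta$. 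The remaining bookkeeping (indexing conventions on $t$, and the finite-horizon truncation of $\rho_\mu^{\pi_1}$) is routine.
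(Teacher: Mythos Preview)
Your proposal is correct and follows essentially the same approach as the paper's proof: simplify the DeVine estimator under the deterministic assumptions so that the prefactors collapse to $\gamma^{t}$, use $A^{\pi_1}(s_t,\pi_1(s_t))=0$ to turn the $\sigma^{-1}$ quotient into the partial derivative $\partial_{a_j}A^{\pi_1}$, apply the chain rule, and match against the policy gradient theorem specialized to the degenerate visitation frequency $\rho_\mu^{\pi_1}=(1-\gamma)\sum_t\gamma^t\delta_{s_t}$. Two minor remarks: the finite difference here is one-sided, not ``symmetric-in-$\sigma$'', and since the theorem already takes $\nabla_{\pi_2}$ before $\lim_{\sigma\to0}$ there is no interchange to justify---only the differentiability of $A^{\pi_1}(s,\cdot)$ at $\pi_1(s)$ is needed, which the paper likewise leaves implicit.
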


Theorem~\ref{thm:accuratevinepg} provides a guarantee for recovering the exact policy gradient if the initial state distribution was deterministic and all time-steps of the trajectory were used to branch vine trajectories. Although this theorem sets the stage for computing a fully deterministic gradient, stochastic approximation can be used in Algorithm~\ref{alg:vineAdv} by randomly sampling a small set of states for advantage estimation. In other words, Theorem~\ref{thm:accuratevinepg} would use $\nu$ to deterministically sample all trajectory states, whereas this is not a practical requirement for Algorithm~\ref{alg:vineAdv} and the gradients are still unbiased if a random set of vine branches is used.

The DeVine estimator can be advantageous in at least two scenarios. First, in the case of rewards that cannot be decomposed into summations of immediate rewards. For example, overshoot penalizations or frequency-based rewards as used in robotic systems are non-local. DeVine can be robust to non-local rewards as it is insensitive to whether the rewards were applied immediately or after a long period. Second, DeVine can be an appropriate choice for systems that are sensitive to the injection of noise, such as high-bandwidth robots with natural resonant frequencies. In such cases, using white (or colored) noise for exploration can excite these resonant frequencies and cause instability, making learning difficult. DeVine avoids the need for constant noise injection.


\begin{figure}[t]
	\includegraphics[width=0.98\linewidth]{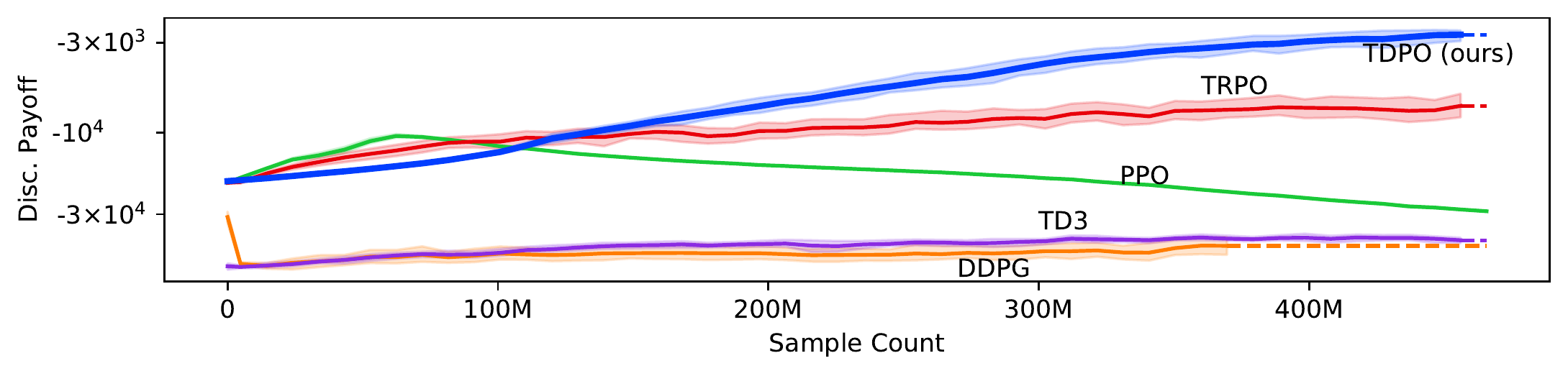}
	\includegraphics[width=0.98\linewidth]{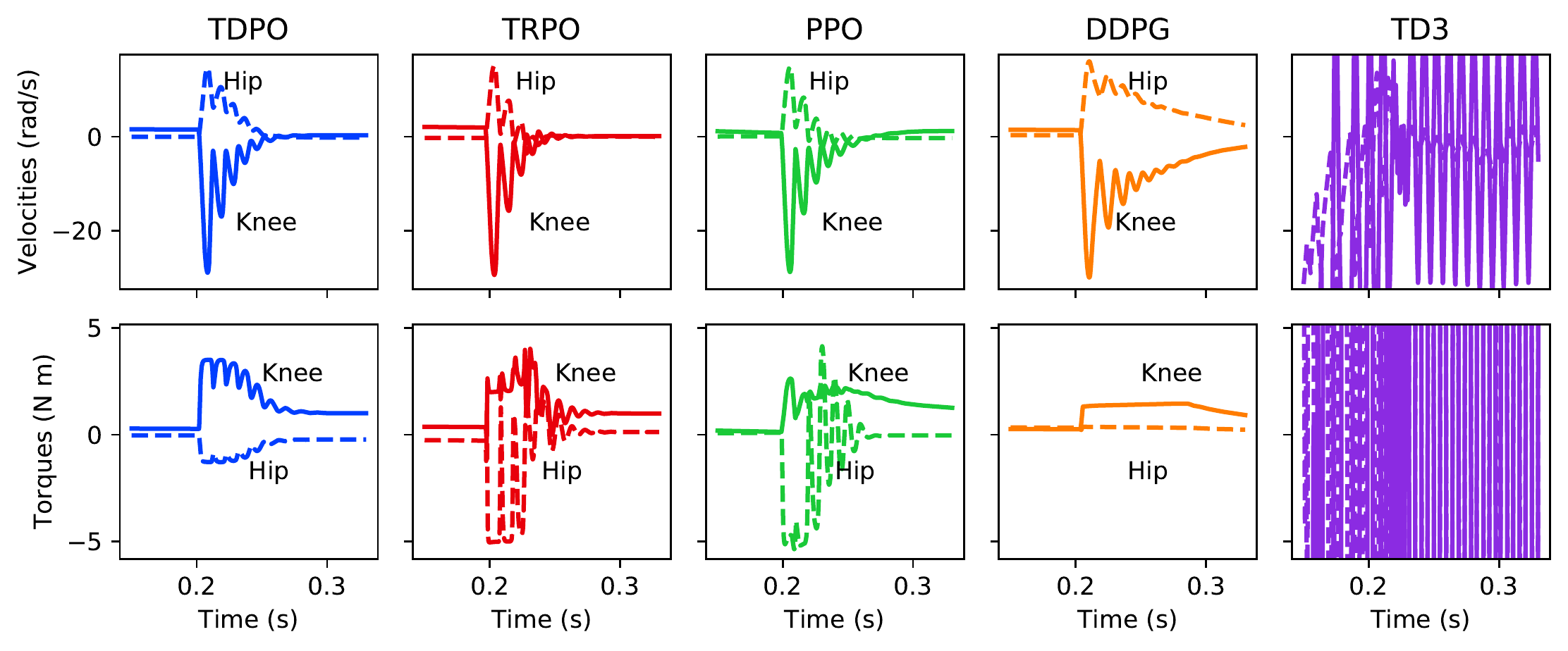}
	\caption{Results for the leg environment with a long horizon and resonant frequencies due to ground compliance.  Upper panel: training curves with empirical discounted payoffs. Lower panel: partial trajectories, restricted to times shortly before and after impact with the ground. Note the oscillations at about $100\;\text{Hz}$ that appear just after the impact at 0.2\ s---these oscillations are evidence of a resonant frequency. The basic variant of our method (non-adaptive exploration scales and update coefficients) was used in this experiment.}
	\label{fig:leg}
	\vspace{-0mm}
\end{figure}

\section{Experiments}
The next three subsections show challenging robotic control tasks including frequency-based non-local rewards, long horizons, and sensitivity to resonant frequencies. In Sections~\ref{sec:nlrsec} and~\ref{sec:legsimple}, we use the basic variant of our method (i.e., fixed exploration scale and update coefficient hyper-parameters throughout the training). This will facilitate a better understanding of our core method's capabilities without any additional tweaks. See the Supplementary Material for a comparison on traditional gym environments, where the basic variant of TDPO works similarly to existing methods. Section~\ref{sec:legsto} includes the most difficult setting in our paper, where we use the advanced variant of our method (i.e., with line-search the update coefficient and adaptive exploration scales).

\subsection{An Environment with Non-Local Rewards \footnote{Non-local rewards are reward functions of the entire trajectory whose payoffs cannot be decomposed into the sum of terms such as $\eta=\sum_{t} f_t(s_t,a_t)$, where functions $f_t$ only depend on nearby states and actions. An example non-local reward is one that depends on the Fourier transform of the complete trajectory signal.}}\label{sec:nlrsec}

The first environment that we consider is a simple pendulum.
The transition function is standard---the states are joint angle and joint velocity, and the action is joint torque.
The reward function is non-standard---rather than define a local reward in the time domain with the goal of making the pendulum stand upright (for example), we define a non-local reward in the frequency domain with the goal of making the pendulum oscillate with a desired frequency and amplitude about a desired offset.
In particular, we compute this non-local reward by taking the Fourier transform of the joint angle signal over the entire trajectory and by penalizing differences between the resulting power spectrum and a desired power spectrum.
We apply this non-local reward at the last time-step of the trajectory. Implementation details and similar results for more pendulum variants are left to the Supplementary Material.

Figure \ref{fig:nlr} shows training curves for TDPO (our method) as compared to TRPO, PPO, DDPG, and TD3. These results were averaged over 25 experiments in which the desired oscillation frequency was $1.7\;\text{Hz}$ (different from the pendulum's natural frequency of $0.5\;\text{Hz}$), the desired oscillation amplitude was $0.28\;\text{rad}$, and the desired offset was $0.52\;\text{rad}$. Figure \ref{fig:nlr} also shows trajectories obtained by the best agents from each method. TDPO (our method) was able to learn high-reward behavior and to achieve the desired frequency, amplitude, and offset. TRPO was able to learn the correct offset but did not produce any oscillatory behavior. TD3 also learned the correct offset, but could not produce desirable oscillations. PPO and DDPG failed to learn any desired behavior.



\subsection{An Environment with Long Horizon and Resonant Frequencies\footnote{Resonant frequencies are a concept from control theory. In the frequency domain, signals of certain frequencies are excited more than others when applied to a system. This is captured by the frequency-domain transfer function of the system, which may have a peak of magnitude greater than one. The resonant frequency is the frequency at which the frequency-domain transfer function has the highest amplitude. Common examples of systems with a resonant frequency include the undamped pendulum, which oscillates at its natural frequency, and RLC circuits which have characteristic frequencies at which they are most excitable. See Chapter 8 of \citet{golnaraghi2002auto} for more information.}}\label{sec:legsimple}

\begin{figure}[t]
	\includegraphics[width=0.33\linewidth]{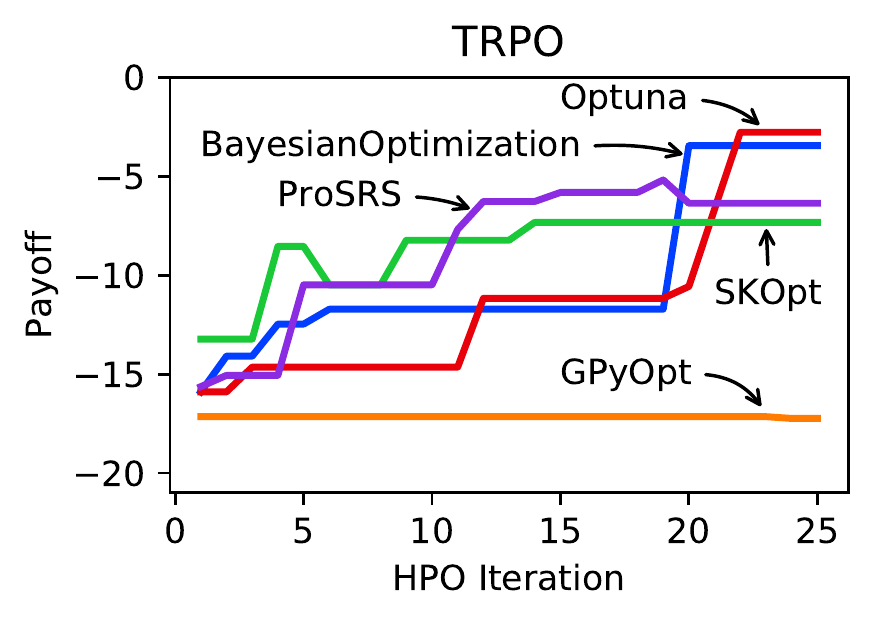}
	\includegraphics[width=0.33\linewidth]{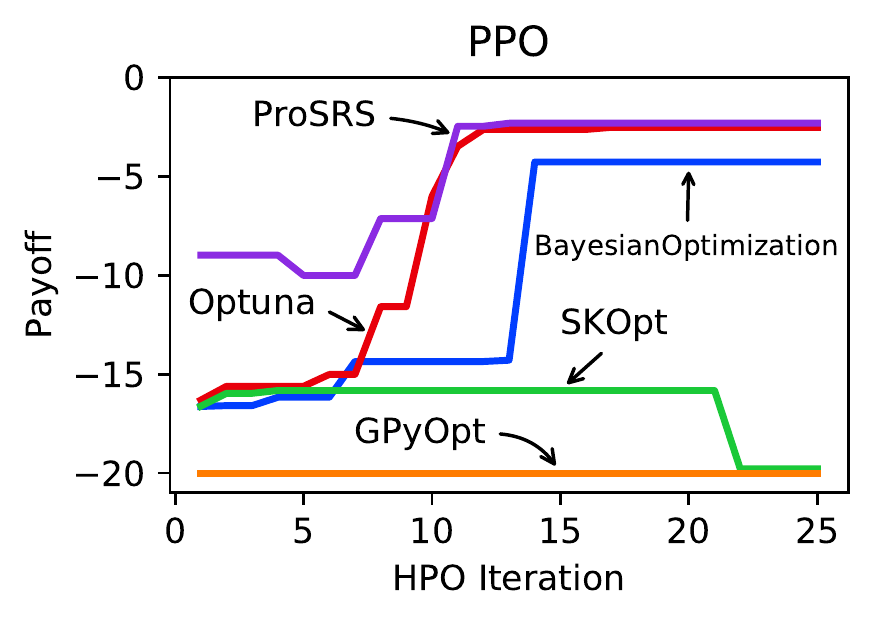}
	\includegraphics[width=0.33\linewidth]{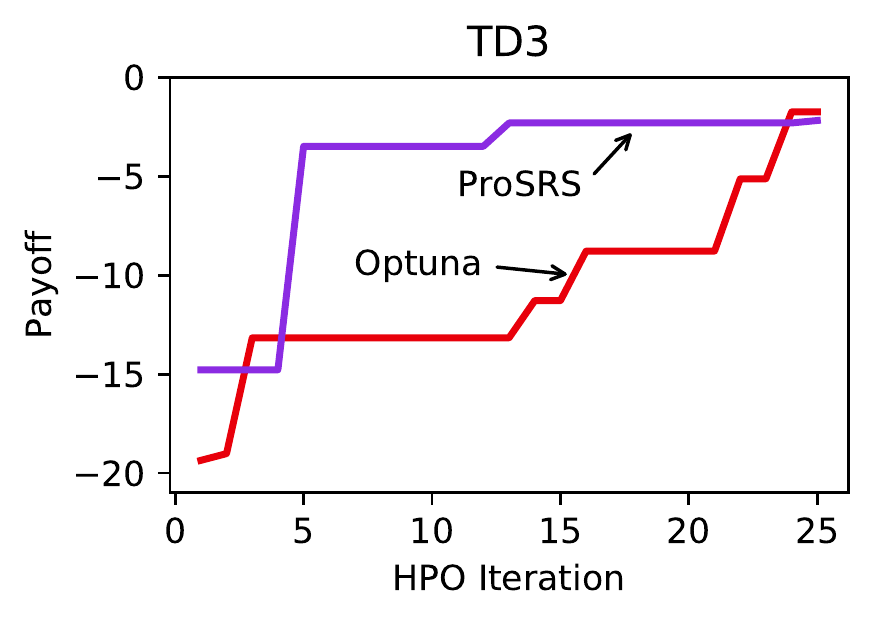}
	\caption{The best payoff vs. the Hyper-Parameter Optimization (HPO) iteration on a short-horizon variant of the legged robotic environment. The HPOs are performed for each of the TRPO, PPO, and TD3 methods in a separate panel. DDPG is a special case of TD3 with HPO. Since TD3 was considerably more expensive, we only show Optuna and ProSRS for it.}
	\label{fig:hpopicking}
\end{figure}
\begin{figure}[t]
	\includegraphics[width=0.98\linewidth]{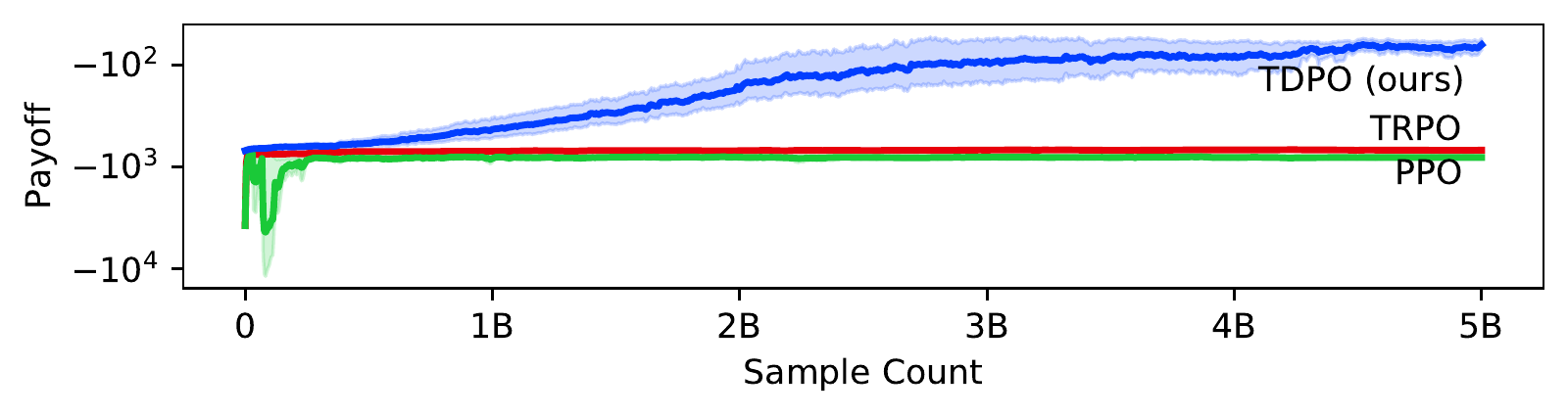}
	\caption{Post Hyper-Parameter Optimization (HPO) training curves with the best settings found for TRPO and PPO compared to the advanced variant of our method (TDPO with adaptive exploration scales and line search). TD3 had a significantly poor performance in the initial parameter sweeps. Due to resource limitations and poor initial performance, we excluded TD3 from this experiment.}
	\label{fig:legsto}
	\vspace{-0mm}
\end{figure}

The second environment that we consider is a single leg from a quadruped robot \citep{park2017high}. This leg has two joints, a ``hip'' and a ``knee,'' about which it is possible to exert torques. The hip is attached to a slider that confines motion to a vertical line above flat ground. We assume the leg is dropped from some height above the ground and the task is to recover from this drop and to stand upright at rest after impact. States given to the agent are the angle and velocity of each joint (slider position and velocity are hidden), and actions are the joint torques. The reward function penalizes difference from an upright posture, slipping or chattering at the contact between the foot and the ground, non-zero joint velocities, and steady-state joint torque deviations. We use the open-source MuJoCo software for simulation \citep{todorov2012mujoco}, with high-fidelity models of ground compliance, motor nonlinearity, and joint friction. The control loop rate is $4000\;\text{Hz}$ and the rollout length is $2\;\text{s}$, resulting in a horizon of 8000 steps. Implementation details are left to the Supplementary Material.

Figure~\ref{fig:leg} shows training curves for TDPO (our method) as compared to TRPO, PPO, DDPG and TD3. These results were averaged over 75 experiments. A discount factor of $\gamma=0.99975$ was chosen for all methods, where $(1-\gamma)^{-1}$ is half the trajectory length. Similarly, the GAE factors for PPO and TRPO were scaled up to 0.99875 and 0.9995, respectively, in proportion to the trajectory length. Figure \ref{fig:leg} also shows trajectories obtained by the best agents from each method. TDPO (our method) was able to learn high-reward behavior. TRPO, PPO, DDPG, and TD3 were not.

We hypothesize that the reason for this difference in performance is that TDPO better handles the combination of two challenges presented by the leg environment---an unusually long time horizon (8000 steps) and the existence of a resonant frequency that results from compliance between the foot and the ground (note the oscillations at a frequency of about $100\;\text{Hz}$ that appear in the trajectories after impact). Both high-speed control loops and resonance due to ground compliance are common features of real-world legged robots to which TDPO seems to be more resilient.

\subsection{Practical Training and Hardware Implementation}\label{sec:legsto}

For the most realistic setting, we take the environment from the previous section and make it highly stochastic by (a) injecting noise into the transition dynamics $P$, and (b) making the initial state distribution $\mu$ as random as physically possible. We also systematically perform Hyper-Parameter Optimization (HPO) on all methods to allow the most fair comparison.



\begin{wrapfigure}{r}{0.3\textwidth}
    \centering
    \vspace{-\intextsep}
    \hspace*{-.5\columnsep}\includegraphics[width=0.485\linewidth]{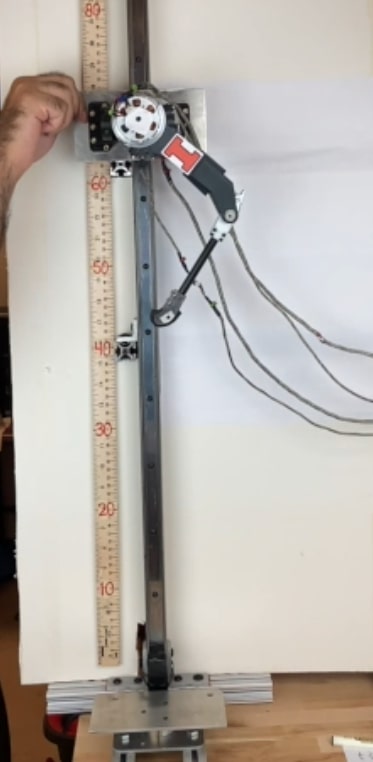} \includegraphics[width=0.48\linewidth]{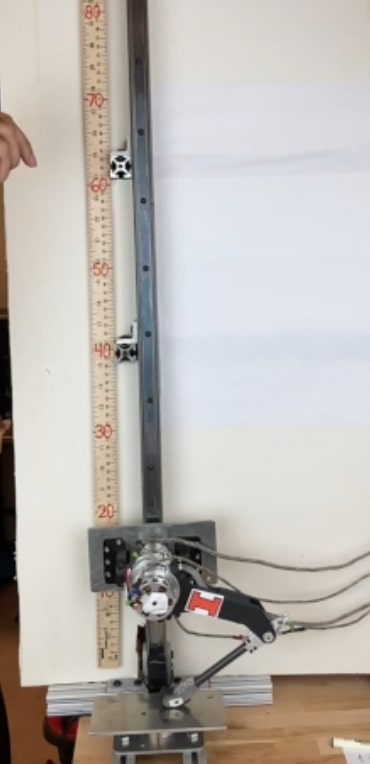}
\caption{\label{fig:sim2realleg} The simulation-to-real transfer of the best TDPO agent to perform a successful drop test at 4~kHz control rate.}
\end{wrapfigure}



The choice of the HPO method can have a significant impact on the RL agent's performance. We consider a list of five off-the-shelf HPO implementations and run them in their default settings: Optuna~\citep{optuna_2019}, BayesianOptimization~\citep{bayeopt}, Scikit-Optimize~\citep{tim_head_2018_1207017}, GPyOpt~\citep{gonzalez2016batch}, and ProSRS~\citep{shou2019tree}. These implementations include a range of HPO methods, including Gaussian processes and tree Parzen estimators. For better performance, HPO methods need a reasonable set of initial hyper-parameter guesses. For this, we perform a one-variable-at-a-time parameter sweep along every hyper-parameter near the RL method's default hyper-parameters. These parameter sweep results are then input to each HPO method for full optimization. Using all HPO algorithms for all RL methods in the long-horizon environment (where each full training run takes 5 billion samples) is computationally infeasible. To pick the best HPO method, we benchmark a short-horizon environment with only 200 time-steps in a trajectory. The result is shown in Figure~\ref{fig:hpopicking} (see the Supplementary Material for full details on the HPO methods). Overall, we found that Optuna and ProSRS are the best HPO methods on the test problem. Since Optuna is widely-tested and arguably the most popular HPO library, we pick it as the main HPO method for our long-horizon environment.

We repeat the same HPO procedure on the long-horizon environment using Optuna, and pick the best hyper-parameters found in the course of HPO for a final training. Figure~\ref{fig:legsto} shows this final training. TDPO shows superior performance in this highly stochastic environment, and such benefits cannot be obtained by merely performing HPO on other methods. To showcase the practicality of our method, we picked the best TDPO trained agent, and implemented it on the physical hardware. The transferred agent was able to successfully perform drop-and-catch tests on the robot system at 4~kHz, with both global control and suppression of high-frequency transients. Figure~\ref{fig:sim2realleg} shows a glimpse of this test, and a short video is also included in the code repository.

\section{Discussion}

We proposed a deterministic policy gradient method (TDPO: Truly Deterministic Policy Optimization) based on the use of a deterministic Vine (DeVine) gradient estimator and the Wasserstein metric. We proved monotonic payoff guarantees for our method, and defined a novel surrogate for policy optimization. We showed numerical evidence for superior performance with non-local rewards defined in the frequency domain and a realistic long-horizon resonant environment. This method enables applications of policy gradient to customize frequency response characteristics of agents. Our work assumed continuous environments, and future work should include the adaptation of our method to environments with discrete state and action spaces.

\bibliography{refs_tdpo}

\newpage
\appendix
\section{Appendix}

\subsection{Tables of Notation}
The same mathematical definitions and notations used in the paper were re-introduced and summarized in two tables; Table~\ref{tab:mathnotation} describes the mathematical functions and operators used throughout the paper, and Table~\ref{tab:mdpnotation} describes the notations needed to define the Markov Decision Process (MDP). The tables consist of two columns; one showing or defining the notation, and the other includes the name in which the same notation was called in the paper.

\renewcommand{\arraystretch}{1.6}
\begin{table}[h]
	\centering	
	\begin{tabular}{p{3.3cm}|p{9.2cm}}
		Name & Mathematical Definition or Description\\ \midrule\midrule\midrule
		\multirow{2}{*}{Value function} & $\displaystyle V^\pi(s):= \frac{1}{1-\gamma}\EE_{\substack{s_t\sim \rho_\mu^\pi\\ a_t\sim \pi(s_t)}}[R(s_t, a_t)]$ \\
		& $=\EE[\sum_{t=1}^{\infty}\gamma^{t-1}R(s_t, a_t)|s_1=s, a_t\sim \pi(s_t), s_{t+1}\sim P(s_t, a_t)].$\vspace{0.1cm} \\\hline\hline 
		Q-Value function & $Q^\pi(s,a) := R(s,a) + \gamma \cdot \EE_{s'\sim P(s,a)} [V^\pi(s')]$ \\\hline\hline 
		Advantage function & $A^\pi(s,a) := Q^\pi(s,a) - V^\pi(s)$. \\\hline
		Advantage function & $A^\pi(s,\pi') := \EE_{a\sim \pi'(s)}[A^\pi(s,a)]$. \\\hline\hline
		Arbitrary functions &  $f$ and $g$ are arbitrary functions used next.\\\hline\hline
		Arbitrary distributions &  $\nu$ and $\zeta$ are arbitrary probability distributions used next.\\\hline\hline
		Hilbert inner product & $\langle f, g\rangle_x :=\int f(x)g(x)\mathrm{d}x$\\\hline\hline
		Kulback-Liebler (KL) divergence & $\KL(\zeta|\nu):=\langle \zeta(x), \log(\frac{\zeta(x)}{\nu(x)})\rangle_x =\int \zeta(x) \log(\frac{\zeta(x)}{\nu(x)})\mathrm{d}x$\\\hline\hline
		Total Variation (TV) distance & $\TV(\zeta,\nu):=\frac{1}{2}\langle |\zeta(x)-\nu(x)|, 1\rangle_x = \frac{1}{2}\int |\zeta(x)-\nu(x)|\mathrm{d}x$.\\\hline\hline
		Coupling set & $\Gamma(\zeta, \nu)$ is the set of couplings for $\zeta$ and $\nu$.\\\hline\hline
		Wasserstein distance & $W(\zeta, \nu)=\inf_{\gamma \in \Gamma(\zeta, \nu)} \langle \|x-y\|, \gamma(x,y) \rangle_{x,y}$.\\\hline
		Policy Wasserstein distance & $W(\pi_1, \pi_2) := \sup_{s\in\mathcal{S}}W(\pi_1(\cdot|s), \pi_2(\cdot|s))$.\\\hline\hline
		Lipschitz Constant & $\Lip(f(x,y);x):=\sup_{x} \|\nabla_x f(x,y)\|_2$.\\\hline\hline
		Rubinstein-Kantrovich (RK) duality & $|\ip{\zeta(x)-\nu(x), f(x)}_x| \leq  W(\zeta, \nu) \cdot \Lip(f;x)$.\\\hline\hline
		
	\end{tabular}%
	\caption{The mathematical notations used throughout the paper.}
	\label{tab:mathnotation}%
\end{table}%

\renewcommand{\arraystretch}{1.5}
\begin{table}[t]
	\centering	
	\begin{tabular}{c|p{9.5cm}}
		Mathematical Notation & Name and Description \\ \midrule\midrule\midrule
		$\mathcal{S}$ & This is the state space of the MDP. \\\hline\hline 
		$\mathcal{A}$ & This is the action space of the MDP. \\\hline\hline 
		$\gamma$ & This is the discount factor of the MDP. \\\hline\hline 
		$R: \mathcal{S} \times \mathcal{A}\rightarrow \R$ & This is the reward function of the MDP. \\\hline\hline
		$\mu$ & This is the initial state distribution of the MDP over the state space. \\\hline\hline 
		$\Delta$ & $\Delta(\mathcal{F})$ is the set of all probability distributions over the arbitrary set $\mathcal{F}$ (otherwise known as the Credal set of $\mathcal{F}$).\\\hline\hline
		
		$\pi$ & In general, $\pi$ denotes the policy of the MDP. However, the output argument type could vary in the text. See the next lines.\\\hline 
		\multirow{2}{*}{$\pi: \mathcal{S} \rightarrow \Delta(\mathcal{A})$} & Given a state $s\in \mathcal{S}$, $\pi(s)$ and $\pi(\cdot|s)$ denote the action distribution suggested by the policy $\pi$. \\
		& In other words, $a\sim\pi(s)$ and $a\sim\pi(\cdot | s)$.\\\hline 
		\multirow{2}{*}{$\pi_{\text{det}}: \mathcal{S} \rightarrow \mathcal{A}$} & For a deterministic policy $\pi_{\text{det}}$, the unique action $a$ suggested by the policy given the state $s$ can be denoted by $\pi(s)$ specially. \\
		& In other words, $a=\pi_{\text{det}}(s)$.\\\hline 
		$\Pi$ & $\Pi$ is the set of all policies (i.e., $\forall \pi: \pi\in \Pi$).\\\hline 
		\hline
		
		$P$ & In general, $P$ denotes the transition dynamics model of the MDP. However, the input argument types could vary throughout the text. See the next lines for more clarification.\\\hline 
		$P: \mathcal{S} \times \mathcal{A} \rightarrow \Delta(\mathcal{S})$ & Given a particular state $s$ and action $a$, $P(s, a)$ will be the next state distribution of the transition dynamics (i.e. $s'\sim P(s, a)$ where $s'$ denotes the next state after applying $s$, $a$ to the transition $P$).\\\hline 
		
		
		$P: \Delta(\mathcal{S}) \times \mathcal{A} \rightarrow \Delta(\mathcal{S})$ & This is a generalization of the transition dynamics to accept state distributions as input. In other words, $P(\nu_s, a) := \EE_{s\sim \nu_s}[P(s,a)]$.\vspace{0.1cm}\\\hline 
		
		$P: \mathcal{S} \times \Delta(\mathcal{A}) \rightarrow \Delta(\mathcal{S})$ & This is a generalization of the transition dynamics to accept action distributions as input. In other words, $P(s, \nu_a) := \EE_{a\sim \nu_a}[P(s,a)]$.\vspace{0.1cm}\\\hline 
		
		\multirow{2}{*}{$\PP: \Delta(\mathcal{S}) \times \Pi \rightarrow \Delta(\mathcal{S})$} & This is a generalization of the transition dynamics to accept a state distribution and a policy as input. Given an arbitrary state distribution $\nu_s$ and a policy $\pi$, and $\PP(\nu_s, \pi)$ will be the next state distribution given that the state is sampled from $\nu_s$ and the action is sampled from the $\pi(s)$ distribution. \\
		& In other words, we have $\PP(\nu_s, \pi) := \EE_{\substack{s\sim \nu_s \\a\sim \pi(s)}}[P(s,a)]$.\vspace{0.1cm}\\\hline 
		
		$\PP^t: \Delta(\mathcal{S}) \times \Pi \rightarrow \Delta(\mathcal{S})$ & This is the $t$-step transition dynamics generalization. Given an arbitrary state distribution $\nu_s$ and a policy $\pi$ and non-negative integer $t$, one can define $\PP^t$ recursively as $\PP^0(\nu_s,\pi):=\nu_s$ and $\PP^t(\nu_s,\pi):=\PP(\PP^{t-1}(\nu_s, \pi), \pi)$. \\\hline\hline
		$\rho_{\mu}^{\pi}$ & The discounted visitation frequency $\rho_{\mu}^{\pi}$ is a distribution over  $\mathcal{S}$, and can be defined as $\rho_{\mu}^{\pi}:=(1-\gamma)\sum_{t=0}^\infty \gamma^{t}\PP^{t}(\mu, \pi)$.\\\hline\hline
		
	\end{tabular}%
	\caption{The MDP notations used throughout the paper.}
	\label{tab:mdpnotation}%
\end{table}%

\clearpage

\subsection{Brief Introduction to Policy Gradient Methods}
Conservative Policy Iteration (CPI) \citep{kakade2002approximately} was one of the early dimensionally consistent methods with a surrogate of the form $\mathcal{L}_{\pi_1}(\pi_2) = \eta_{\pi_1} + \frac{1}{1-\gamma}\cdot\EE_{s\sim \rho_\mu^{\pi_1}}[A^{\pi_1}(s, \pi_2)] - \frac{C}{2} \TV^2(\pi_1, \pi_2)$. The $C$ coefficient guarantees non-decreasing payoffs. However, CPI is limited to linear function approximation classes due to the update rule $\pi_{\text{new}}\leftarrow (1-\alpha) \pi_{\text{old}} + \alpha \pi'$. This lead to the design of the Trust Region Policy Optimization (TRPO) \citep{schulman2015trust} algorithm.

TRPO exchanged the bounded squared TV distance with the KL divergence by lower bounding it using the Pinsker inequality. This made TRPO closer to the Natural Policy Gradient algorithm\citep{kakade2002natural}, and for Gaussian policies, the modified terms had similar Taylor expansions within small trust regions. Confined trust regions are a stable way of making large updates and avoiding pessimistic coefficients. For gradient estimates, TRPO used Importance Sampling (IS) with a baseline shift:
\begin{align}
\nabla_{\theta_2} \EE_{s\sim \rho_\mu^{\pi_1}}[A^{\pi_1}(s, \pi_2)]\Big|_{\theta_2=\theta_1} = \nabla_{\theta_2} \EE_{\substack{{s\sim \rho_\mu^{\pi_1}} \\ {a\sim \pi_1(\cdot|s)}}}\bigg[ Q^{\pi_1}(s, a) \frac{\pi_2(a|s)}{\pi_1(a|s)}\bigg]\Big|_{\theta_2=\theta_1}.
\label{eq:impsampQ}
\end{align}
While empirical $\EE[A^{\pi_1}(s,\pi_2)]$ and $\EE[Q^{\pi_1}(s,\pi_2)]$ estimates yield identical variances in principle, the importance sampling estimator in~(\ref{eq:impsampQ}) imposes larger variances. Later, Proximal Policy Optimization (PPO) \citep{schulman2015high} proposed utilizing the Generalized Advantage Estimation (GAE) method for variance reduction and incorporated first-order smoothing like ADAM \citep{kingma2014adam}. GAE employed Temporal-Difference (TD) learning \citep{bhatnagar2009convergent} for variance reduction. Although TD-learning was not theoretically guaranteed to converge and could add bias, it improved the estimation accuracy.

As an alternative to IS, deterministic policy gradient estimators were also utilized in an actor-critic fashion. Deep Deterministic Policy Gradient (DDPG) \citep{lillicrap2015continuous} generalized deterministic gradients by employing Approximate Dynamic Programming (ADP) \citep{mnih2015human} for variance reduction. Twin Delayed Deterministic Policy Gradient (TD3) \citep{fujimoto2018addressing} improved DDPG's approximation to build an even better policy optimization method. Although both methods used deterministic policies, they still performed stochastic search by adding stochastic noise to the deterministic policies to force exploration.

Other lines of stochastic policy optimization were later proposed. \citet{wu2017scalable} used a Kronecker-factored approximation for curvatures. \citet{haarnoja2018soft} proposed a maximum entropy actor-critic method for stochastic policy optimization.

\subsection{Reinforcement Learning Challenges} We will briefly describe a few challenges in modern reinforcement learning: (a) the problem of non-local rewards, (b) scalability to longer horizons, and (c) observation or action delay.

\subsubsection{Non-local Rewards} 

An underlying assumption in the MDP framework is that the desired payoff can be decomposed into a (discounted) sum of time-step rewards. This leaves out practical payoff functions that cannot be expressed in this form. Non-local rewards are reward functions of the entire trajectory whose payoffs cannot be decomposed into the sum of terms such as $\eta=\sum_{t} f_t(s_t,a_t)$, where functions $f_t$ only depend on nearby states and actions. An example non-local reward is one that depends on the Fourier transform of the complete trajectory signal. Other examples include trajectory statistics (e.g., the median or maximum observation in a trajectory). In both examples, the reward cannot be determined without collecting the entire trajectory. While approximating non-local rewards with local ones is possible, such approximations usually come at significant reward engineering costs and undesired behavior in the policy. Although policy gradient methods are designed under the MDP framework and theoretically under-equipped for such challenges, being resilient to them is a desired property. 

\subsubsection{Scalability to Longer Horizons} 

In its simplified and un-discounted form, reinforcement learning aims at optimizing the $\eta=\sum_{t=1}^{T} r_t$ payoff by determining the proper actions. It is insightful to contemplate the difficulty of this goal relative to the time-horizon $T$. With $T=1$ the optimal policy is to take the greedy action at each time-step. However, with larger $T$ finding the optimal policy becomes more challenging. 

In infinite-horizon discounted MDPs, $1/(1-\gamma)$ plays the same effect as $T$. In particular, we have ${\sum_{i=1}^{T} \gamma^i}/{\sum_{i=1}^{\infty} \gamma^i} = 1-\gamma^{T}\simeq 1-e^{-T(1-\gamma)}$. In other words, although the framework optimizes over infinite time-steps, the cumulative weight of time-steps after $T$ decays exponentially with a time constant of $1/(1-\gamma)$. For instance, with $\gamma=0.99$ the first 200 steps constitute 87\% of the infinite-length trajectory payoff, whereas with $\gamma=0.999$ the first 2000 steps constitute the same portion of the payoff. This is why $1/(1-\gamma)$ appears in most theoretical sample-complexity analyses and higher bounds in an exponential capacity~\citep{kakade2003sample,kearns2000approximate,kearns2002sparse}. Practically, longer horizons can appear in at least two forms: (a) preserving the task complexity but increasing the decision-making frequency, and (b) increasing the task complexity. We call these forms \textit{soft} and \textit{hard} horizon scalability, respectively. 


\begin{figure}[t]
\includegraphics[width=0.98\linewidth]{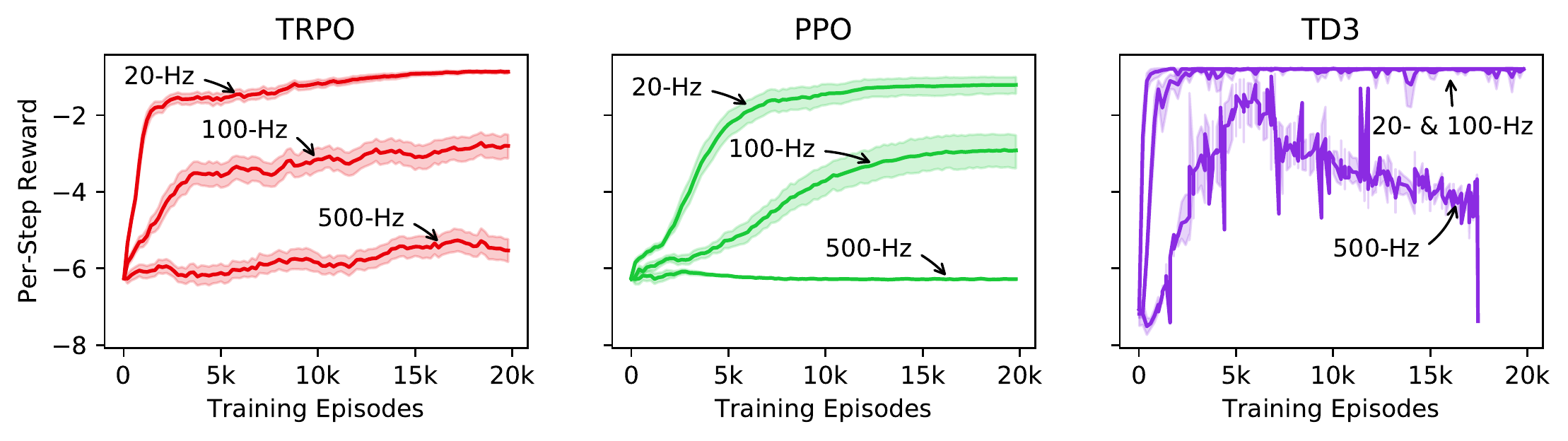}
\caption{The effect of soft horizon scaling on the typical pendulum continuous control task. Three environments are defined at different control frequencies. All environments try to achieve the same goal of making the same pendulum stand up-right within 10 seconds. The original environment runs at 20 Hz control frequency. We also show two similar environments running at 100 Hz and 500 Hz control frequency.  To make the tasks comparable, the horizontal axis shows the training episodes and the vertical axis shows the normalized reward per time-step. The environment and training hyper-parameters are given in Table~\ref{tab:hpspendlh}. Evidently, all methods suffer from the curse of horizon.}
\label{fig:pendhorizon}
\end{figure}

\renewcommand{\arraystretch}{0.8}
\begin{table}[t]
\parbox{.45\linewidth}{
\small
\centering
    \begin{tabular}{|c|c|c|c|}
    \toprule
    \multirow{2}{2.5cm}{General Hyper-Parameters} & \multicolumn{3}{c|}{Control Frequency} \\
\cmidrule{2-4}          & 20 hz & 100 hz & 500 hz \\
    \midrule
    Control Time-step & 50 ms & 10 ms & 2 ms \\
    \midrule
    Trajectory Duration & 10 s  & 10 s  & 10 s \\
    \midrule
    Parallel Workers & 4     & 4     & 4 \\
    \midrule
    Training Episodes & 20K   & 20K   & 20K \\
    \midrule
    Episode Time-steps & 200   & 1000  & 5000 \\
    \bottomrule
    \end{tabular}%
  \label{tab:generalpendlh}%
}\qquad
\parbox{.45\linewidth}{
\small
  \centering
    \begin{tabular}{|c|c|c|c|}
    \toprule
    \multirow{2}{2.5cm}{TD3 Hyper-Parameters} & \multicolumn{3}{c|}{Control Frequency} \\
\cmidrule{2-4}          & 20 hz & 100 hz & 500 hz \\
    \midrule
    MDP Discount & 0.99  & 0.998 & 0.9996 \\
    \midrule
    Replay Buffer Size & 50K   & 250K  & 1.25M \\
    \midrule
    Initial Random Steps & 100   & 500   & 2500 \\
    \midrule
    Training Interval & 100   & 500   & 2500 \\
    \midrule
    Opt. Batch Size & 128   & 640   & 3200 \\
    \bottomrule
    \end{tabular}%
  \label{tab:td3pendlh}%
}\vspace{0.3cm}\\
\parbox{.45\linewidth}{
\small
  \centering
    \begin{tabular}{|c|c|c|c|}
    \toprule
    \multirow{2}{2.5cm}{TRPO Hyper-Parameters} & \multicolumn{3}{c|}{Control Frequency} \\
\cmidrule{2-4}          & 20 hz & 100 hz & 500 hz \\
    \midrule
    Sampling Batch Size & 1024  & 5120  & 25600 \\
    \midrule
    MDP Discount & 0.99  & 0.998 & 0.9996 \\
    \midrule
    GAE Discount & 0.98  & 0.996 & 0.9992 \\
    \midrule
    Value Batch Size & 128   & 640   & 3200 \\
    \bottomrule
    \end{tabular}%
  \label{tab:trpopendlh}%
}\qquad
\parbox{.45\linewidth}{
\small
  \centering
    \begin{tabular}{|c|c|c|c|}
    \toprule
    \multirow{2}{2.5cm}{PPO Hyper-Parameter} & \multicolumn{3}{c|}{Control Frequency} \\
\cmidrule{2-4}          & 20 hz & 100 hz & 500 hz \\
    \midrule
    Sampling Batch Size & 256   & 1280  & 6400 \\
    \midrule
    MDP Discount & 0.99  & 0.998 & 0.9996 \\
    \midrule
    GAE Discount & 0.95  & 0.99  & 0.998 \\
    \midrule
    Opt. Batch Size & 64    & 320   & 1600 \\
    \bottomrule
    \end{tabular}%
  \label{tab:ppo1pendlh}
}
\caption{The settings and hyper-parameters used to produce Figure~\ref{fig:pendhorizon}.  The top-left table shows the common settings used to define the environment and the run the training. The scaled hyper-parameters for each of the TD3, TRPO, and PPO methods are given in the top-right, bottom-left, and bottom-right corner, respectively. Other hyper-parameters were set to their default value in all methods.}
 \label{tab:hpspendlh}
\end{table}
\renewcommand{\arraystretch}{1.0}

\paragraph{Soft Horizon Scalability:} In physical systems, one can preserve the task complexity but increase the control frequency. This increases the policy's agility in adapting to changes in the observation. Each time-step can be divided into $k$ smaller time-steps, resulting in a $k$-fold increase of time-steps per trajectory. This is what we call \textit{soft horizon scalability}. Intuitively, soft horizon scaling should only improve the optimal policy's performance; the smaller time-step policy can be faster in response to observation changes, and it has the freedom to choose different actions in the $k$ smaller time-steps rather than being constrained to apply the same action during the entire $k$ smaller time-steps. Unfortunately, that's not the case for existing policy optimization methods.

To showcase this effect, we consider a typical pendulum benchmark problem. Existing methods can solve this task with their default settings in less than a million serial time-steps. By making the control time-steps 5 or 25 times smaller, one may hope to achieve better per-step rewards. Of course, some hyper-parameters (e.g., the sampling batch-size) must be scaled proportionally. Table~\ref{tab:hpspendlh} summarizes such scalings. Figure~\ref{fig:pendhorizon} shows the training curves for each method and control frequency. Evidently, all methods suffer from the curse of horizon. In particular, on-policy methods (TRPO and PPO) are most vulnerable to soft-horizon scaling. TD3, on the other hand, is closer to TD(0) algorithms. This problem has usually been addressed with the \textit{frame-skip} trick, where same action is zero-held for multiple time-steps. However, this trick reduces the policy's agility or impacts the optimal behavior negatively. The performance deterioration with soft horizon scaling can be attributed to the higher variance of estimated gradients in reinforcement learning methods with longer horizons.

\paragraph{Hard Horizon Scalability} When multiple tasks are stacked in the time horizon and are conditioned upon the completion of each other, \textit{hard horizon scalability} is achieved. Consider a treasure hunt game where the next clue is conditioned upon solving the current task. Stacking more tasks makes winning the game exponentially more difficult; a single mistake in any step can result in failure. Reinforcement learning methods can suffer the same way with composite tasks. It is difficult to resolve hard horizon scalability without being resilient to soft horizon scalability in the first place. Overall, hard horizon scalability is a difficult challenge and beyond the scope of our work.

\begin{figure}[t]
\includegraphics[width=0.98\linewidth]{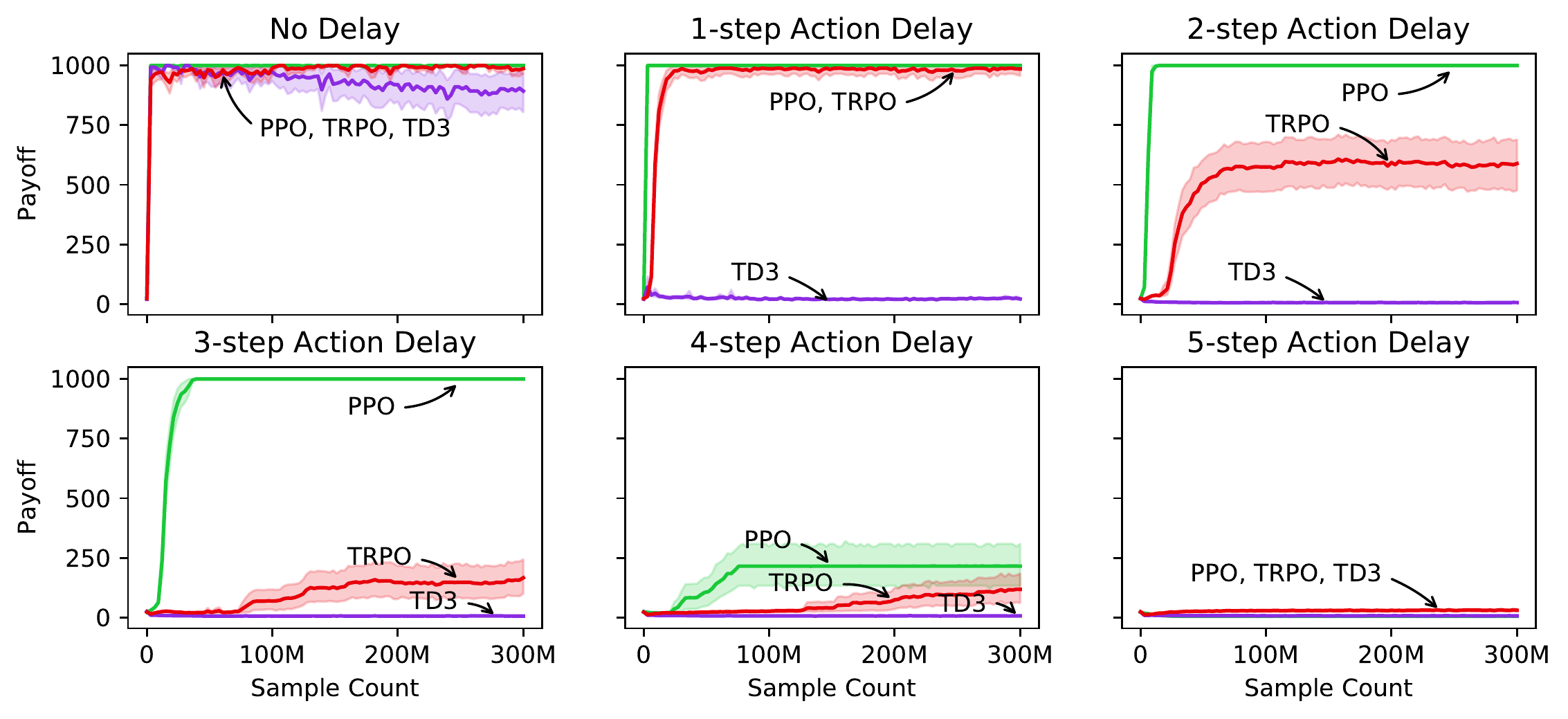}
\caption{The effect of action delay on the inverted pendulum continuous control task. The top left plot shows each method's performance on the original problem, and the rest simulate different amounts of action delay time-steps. All methods' lose performance with 5 time-steps of action delay.}
\label{fig:actdelayinvpend}
\vspace{-3mm}
\end{figure}

\subsubsection{Action or Observation Delay}

Delay in sensing the observation or applying the desired actuation is a challenging artifact in physical systems. Such delays have been a favorite topic of research in traditional control theory~\citet{golnaraghi2002auto}. Such delays are most influential in high-bandwidth control systems, where such delays are non-negligible compared to the high-frequency control rate. Although the MDP framework does not address observation or action delays, being resilient to them is a desirable feature for all reinforcement learning methods. To show the effect of delay on PG methods, we simulated a typical inverted pendulum task and delayed the proposed actions by the agent for different numbers of time-steps. The training curves are shown in Figure~\ref{fig:actdelayinvpend}. With no action delay, all methods produce high-performance agents. However, at 5 time-steps of action delay, the resulting agents are almost indistinguishable from the initial policies. In particular, TD3 is most vulnerable to delay, while PPO and TRPO could tolerate a few time-steps of delay. We speculate that this is due to TD3 being closely related to the TD(0) methods, whereas PPO and TRPO are closer to TD(1) when estimating the state values in their training processes. Overall, observation and action delays are unresolved topics in modern reinforcement learning.

\clearpage
\newpage

\section{Theoretical Proofs and Derivations}

Two theoretical results from the main paper were left to be discussed here. The bulk of our theoretical derivations (Sections~\ref{sec:ptboundwrtpolicy}-\ref{sec:finalizedpayoffbound}) belongs to the payoff improvement lower-bound of Theorem~\ref{thm:finalpracticallowerbound}, which was used in Algorithm~\ref{alg:wppoalg} of the main paper to regulate the policy updates. Figure~\ref{fig:proofflowchart} shows a flow-chart of the theoretical steps necessary to prove Theorem~\ref{thm:finalpracticallowerbound}, and Section~\ref{ss:allassumptions} discusses the assumptions used in the theoretical derivations. On a separate note, Section~\ref{sec:proofdevine} is dedicated to proving Theorem~\ref{thm:accuratevinepg}, which shows that the DeVine advantage estimator (Algorithm~\ref{alg:vineAdv} of the main paper) can provide exact policy gradient estimates under certain conditions. 

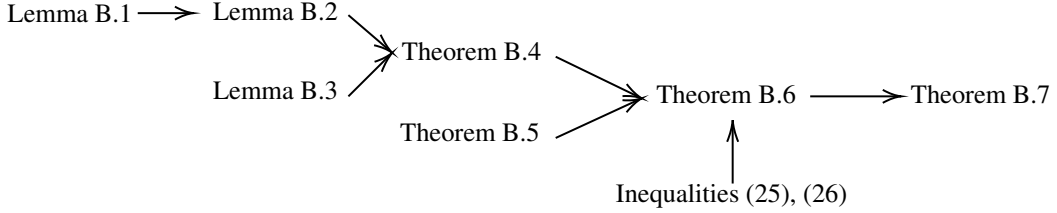
\begin{figure}[h]
\centering
\tikzset{every picture/.style={line width=0.75pt}} 
\begin{tikzpicture}[x=0.75pt,y=0.75pt,yscale=-1,xscale=0.85]
\draw    (94,19) -- (125,19) ;
\draw [shift={(127,19)}, rotate = 180] [color={rgb, 255:red, 0; green, 0; blue, 0 }  ][line width=0.75]    (10.93,-3.29) .. controls (6.95,-1.4) and (3.31,-0.3) .. (0,0) .. controls (3.31,0.3) and (6.95,1.4) .. (10.93,3.29)   ;
\draw    (219,20) -- (243.39,37.82) ;
\draw [shift={(245,39)}, rotate = 216.16] [color={rgb, 255:red, 0; green, 0; blue, 0 }  ][line width=0.75]    (10.93,-3.29) .. controls (6.95,-1.4) and (3.31,-0.3) .. (0,0) .. controls (3.31,0.3) and (6.95,1.4) .. (10.93,3.29)   ;
\draw    (219,61) -- (243.47,40.29) ;
\draw [shift={(245,39)}, rotate = 139.76] [color={rgb, 255:red, 0; green, 0; blue, 0 }  ][line width=0.75]    (10.93,-3.29) .. controls (6.95,-1.4) and (3.31,-0.3) .. (0,0) .. controls (3.31,0.3) and (6.95,1.4) .. (10.93,3.29)   ;
\draw    (342,41) -- (391.15,61.24) ;
\draw [shift={(393,62)}, rotate = 202.38] [color={rgb, 255:red, 0; green, 0; blue, 0 }  ][line width=0.75]    (10.93,-3.29) .. controls (6.95,-1.4) and (3.31,-0.3) .. (0,0) .. controls (3.31,0.3) and (6.95,1.4) .. (10.93,3.29)   ;
\draw    (342,82) -- (391.14,62.73) ;
\draw [shift={(393,62)}, rotate = 158.59] [color={rgb, 255:red, 0; green, 0; blue, 0 }  ][line width=0.75]    (10.93,-3.29) .. controls (6.95,-1.4) and (3.31,-0.3) .. (0,0) .. controls (3.31,0.3) and (6.95,1.4) .. (10.93,3.29)   ;
\draw    (447,105) -- (447,75) ;
\draw [shift={(447,73)}, rotate = 90] [color={rgb, 255:red, 0; green, 0; blue, 0 }  ][line width=0.75]    (10.93,-3.29) .. controls (6.95,-1.4) and (3.31,-0.3) .. (0,0) .. controls (3.31,0.3) and (6.95,1.4) .. (10.93,3.29)   ;
\draw    (493,61) -- (545,61) ;
\draw [shift={(547,61)}, rotate = 180] [color={rgb, 255:red, 0; green, 0; blue, 0 }  ][line width=0.75]    (10.93,-3.29) .. controls (6.95,-1.4) and (3.31,-0.3) .. (0,0) .. controls (3.31,0.3) and (6.95,1.4) .. (10.93,3.29)   ;

\draw (15,13) node [anchor=north west][inner sep=0.75pt]   [align=left] {Lemma~\ref{lemma:fullasumption}};
\draw (137,12) node [anchor=north west][inner sep=0.75pt]   [align=left] {Lemma~\ref{lemma:ptbound}};
\draw (137,52) node [anchor=north west][inner sep=0.75pt]   [align=left] {Lemma~\ref{lemma:lincombwass}};
\draw (249,32) node [anchor=north west][inner sep=0.75pt]   [align=left] {Theorem~\ref{thm:visitwass}};
\draw (248,73) node [anchor=north west][inner sep=0.75pt]   [align=left] {Theorem~\ref{thm:gradtwoconcnt}};
\draw (400,54) node [anchor=north west][inner sep=0.75pt]   [align=left] {Theorem~\ref{thm:wgpayoffbound}};
\draw (551,54) node [anchor=north west][inner sep=0.75pt]   [align=left] {Theorem~\ref{thm:finalpracticallowerbound}};
\draw (376,104) node [anchor=north west][inner sep=0.75pt]   [align=left] {Inequalities~\eqref{eq:rksecorderbound},~\eqref{eq:secondarybound}};
\end{tikzpicture}
\caption{The theoretical derivations flow-chart to prove Theorem~\ref{thm:finalpracticallowerbound}.}
\label{fig:proofflowchart}
\end{figure}

\subsection{Bounding $W(\PP^t(\mu, \pi_1), \PP^t(\mu, \pi_2))$}\label{sec:ptboundwrtpolicy}
To review, the dynamical smoothness assumptions were 
\begin{align*}
W(\PP(\mu, \pi_1), \PP(\mu, \pi_2))&\leq L_\pi \cdot W(\pi_1, \pi_2), \\
W(\PP(\mu_1, \pi), \PP(\mu_2, \pi))&\leq L_\mu \cdot W(\mu_1, \mu_2).
\end{align*}
The following lemma states that these two assumptions are equivalent to a more concise assumption. This will be used to bound the $t$-step visitation distance and prove Lemma~\ref{lemma:ptbound}.
\begin{lemma}
	\label{lemma:fullasumption}
	Assumptions~\eqref{eq:piassumption} and~\eqref{eq:sassumption} are equivalent to having
	\begin{equation}
	\label{eq:fullassumption}
	W(\PP(\mu_1, \pi_1), \PP(\mu_2, \pi_2))\leq L_\mu \cdot W(\mu_1, \mu_2) + L_\pi \cdot W(\pi_1, \pi_2).
	\end{equation}
\end{lemma}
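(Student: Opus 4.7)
The claim is an equivalence, so the plan is to handle the two directions separately, both of which should be essentially one-liners relying on standard properties of the Wasserstein distance.

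For the forward direction, I would derive each of \eqref{eq:piassumption} and \eqref{eq:sassumption} from \eqref{eq:fullassumption} by specialization. Setting $\mu_1=\mu_2=\mu$ in \eqref{eq:fullassumption} forces $W(\mu_1,\mu_2)=0$ and immediately yields $W(\PP(\mu,\pi_1),\PP(\mu,\pi_2)) \leq L_\pi \cdot W(\pi_1,\pi_2)$, which is \eqref{eq:piassumption}. Symmetrically, fixing $\pi_1=\pi_2=\pi$ gives $W(\pi_1,\pi_2)=0$ and recovers \eqref{eq:sassumption}. This direction is routine.

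For the reverse direction, the idea is to introduce the intermediate measure $\PP(\mu_1,\pi_2)$ and apply the triangle inequality for the Wasserstein metric, which holds since $W$ is a genuine metric on the relevant space of probability measures (modulo the standard finite-first-moment assumption, which we may take as granted in this model). Concretely, I would write
\begin{align*}
W(\PP(\mu_1,\pi_1),\PP(\mu_2,\pi_2))
&\leq W(\PP(\mu_1,\pi_1),\PP(\mu_1,\pi_2)) + W(\PP(\mu_1,\pi_2),\PP(\mu_2,\pi_2)) \\
&\leq L_\pi \cdot W(\pi_1,\pi_2) + L_\mu \cdot W(\mu_1,\mu_2),
\end{align*}
where the first step is the triangle inequality and the second step applies \eqref{eq:piassumption} to the first summand (same initial distribution $\mu_1$, two different policies) and \eqref{eq:sassumption} to the second summand (same policy $\pi_2$, two different initial distributions). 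This is exactly \eqref{eq:fullassumption}.

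There is no real obstacle here; the only subtle point is being careful that the triangle inequality can be applied, which requires $W$ to be a metric on the space where $\PP(\mu,\pi)$ lives. Because the paper works throughout with a Wasserstein distance and implicitly assumes a well-defined metric setting (finite first moments on $\mathcal{S}$), this is uncontroversial. If desired, one could symmetrize the derivation by also pivoting through $\PP(\mu_2,\pi_1)$ to emphasize that the choice of intermediate measure does not affect the final bound.
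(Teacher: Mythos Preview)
Your proposal is correct and matches the paper's approach: the paper also uses the triangle inequality for $W$ with an intermediate point (it pivots through $\PP(\mu_2,\pi_1)$ rather than your $\PP(\mu_1,\pi_2)$, which is immaterial) and dismisses the specialization direction as trivial. There is nothing to add.
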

\begin{proof}
	To prove the $\eqref{eq:piassumption}, \eqref{eq:sassumption}\Rightarrow \eqref{eq:fullassumption}$ direction, the triangle inequality for the Wasserstein distance gives
	\begin{equation}
	W(\PP(\mu_1, \pi_1), \PP(\mu_2, \pi_2))\leq W(\PP(\mu_1, \pi_1), \PP(\mu_2, \pi_1)) + W(\PP(\mu_2, \pi_1), \PP(\mu_2, \pi_2))
	\label{eq:wasstrian}
	\end{equation}
	and using \eqref{eq:piassumption}, \eqref{eq:sassumption}, and \eqref{eq:wasstrian} then implies
	\begin{equation}
	W(\PP(\mu_1, \pi_1), \PP(\mu_2, \pi_2))\leq L_\mu \cdot W(\mu_1, \mu_2) + L_\pi \cdot W(\pi_1, \pi_2).
	\end{equation}
	The other direction is trivial.
\end{proof}


\begin{lemma}
	\label{lemma:ptbound}
	Under Assumptions~\eqref{eq:piassumption} and~\eqref{eq:sassumption} we have the bound
	\begin{equation}
	W(\PP^t(\mu, \pi_1), \PP^t(\mu, \pi_2)) \leq L_\pi \cdot (1 + L_\mu +\cdots + L_\mu^{t-1}) \cdot W(\pi_1, \pi_2),
	\label{eq:ptbound}
	\end{equation}
	where $\PP^t(\mu, \pi)$ denotes the state distribution after running the MDP for $t$ time-steps with the initial state distribution $\mu$ and policy $\pi$.
\end{lemma}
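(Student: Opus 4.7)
The plan is to prove the bound by induction on $t$, using the recursive definition $\PP^t(\mu,\pi) = \PP(\PP^{t-1}(\mu,\pi),\pi)$ together with the combined smoothness statement of Lemma~\ref{lemma:fullasumption}.

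For the base case $t=1$, the claim reduces to $W(\PP(\mu,\pi_1),\PP(\mu,\pi_2)) \leq L_\pi \cdot W(\pi_1,\pi_2)$, which is exactly Assumption~\eqref{eq:piassumption}, with the geometric factor $1 + L_\mu + \cdots + L_\mu^{t-1}$ collapsing to the single term $1$.

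For the inductive step, I would assume the bound for $t-1$ and expand
\begin{equation*}
W(\PP^t(\mu,\pi_1), \PP^t(\mu,\pi_2)) = W\bigl(\PP(\PP^{t-1}(\mu,\pi_1),\pi_1),\, \PP(\PP^{t-1}(\mu,\pi_2),\pi_2)\bigr).
\end{equation*}
Applying Lemma~\ref{lemma:fullasumption} with $\mu_1 := \PP^{t-1}(\mu,\pi_1)$ and $\mu_2 := \PP^{t-1}(\mu,\pi_2)$ yields
\begin{equation*}
W(\PP^t(\mu,\pi_1), \PP^t(\mu,\pi_2)) \leq L_\mu \cdot W(\PP^{t-1}(\mu,\pi_1), \PP^{t-1}(\mu,\pi_2)) + L_\pi \cdot W(\pi_1,\pi_2).
\end{equation*}
Substituting the inductive hypothesis $W(\PP^{t-1}(\mu,\pi_1), \PP^{t-1}(\mu,\pi_2)) \leq L_\pi \cdot (1 + L_\mu + \cdots + L_\mu^{t-2}) \cdot W(\pi_1,\pi_2)$ gives $L_\pi(L_\mu + L_\mu^2 + \cdots + L_\mu^{t-1}) W(\pi_1,\pi_2) + L_\pi W(\pi_1,\pi_2) = L_\pi(1 + L_\mu + \cdots + L_\mu^{t-1}) W(\pi_1,\pi_2)$, closing the induction.

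There is no real obstacle here; the only subtlety is that the recursion $\PP^t(\mu,\pi) = \PP(\PP^{t-1}(\mu,\pi),\pi)$ must be applied in the form that leaves the outer transition's policy argument as $\pi_i$ (so that both the drift in the state distribution and the mismatch in policy are charged against a single application of the combined Lipschitz bound), rather than peeling off the first step, which would decouple the recursion into a less convenient form.
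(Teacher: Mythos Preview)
Your proposal is correct and follows essentially the same approach as the paper: induction on $t$, with the base case given by Assumption~\eqref{eq:piassumption}, the recursive expansion $\PP^t(\mu,\pi)=\PP(\PP^{t-1}(\mu,\pi),\pi)$, and an application of Lemma~\ref{lemma:fullasumption} to split the one-step distance into an $L_\mu$-term and an $L_\pi$-term before invoking the inductive hypothesis. The only cosmetic difference is that the paper explicitly cites Lemma~\ref{lemma:fullasumption} as the bridge from Assumptions~\eqref{eq:piassumption} and~\eqref{eq:sassumption} to the combined bound~\eqref{eq:fullassumption}, which you also do.
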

\begin{proof}
	For $t=1$, the lemma is equivalent to Assumption~\eqref{eq:piassumption}. This paves the way for the lemma to be proved using induction. The hypothesis is
	\begin{equation}
	W(\PP^{t-1}(\mu, \pi_1), \PP^{t-1}(\mu, \pi_2)) \leq L_\pi \cdot (1 + L_\mu +\cdots + L_\mu^{t-2}) \cdot W(\pi_1, \pi_2),
	\label{eq:induchypo}
	\end{equation} and for the induction step we write
	\begin{equation}
	\label{eq:inductstepeq}
	W(\PP^t(\mu, \pi_1), \PP^t(\mu, \pi_2)) = W(\PP(\PP^{t-1}(\mu, \pi_1), \pi_1) , \PP(\PP^{t-1}(\mu, \pi_2), \pi_2)).
	\end{equation}
	Using Assumption~\eqref{eq:fullassumption}, which according to Lemma~\ref{lemma:fullasumption} is equivalent to Assumptions~\eqref{eq:piassumption} and~\eqref{eq:sassumption}, we can combine~\eqref{eq:induchypo} and~\eqref{eq:inductstepeq} into
	\begin{equation}
	W(\PP^t(\mu, \pi_1), \PP^t(\mu, \pi_2)) \leq L_\pi \cdot W(\pi_1, \pi_2) + L_\mu \cdot W(\PP^{t-1}(\mu, \pi_1), \PP^{t-1}(\mu, \pi_2)).
	\end{equation}
	Thus, by applying the induction Hypothesis~\eqref{eq:induchypo}, we have
	\begin{equation}
	W(\PP^t(\mu, \pi_1), \PP^t(\mu, \pi_2)) \leq L_\pi \cdot W(\pi_1, \pi_2) + L_\mu \cdot  L_\pi \cdot (1 + L_\mu +\cdots + L_\mu^{t-2}) \cdot W(\pi_1, \pi_2),
	\end{equation} which can be simplified into the lemma statement (i.e., Inequality~\eqref{eq:ptbound}).
\end{proof}

\subsection{Bounding $W(\rho_\mu^{\pi_1}, \rho_\mu^{\pi_2})$}
Lemma~\ref{lemma:ptbound} suggests making the $\gamma L_{\mu}<1$ assumption and paves the way for Theorem~\ref{thm:visitwass}. The $\gamma L_{\mu}<1$ assumption is overly restrictive and unnecessary but makes the rest of the proof easier to follow. This assumption can be relaxed by a general transition dynamics stability assumption which is discussed in more detail later in section~\ref{ss:dynamicstability}, and an equivalent $\gamma \bar{L}_{\mu}<1$ assumption is introduced to replace $\gamma L_{\mu}<1$. 

First, we need to introduce Lemma~\ref{lemma:lincombwass} first, which will be used in the proof of Theorem~\ref{thm:visitwass}.

\begin{lemma}
	The Wasserstein distance between linear combinations of distributions can be bounded as $W(\beta\cdot\mu_1 + (1-\beta)\cdot\nu_1, \beta\cdot\mu_2 + (1-\beta)\cdot\nu_2) \leq \beta\cdot W(\mu_1, \mu_2) + (1-\beta)\cdot W(\nu_1, \nu_2)$.
	\label{lemma:lincombwass}
\end{lemma}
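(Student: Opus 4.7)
\textbf{Proof plan for Lemma~\ref{lemma:lincombwass}.} The plan is to exhibit an explicit coupling of the two mixture distributions whose transport cost is exactly $\beta W(\mu_1,\mu_2) + (1-\beta)W(\nu_1,\nu_2)$; since $W$ is defined as an infimum over admissible couplings, this immediately gives the claimed upper bound. This is the canonical ``convexity in the marginals'' property of the Wasserstein distance.

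First I would fix $\varepsilon>0$ and pick couplings $\gamma_\mu\in\Gamma(\mu_1,\mu_2)$ and $\gamma_\nu\in\Gamma(\nu_1,\nu_2)$ whose transport costs are within $\varepsilon$ of $W(\mu_1,\mu_2)$ and $W(\nu_1,\nu_2)$ respectively (or take optimal couplings if I wish to assume existence, which is standard for Polish spaces with finite first moments). I would then define the mixture measure
\begin{equation}
\gamma := \beta\,\gamma_\mu + (1-\beta)\,\gamma_\nu.
\end{equation}

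Next, I would verify that $\gamma\in\Gamma\bigl(\beta\mu_1+(1-\beta)\nu_1,\,\beta\mu_2+(1-\beta)\nu_2\bigr)$ by checking its marginals. The first marginal of $\gamma$ is $\beta\,(\text{first marginal of }\gamma_\mu)+(1-\beta)(\text{first marginal of }\gamma_\nu) = \beta\mu_1+(1-\beta)\nu_1$, and analogously for the second marginal. Because the cost functional $\langle\|x-y\|,\cdot\rangle_{x,y}$ is linear in the transport plan, we then have
\begin{equation}
\langle\|x-y\|,\gamma(x,y)\rangle_{x,y} = \beta\,\langle\|x-y\|,\gamma_\mu(x,y)\rangle_{x,y} + (1-\beta)\,\langle\|x-y\|,\gamma_\nu(x,y)\rangle_{x,y}.
\end{equation}
Plugging in the near-optimality of $\gamma_\mu$ and $\gamma_\nu$, the right-hand side is at most $\beta W(\mu_1,\mu_2) + (1-\beta)W(\nu_1,\nu_2) + \varepsilon$. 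Since $\gamma$ is admissible, $W$ of the mixtures is bounded by this quantity, and letting $\varepsilon\downarrow 0$ finishes the argument.

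I do not expect any serious obstacle here: the only subtlety is pedantic, namely whether to invoke the existence of optimal couplings (true under mild regularity) or to work with $\varepsilon$-near-optimal ones to avoid a measurability/existence digression. Either route gives the bound in two or three lines, so the ``hard part'' is really just presenting it cleanly.
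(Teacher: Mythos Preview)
Your proposal is correct and is exactly the approach the paper takes: the paper's one-line proof simply plugs $\gamma = \beta\,\gamma_{(\mu_1,\mu_2)} + (1-\beta)\,\gamma_{(\nu_1,\nu_2)}$ into the Wasserstein definition, which is precisely your mixture-coupling construction. Your write-up is more careful about the $\varepsilon$-near-optimal versus optimal coupling distinction, but the underlying idea is identical.
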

\begin{proof}
	Plugging $\gamma = \beta \cdot \gamma_{(\mu_1,\mu_2)} + (1-\beta) \cdot  \gamma_{(\nu_1,\nu_2)}$ in the Wasserstein definition yields the result.
\end{proof}

\begin{theorem}
	\label{thm:visitwass}
	Assuming (\ref{eq:piassumption}), (\ref{eq:sassumption}), and $\gamma L_\mu<1$, we have the inequality
	\begin{equation}
	W(\rho_\mu^{\pi_1}, \rho_\mu^{\pi_2}) \leq \frac{\gamma L_\pi}{1-\gamma L_\mu} \cdot W(\pi_1, \pi_2).
	\end{equation}
\end{theorem}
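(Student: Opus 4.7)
The plan is to write each visitation distribution as an infinite convex combination of $t$-step pushforwards with common weights and then transport the per-step bound of Lemma~\ref{lemma:ptbound} through that mixture using Lemma~\ref{lemma:lincombwass}. Recall that $\rho_\mu^{\pi} = (1-\gamma)\sum_{t=0}^\infty \gamma^t \PP^t(\mu,\pi)$, so $\rho_\mu^{\pi_1}$ and $\rho_\mu^{\pi_2}$ are mixtures of the two sequences $\{\PP^t(\mu,\pi_1)\}_{t\geq 0}$ and $\{\PP^t(\mu,\pi_2)\}_{t\geq 0}$ with the \emph{same} weight vector $\{(1-\gamma)\gamma^t\}_{t\geq 0}$. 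That common-weight structure is what makes a mixture-coupling argument directly applicable.

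First I would extend Lemma~\ref{lemma:lincombwass} from two mixture components to a countable convex combination. This extension follows the exact same strategy used in the proof of the two-term lemma: given optimal couplings $\gamma_t$ of $(\PP^t(\mu,\pi_1), \PP^t(\mu,\pi_2))$, the mixture $(1-\gamma)\sum_{t\geq 0}\gamma^t \gamma_t$ is a valid coupling of $(\rho_\mu^{\pi_1}, \rho_\mu^{\pi_2})$, which immediately yields
\[
W(\rho_\mu^{\pi_1}, \rho_\mu^{\pi_2}) \;\leq\; (1-\gamma)\sum_{t=0}^\infty \gamma^t\, W\bigl(\PP^t(\mu,\pi_1), \PP^t(\mu,\pi_2)\bigr).
\]
Substituting Lemma~\ref{lemma:ptbound} (with the $t=0$ term vanishing because $\PP^0(\mu,\pi)=\mu$ for any $\pi$) gives
\[
W(\rho_\mu^{\pi_1},\rho_\mu^{\pi_2}) \;\leq\; (1-\gamma)\,L_\pi\, W(\pi_1,\pi_2)\, \sum_{t=1}^\infty \gamma^t \sum_{k=0}^{t-1} L_\mu^k.
\]

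Finally I would collapse the double sum by swapping the order of summation:
\[
\sum_{t=1}^\infty \gamma^t \sum_{k=0}^{t-1} L_\mu^k \;=\; \sum_{k=0}^\infty L_\mu^k \sum_{t=k+1}^\infty \gamma^t \;=\; \frac{\gamma}{1-\gamma}\sum_{k=0}^\infty (\gamma L_\mu)^k \;=\; \frac{\gamma}{(1-\gamma)(1-\gamma L_\mu)}.
\]
The hypothesis $\gamma L_\mu<1$ is precisely what makes the inner geometric series converge. Multiplying by the $(1-\gamma)L_\pi W(\pi_1,\pi_2)$ prefactor yields the advertised $\frac{\gamma L_\pi}{1-\gamma L_\mu}W(\pi_1,\pi_2)$ bound. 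The only real obstacle is a measure-theoretic one: justifying the countable version of Lemma~\ref{lemma:lincombwass} (checking that the mixture coupling has the correct marginals) and the interchange of summations, which follows from Tonelli applied to the non-negative double series in $\gamma^t L_\mu^k$. Both points are standard, so the proof is essentially a bookkeeping calculation once those technical steps are spelled out.
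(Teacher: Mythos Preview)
Your proposal is correct and follows essentially the same approach as the paper: apply the (countable) mixture bound of Lemma~\ref{lemma:lincombwass} to $\rho_\mu^{\pi}=(1-\gamma)\sum_{t\ge 0}\gamma^t\PP^t(\mu,\pi)$, insert Lemma~\ref{lemma:ptbound}, and sum. The only cosmetic difference is that the paper evaluates $\sum_{k=0}^{t-1}L_\mu^k=(L_\mu^t-1)/(L_\mu-1)$ before summing over $t$, whereas you swap the order of summation directly; your route has the mild advantage of not introducing the spurious $L_\mu=1$ singularity.
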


\begin{proof}
	Using Lemma~\ref{lemma:lincombwass} and the definition of $\rho_\mu^\pi$, we can write
	\begin{equation}
	\label{eq:wassvisfirst}
	W(\rho_\mu^{\pi_1}, \rho_\mu^{\pi_2}) \leq (1-\gamma)\sum_{t=0}^\infty \gamma^t\cdot W(\PP^t(\mu, \pi_1), \PP^t(\mu, \pi_2)).
	\end{equation}
	Using Lemma~\ref{lemma:ptbound}, we can take another step to relax the inequality (\ref{eq:wassvisfirst}) and write
	\begin{equation}
	\label{eq:wassvisticonvsum}
	W(\rho_\mu^{\pi_1}, \rho_\mu^{\pi_2}) \leq \frac{L_\pi (1-\gamma) W(\pi_1, \pi_2)}{(L_\mu-1)} \sum_{t=0}^\infty ((\gamma L_\mu)^t-\gamma^t).
	\end{equation}
	Due to the $\gamma L_\mu<1$ assumption, the right-hand summation in (\ref{eq:wassvisticonvsum}) is convergent. This leads us to
	\begin{equation}
	\label{eq:prefinalviswass}
	W(\rho_\mu^{\pi_1}, \rho_\mu^{\pi_2}) \leq \frac{L_\pi (1-\gamma) W(\pi_1, \pi_2)}{(L_\mu-1)} (\frac{1}{1 - \gamma L_\mu}-\frac{1}{1 - \gamma}).
	\end{equation}
	Inequality (\ref{eq:prefinalviswass}) can then be simplified to give the result.
\end{proof}

\subsection{Steps to Bound the Second-order Term}\label{sec:bound2ndorder}
The RK duality yields the following bound:
\begin{equation}
|\ip{\rho_\mu^{\pi_2}-\rho_\mu^{\pi_1}, A^{\pi_1}(\cdot, \pi_2)}_s| \leq W(\rho_\mu^{\pi_1}, \rho_\mu^{\pi_2}) \cdot \sup_{s} \|\nabla_s A^{\pi_1}(s, \pi_2)\|_2.
\label{eq:rksecorderbound}
\end{equation}
To facilitate the further application of the RK duality, any advantage can be rewritten as the following inner product: $A^{\pi_1}(s, \pi_2) = \ip{\pi_2(a|s) - \pi_1(a|s),  Q^{\pi_1}(s, a)}_{a}$. Taking derivatives of both sides with respect to the state variable and applying the triangle inequality produces the bound
\begin{align}
\sup_{s} \|\nabla_s A^{\pi_1}(s, \pi_2)\|_2 &\leq \sup_s  \| \ip{\nabla_s (\pi_2(a|s) - \pi_1(a|s)),  Q^{\pi_1}(s, a)}_{a} \|_2 \nonumber\\
&\qquad +\sup_s
\|\ip{\pi_2(a|s) - \pi_1(a|s), \nabla_s Q^{\pi_1}(s, a)}_{a}\|_2.
\label{eq:secondarybound}
\end{align}
The second term of the RHS in~(\ref{eq:secondarybound}) is compatible with the RK duality. However, the form of the first term does not warrant an easy application of RK. For this, we introduce Theorem~\ref{thm:gradtwoconcnt}.

\begin{theorem}
	\label{thm:gradtwoconcnt}
	Assuming the existence of $\Lip(Q^{\pi_1}(\bs, a);a)$, we have the bound
	\begin{align}
	&\bigg\| \bip{\nabla_\bs (\pi_2(a|\bs) - \pi_1(a|\bs)),  Q^{\pi_1}(\bs, a)}_{a} \bigg\|_2  \\
	&\qquad \qquad \leq 2\cdot \Lip(Q^{\pi_1}(\bs, a);a) \cdot \bigg\| \nabla_{\bs'} W\bigg(\frac{\pi_2(a|\bs') + \pi_1(a|\bs)}{2}, \frac{\pi_2(a|\bs) + \pi_1(a|\bs')}{2}\bigg)\bigg|_{\bs'=\bs} \bigg\|_2 \nonumber.
	\end{align}
\end{theorem}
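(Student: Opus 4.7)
The plan is to reduce the claimed bound to a single application of the Rubinstein--Kantorovich duality at a perturbed state $s'$, followed by a limiting (differentiation) argument as $s' \to s$. The key observation is that the two mixtures
$\zeta_{s'} := \tfrac{1}{2}(\pi_2(a|s') + \pi_1(a|s))$ and
$\nu_{s'} := \tfrac{1}{2}(\pi_2(a|s) + \pi_1(a|s'))$ appearing inside the Wasserstein term on the right-hand side \emph{coincide} when $s' = s$, so both $W(\zeta_{s'},\nu_{s'})$ and any test-integral against the signed measure $\zeta_{s'}-\nu_{s'}$ vanish there; comparing first-order behavior as $s'$ moves away from $s$ produces the stated gradient inequality.

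First, introduce the scalar auxiliary functional
\begin{equation*}
F(s') \;:=\; \langle \zeta_{s'} - \nu_{s'},\, Q^{\pi_1}(s, a)\rangle_a ,
\end{equation*}
in which $Q^{\pi_1}$ is evaluated at the fixed state $s$ while the two policies are evaluated at the perturbed state $s'$. Writing out the mixtures, $\zeta_{s'} - \nu_{s'} = \tfrac{1}{2}\bigl[(\pi_2(a|s') - \pi_2(a|s)) - (\pi_1(a|s') - \pi_1(a|s))\bigr]$, so $F(s) = 0$ and
\begin{equation*}
\nabla_{s'} F(s')\big|_{s' = s} \;=\; \tfrac{1}{2}\, \bigl\langle \nabla_s(\pi_2(a|s) - \pi_1(a|s)),\, Q^{\pi_1}(s, a)\bigr\rangle_a .
\end{equation*}
By the Rubinstein--Kantorovich duality applied to the signed measure $\zeta_{s'} - \nu_{s'}$, with $a \mapsto Q^{\pi_1}(s, a)$ as the test function,
\begin{equation*}
|F(s')| \;\leq\; W(\zeta_{s'}, \nu_{s'}) \cdot \Lip(Q^{\pi_1}(s, a); a) .
\end{equation*}

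Next, I would transfer this pointwise inequality into an inequality on gradients at $s' = s$. Both $F$ and the map $s' \mapsto W(\zeta_{s'}, \nu_{s'})$ vanish at $s' = s$; so for any unit direction $d \in \R^{\dim(\mathcal{S})}$, dividing the displayed inequality by $h > 0$ evaluated at $s' = s + h d$ and letting $h \to 0^+$ yields
\begin{equation*}
\bigl|d^\top \nabla_{s'} F(s)\bigr| \;\leq\; \lim_{h \to 0^+}\frac{W(\zeta_{s+hd},\, \nu_{s+hd})}{h} \cdot \Lip(Q^{\pi_1}(s, a); a) .
\end{equation*}
Taking the supremum over unit $d$ on both sides turns the left-hand side into $\|\nabla_{s'} F(s)\|_2$ and the right-hand side into $\bigl\|\nabla_{s'} W(\zeta_{s'},\nu_{s'})|_{s'=s}\bigr\|_2 \cdot \Lip(Q^{\pi_1}(s, a); a)$. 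Multiplying through by $2$ and substituting the explicit formula for $\nabla_{s'} F(s)$ from above delivers the theorem, with the factor of $2$ on the right tracking the $\tfrac{1}{2}$ prefactors in the symmetric mixtures.

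The main subtlety is that $s' \mapsto W(\zeta_{s'}, \nu_{s'})$ is a nonnegative function attaining its minimum $0$ at $s' = s$, so like a norm at the origin it is generically only one-sided differentiable there. The symbol $\|\nabla_{s'} W(\cdots)|_{s'=s}\|_2$ in the theorem statement must therefore be interpreted as the supremum over unit directions $d$ of the one-sided directional derivative $\lim_{h \to 0^+} W(\zeta_{s+hd}, \nu_{s+hd})/h$; with this convention the differentiation step above is rigorous, and the factor of $2$ is genuinely needed (it is tight already in closed form for one-dimensional deterministic policies $\pi_1(s) = g(s)$ and $\pi_2(s) = f(s)$, where both sides reduce to elementary expressions in $|f'(s)|$, $|g'(s)|$ and $\partial_a Q$). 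Verifying this interpretation and the validity of swapping limits with the supremum over $d$ is where most of the care has to be taken.
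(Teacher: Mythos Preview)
Your proposal is correct and follows essentially the same route as the paper: rewrite the integrand as a difference of the two symmetric mixtures $\zeta_{s'}$ and $\nu_{s'}$, apply the Rubinstein--Kantorovich duality against $Q^{\pi_1}(s,\cdot)$ at each perturbed $s'$, and pass to the limit to convert the finite-difference inequality into a gradient inequality. The paper carries this out coordinate-by-coordinate (perturbing along each basis vector $\mathbf{e}_j$ and then recombining via $\sqrt{\sum_j(\cdot)^2}$), whereas you work with a general unit direction $d$ and take the supremum; this is a cosmetic difference. Your explicit discussion of the one-sided differentiability of $s'\mapsto W(\zeta_{s'},\nu_{s'})$ at its zero $s'=s$ is a point the paper leaves implicit.
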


\begin{proof}
	By definition, we have
	\begin{align}\label{eq:graddimdecomp}
	\big\| \bip{\nabla_\bs (\pi_2(a|\bs) - \pi_1(a|&\bs)),  Q^{\pi_1}(\bs, a)}_{a} \big\|_2 \nonumber\\
	&= \sqrt{\sum_{j=1}^{\dim(\mathcal{S})} \bigg( \Bip{\frac{\partial}{\partial s^{(j)}} (\pi_2(a|\bs) - \pi_1(a|\bs)),  Q^{\pi_1}(\bs, a)}_{a} \bigg)^2}.
	\end{align}
	For better insight, we will write the derivative using finite differences:
	\begin{align}
	\Bip{\frac{\partial}{\partial s^{(j)}} (\pi_2(a|\bs) - &\pi_1(a|\bs)),  Q^{\pi_1}(\bs, a)}_{a} \nonumber \\
	=\lim_{\delta s\rightarrow 0}\frac{1}{\delta s} \bigg[ & \bip{(\pi_2(a|\bs+\delta s\cdot \mathbf{e}_j)& - &\pi_1(a|\bs+\delta s\cdot \mathbf{e}_j)&), Q^{\pi_1}(\bs, a)}_{a} & \nonumber\\
	- & \bip{(\pi_2(a|\bs)& - &\pi_1(a|\bs)&), Q^{\pi_1}(\bs, a)}_{a} &\bigg].
	\end{align}
	We can rearrange the finite difference terms to get
	\begin{align}
	\Bip{\frac{\partial}{\partial s^{(j)}} (\pi_2(a|\bs) - & \pi_1(a|\bs)),  Q^{\pi_1}(\bs, a)}_{a} \nonumber\\
	=\lim_{\delta s\rightarrow 0}\frac{1}{\delta s} \bigg[ & \bip{(\pi_2(a|\bs+\delta s\cdot \mathbf{e}_j)& + &\pi_1(a|\bs) &),  Q^{\pi_1}(\bs, a)}_{a} & \nonumber\\
	- & \bip{(\pi_2(a|\bs)& + &\pi_1(a|\bs+\delta s\cdot \mathbf{e}_j) &),  Q^{\pi_1}(\bs, a)}_{a} &\bigg] .
	\end{align}
	Equivalently, we can divide and multiply the inner products by a factor of 2, to make the inner product arguments resemble mixture distributions:
	\begin{align}
	\Bip{\frac{\partial}{\partial s^{(j)}} (\pi_2(a|\bs) - \pi_1(a|\bs)),  Q^{\pi_1}(\bs, a)}_{a} \nonumber \\
	=\lim_{\delta s\rightarrow 0}\frac{2}{\delta s} \bigg[ & \Bip{\frac{\pi_2(a|\bs+\delta s\cdot \mathbf{e}_j) + \pi_1(a|\bs)}{2},  Q^{\pi_1}(\bs, a)}_{a} \nonumber\\
	- & \Bip{\frac{\pi_2(a|\bs) + \pi_1(a|\bs+\delta s\cdot \mathbf{e}_j)}{2},  Q^{\pi_1}(\bs, a)}_{a} \bigg] .
	\end{align}
	The RK duality can now be used to bound this difference as
	\begin{align}
	&\bigg|\Bip{\frac{\partial}{\partial s^{(j)}} (\pi_2(a|\bs) - \pi_1(a|\bs)),  Q^{\pi_1}(\bs, a)}_{a}\bigg| \\
	& \leq \lim_{\delta s\rightarrow 0}\frac{2}{\delta s} \bigg[ W\bigg(\frac{\pi_2(a|\bs+\delta s\cdot \mathbf{e}_j) + \pi_1(a|\bs)}{2}, \frac{\pi_2(a|\bs) + \pi_1(a|\bs+\delta s\cdot \mathbf{e}_j)}{2}\bigg) \cdot \Lip(Q^{\pi_1}(\bs, a);a) \bigg] \nonumber,
	\end{align}which can be simplified as
	\begin{align}\label{eq:dimconversionbound}
	&\bigg|\Bip{\frac{\partial}{\partial s^{(j)}} (\pi_2(a|\bs) - \pi_1(a|\bs)),  Q^{\pi_1}(\bs, a)}_{a}\bigg| \nonumber \\
	& \leq 2\cdot \Lip(Q^{\pi_1}(\bs, a);a) \cdot \frac{\partial}{\partial s'^{(j)}} W\bigg(\frac{\pi_2(a|\bs') + \pi_1(a|\bs)}{2}, \frac{\pi_2(a|\bs) + \pi_1(a|\bs')}{2}\bigg)\bigg|_{\bs'=\bs}.
	\end{align}
	Combining Inequality~\eqref{eq:dimconversionbound} and Equation~\eqref{eq:graddimdecomp}, we obtain the bound in the theorem.
\end{proof}

\subsection{The Preliminary Payoff Improvement Bound}\label{sec:boundstatement}
Combining the results of Inequality \eqref{eq:secondarybound} and Theorems~\ref{thm:gradtwoconcnt} and~\ref{thm:visitwass} leads us to define the regularization terms and coefficients:

$$C_1':=\sup_{s} \frac{2\cdot \Lip(Q^{\pi_1}(s, a);a)\cdot \gamma \cdot L_\pi}{(1-\gamma)(1-\gamma L_\mu)}, \qquad C_2':=\sup_{s} \frac{\| \Lip(\nabla_s Q^{\pi_1}(s, a); a) \|_2 \cdot \gamma \cdot L_\pi}{(1-\gamma)(1-\gamma L_\mu)},$$
\begin{align}
\mathcal{L}_{WG}(\pi_1, \pi_2; s) &:=W(\pi_2(a|s), \pi_1(a|s)) \nonumber\\
&\qquad \times \bigg\| \nabla_{s'} W\bigg(\frac{\pi_2(a|s') + \pi_1(a|s)}{2}, \frac{\pi_2(a|s)  +\pi_1(a|s')}{2}\bigg)\bigg|_{s'=s} \bigg\|_2.
\label{eq:c1c2lwgdef}
\end{align}

This gives us the following novel lower bound for payoff improvement.
\begin{theorem}
	\label{thm:wgpayoffbound}
	Defining $C_1'$, $C_2'$, and $\mathcal{L}_{WG}(\pi_1, \pi_2; s)$ as in~\eqref{eq:c1c2lwgdef}, we have  $\eta_{\pi_2} \geq \mathcal{L}_{\pi_1}^{\sup'}(\pi_2)$, where
	\begin{align}
         \label{eq:wtrpompibound}
         \mathcal{L}_{\pi_1}^{\sup'}(\pi_2) &:= \eta_{\pi_1} +\frac{1}{1-\gamma} \mathbb{E}_{s\sim \rho^{\pi_1}_\mu}[A^{\pi_1}(s,\pi_2)] - C_1' \cdot \sup_{s} \bigg[ \mathcal{L}_{WG}(\pi_1, \pi_2; s) \bigg]\nonumber\\
          &\qquad - C_2' \cdot \sup_{s} \bigg[ W(\pi_2(a|s), \pi_1(a|s))^2 \bigg].
         \end{align} 
\end{theorem}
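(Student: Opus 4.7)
The plan is to lower-bound the ``second-order term'' in the advantage decomposition identity
\[
\eta_{\pi_2} = \eta_{\pi_1} + \frac{1}{1-\gamma}\,\EE_{s\sim \rho_\mu^{\pi_1}}[A^{\pi_1}(s,\pi_2)] + \frac{1}{1-\gamma}\,\ip{\rho_\mu^{\pi_2}-\rho_\mu^{\pi_1},A^{\pi_1}(\cdot,\pi_2)}_s,
\]
by combining all of the results assembled in Sections~\ref{sec:ptboundwrtpolicy}--\ref{sec:bound2ndorder}. The only quantity we actually need to control is the inner product $\ip{\rho_\mu^{\pi_2}-\rho_\mu^{\pi_1},A^{\pi_1}(\cdot,\pi_2)}_s$; if we can show this term is bounded below by $-(1-\gamma)(C_1'\sup_s \mathcal{L}_{WG}(\pi_1,\pi_2;s) + C_2'\sup_s W(\pi_2(a|s),\pi_1(a|s))^2)$, the claim follows immediately.

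The first step is to apply the Rubinstein--Kantorovich duality exactly in the form of inequality~\eqref{eq:rksecorderbound}: this factors the inner product into the product of a visitation-distance piece $W(\rho_\mu^{\pi_1},\rho_\mu^{\pi_2})$ and a state-gradient piece $\sup_s \|\nabla_s A^{\pi_1}(s,\pi_2)\|_2$. The visitation-distance piece is bounded by $\tfrac{\gamma L_\pi}{1-\gamma L_\mu}\,W(\pi_1,\pi_2)$ via Theorem~\ref{thm:visitwass}. For the state-gradient piece I would rewrite $A^{\pi_1}(s,\pi_2) = \ip{\pi_2(a|s)-\pi_1(a|s), Q^{\pi_1}(s,a)}_a$, differentiate in $s$ under the inner product using the product rule, and apply the triangle inequality as in~\eqref{eq:secondarybound} to split the gradient into (i) a term involving $\nabla_s(\pi_2-\pi_1)$ against $Q^{\pi_1}$, and (ii) a term involving $\pi_2-\pi_1$ against $\nabla_s Q^{\pi_1}$.

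Part (i) is precisely the content of Theorem~\ref{thm:gradtwoconcnt}, which bounds it by $2\,\Lip(Q^{\pi_1}(s,a);a)\cdot \|\nabla_{s'}W(\cdots)|_{s'=s}\|_2$. Part (ii) is another direct application of RK duality in the action variable, giving $W(\pi_2(a|s),\pi_1(a|s))\cdot \|\Lip(\nabla_s Q^{\pi_1}(s,a);a)\|_2$. Multiplying the two factors from the previous step and distributing $W(\pi_1,\pi_2)$ over the sum, part (i) becomes the $\mathcal{L}_{WG}$ term once we recognize $W(\pi_1,\pi_2)\cdot \|\nabla_{s'}W(\cdots)\|_2 = \mathcal{L}_{WG}(\pi_1,\pi_2;s)$, while part (ii) produces $W(\pi_2(a|s),\pi_1(a|s))^2$. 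Passing the suprema over $s$ through both factors and collecting constants with the $\tfrac{1}{1-\gamma}$ prefactor gives exactly the coefficients $C_1'$ and $C_2'$ defined in~\eqref{eq:c1c2lwgdef}.

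The only subtle point is the handling of signs and suprema: the Cauchy--Schwarz/RK inequalities only give absolute values, so I would first bound the inner product in absolute value, then take its negative as a valid lower bound; and the two suprema over $s$ (one from the RK bound on the state-gradient piece, one from bounding $W(\pi_1(\cdot|s),\pi_2(\cdot|s))$ uniformly in Theorem~\ref{thm:visitwass}) must be taken jointly to produce the two $\sup_s$ expressions in the final form. I expect the main obstacle to be bookkeeping: tracking which term contributes the factor of $W(\pi_1,\pi_2)$ from the visitation bound versus a second factor from the gradient split, and ensuring the uniform-in-$s$ structure in the definition of $\Lip$ lines up cleanly with the $\sup_s$ that ultimately appears in $\mathcal{L}_{\pi_1}^{\sup'}(\pi_2)$.
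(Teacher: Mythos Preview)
Your proposal is correct and follows essentially the same route as the paper's own proof: start from the advantage decomposition, apply RK duality~\eqref{eq:rksecorderbound} to factor the second-order term, bound the visitation-distance factor via Theorem~\ref{thm:visitwass}, split the state-gradient factor via~\eqref{eq:secondarybound}, handle the first piece with Theorem~\ref{thm:gradtwoconcnt} and the second with RK in the action variable, then collect constants into $C_1'$ and $C_2'$. Your discussion of the sign handling and of merging the two $\sup_s$'s is exactly the bookkeeping the paper suppresses in its one-line proof.
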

\begin{proof}
By first inserting Theorem~\ref{thm:gradtwoconcnt} into Inequality~\eqref{eq:secondarybound}, and then applying it to Inequality~\eqref{eq:rksecorderbound} along with Theorem~\ref{thm:visitwass}, one can obtain the result according to the advantage decomposition lemma given in Equation~\eqref{eq:advdecompmainpaper} of the main paper.
\end{proof}

\subsection{Quadratic Modeling of Policy Sensitivity Regularization}\label{ss:quadmodelsurr}
First, we will build insight into the nature of the 
\begin{align}
\mathcal{L}_{WG}(\pi_1, \pi_2; s) = & W(\pi_2(a|s), \pi_1(a|s)) \times \nonumber \\
&\bigg\| \nabla_{s'} W\bigg(\frac{\pi_2(a|s') + \pi_1(a|s)}{2}, \frac{\pi_2(a|s) + \pi_1(a|s')}{2}\bigg)\bigg|_{s'=s} \bigg\|_2
\end{align}
term. It is fairly obvious that 
\begin{equation}
W(\pi_2(a|s), \pi_1(a|s)) \big|_{\pi_2=\pi_1} = 0 .
\end{equation}
If $\pi_2=\pi_1$, then the two distributions $\frac{\pi_2(a|s') + \pi_1(a|s)}{2}$ and $\frac{\pi_2(a|s) + \pi_1(a|s')}{2}$ will be the same no matter what $s'$ is. In other words, 
\begin{equation}
\label{eq:zeroginsqder}
\pi_1 = \pi_2 \Rightarrow \forall s': W\bigg(\frac{\pi_2(a|s') + \pi_1(a|s)}{2}, \frac{\pi_2(a|s) + \pi_1(a|s')}{2}\bigg) = 0 .
\end{equation}

This means that
\begin{equation}
\bigg\| \nabla_{s'} W\bigg(\frac{\pi_2(a|s') + \pi_1(a|s)}{2}, \frac{\pi_2(a|s) + \pi_1(a|s')}{2}\bigg)\bigg|_{s'=s} \bigg\|_2 \bigg|_{\pi_2=\pi_1} = 0 .
\end{equation}

The Taylor expansion of the squared Wasserestein distance can be written as
\begin{equation}
W(\pi_2(a|s), \pi_1(a|s))^2 \big|_{\theta_2 = \theta_1 + \dth} = \frac{1}{2} \dth^T H_2 \dth + \text{h.o.t.} .
\end{equation}

Considering \eqref{eq:zeroginsqder} and similar to the previous point, one can write the following Taylor expansion
\begin{equation}
\bigg\| \nabla_{s'} W\bigg(\frac{\pi_2(a|s') + \pi_1(a|s)}{2}, \frac{\pi_2(a|s) + \pi_1(a|s')}{2}\bigg)\bigg|_{s'=s} \bigg\|_2^2 \bigg|_{\theta_2 = \theta_1 + \dth} = \dth^T H_1 \dth + \text{h.o.t.} .
\end{equation}

According to above, $\mathcal{L}_{WG}$ is the geometric mean of two functions of quadratic order. Although this makes $\mathcal{L}_{WG}$ of quadratic order (i.e., $\lim_{\dth\rightarrow 0} \frac{\mathcal{L}_{WG}(\alpha \dth)}{\mathcal{L}_{WG}(\dth)} =\alpha^2$ holds for any constant $\alpha$), this does not guarantee that $\mathcal{L}_{WG}$ is twice continuously differentiable w.r.t. the policy parameters, and may not have a defined Hessian matrix (e.g., $f(x_1, x_2) = |x_1x_2|$ is of quadratic order, yet is not twice differentiable). To avoid this issue, we compromise on the local model. According to the arithmetic mean and geometric mean inequality, for all $x_1, x_2 \geq 0$ and any non-zero $\alpha$, we have
\begin{equation}
x_1 x_2 \leq \frac{\alpha^2 x_1^2 + \alpha^{-2} x_2^{2}}{2}.
\end{equation}
Therefore, we can bound the $\mathcal{L}_{WG}$ term into two quadratic terms:
\begin{align}
\mathcal{L}_{WG}(\pi_1, \pi_2; s) \leq \frac{1}{2} \bigg(&\alpha^2 \cdot \bigg\| \nabla_{s'} W\bigg(\frac{\pi_2(a|s') + \pi_1(a|s)}{2}, \frac{\pi_2(a|s) + \pi_1(a|s')}{2}\bigg)\bigg|_{s'=s} \bigg\|_2^2  +\nonumber\\
& \alpha^{-2} \cdot W(\pi_2(a|s), \pi_1(a|s))^2 \bigg).
\label{eq:amgmineqlwg}
\end{align}

\subsection{The Final Payoff Improvement Guarantee}\label{sec:finalizedpayoffbound}
Inequality~\eqref{eq:amgmineqlwg} paves the way for our final payoff lower bound theorem.
\begin{theorem}
	\label{thm:finalpracticallowerbound}
	By defining $C_1 := \frac{C_1' \cdot \alpha^2}{2}$, $C_2:=(C_2'+\frac{C_1'}{2\alpha^2})$ with any non-zero $\alpha$,  and
	\begin{equation}
	\mathcal{L}_{G^2}(\pi_1, \pi_2; s) := \bigg\| \nabla_{s'} W\bigg(\frac{\pi_2(a|s') + \pi_1(a|s)}{2}, \frac{\pi_2(a|s) + \pi_1(a|s')}{2}\bigg)\bigg|_{s'=s} \bigg\|_2^2,
	\end{equation}
	we have $\eta_{\pi_2} \geq \mathcal{L}_{\pi_1}^{\sup}(\pi_2)$, where 
	\begin{align}
	\mathcal{L}_{\pi_1}^{\sup}(\pi_2) := & \frac{1}{1-\gamma} \cdot \mathbb{E}_{s\sim \rho^{\pi_1}_\mu}[A^{\pi_1}(s,\pi_2)] -C_1 \cdot \sup_s \bigg[ \mathcal{L}_{G^2}(\pi_1, \pi_2; s) \bigg] \nonumber \\
	& - C_2 \cdot \sup_s \bigg[ W(\pi_2(a|s), \pi_1(a|s))^2 \bigg].
	\label{eq:objformalnew}
	\end{align}
\end{theorem}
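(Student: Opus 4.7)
The plan is to derive this as a direct corollary of Theorem~\ref{thm:wgpayoffbound}, the earlier payoff improvement bound formulated in terms of the geometric-mean-type term $\mathcal{L}_{WG}$, by simply trading off its two quadratic factors via an arithmetic-geometric mean inequality. The statement essentially repackages the bound so that both regularizers are genuinely quadratic in $\delta\theta$, which is what makes the surrogate amenable to a quadratic model and the conjugate-gradient step in Algorithm~\ref{alg:wppoalg}.

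First I would recall the conclusion of Theorem~\ref{thm:wgpayoffbound}, namely
\begin{equation*}
\eta_{\pi_2} \;\geq\; \eta_{\pi_1} + \frac{1}{1-\gamma}\,\mathbb{E}_{s\sim\rho^{\pi_1}_{\mu}}[A^{\pi_1}(s,\pi_2)] - C_1' \sup_s \mathcal{L}_{WG}(\pi_1,\pi_2;s) - C_2' \sup_s W(\pi_2(a|s),\pi_1(a|s))^2.
\end{equation*}
Next I would invoke the pointwise AM-GM bound~\eqref{eq:amgmineqlwg} that was already established in Section~\ref{ss:quadmodelsurr}: for every $s$ and any nonzero $\alpha$,
\begin{equation*}
\mathcal{L}_{WG}(\pi_1,\pi_2;s) \;\leq\; \tfrac{1}{2}\bigl(\alpha^2\, \mathcal{L}_{G^2}(\pi_1,\pi_2;s) + \alpha^{-2}\, W(\pi_2(a|s),\pi_1(a|s))^2\bigr).
\end{equation*}

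Taking the supremum over $s$ on both sides and using the subadditivity of $\sup$ over a sum of nonnegative functions yields
\begin{equation*}
\sup_s \mathcal{L}_{WG}(\pi_1,\pi_2;s) \;\leq\; \tfrac{\alpha^2}{2} \sup_s \mathcal{L}_{G^2}(\pi_1,\pi_2;s) + \tfrac{1}{2\alpha^2} \sup_s W(\pi_2(a|s),\pi_1(a|s))^2,
\end{equation*}
so multiplying by $-C_1' \leq 0$ reverses the inequality and, when substituted into the bound from Theorem~\ref{thm:wgpayoffbound}, the coefficient of $\sup_s W(\cdot,\cdot)^2$ becomes $-(C_2' + C_1'/(2\alpha^2))$ and the coefficient of $\sup_s \mathcal{L}_{G^2}$ becomes $-C_1'\alpha^2/2$. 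Identifying these with $C_2$ and $C_1$ gives precisely the target inequality $\eta_{\pi_2} \geq \mathcal{L}_{\pi_1}^{\sup}(\pi_2)$.

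There is no real obstacle: the nontrivial analytic content is already packed into Theorem~\ref{thm:wgpayoffbound} (which uses Lemmas~\ref{lemma:fullasumption}, \ref{lemma:ptbound}, \ref{lemma:lincombwass} and Theorems~\ref{thm:visitwass}, \ref{thm:gradtwoconcnt}) and into the AM-GM bound~\eqref{eq:amgmineqlwg}. The only care needed is in the direction of the inequality when multiplying by the negative coefficient $-C_1'$ and in noting that both $\mathcal{L}_{G^2}$ and $W^2$ are nonnegative so that $\sup$ distributes across their sum with an inequality in the right direction. One may optionally remark that the free parameter $\alpha$ can be tuned (or absorbed into $C_1,C_2$) to balance the two quadratic penalties, which is what the algorithm exploits in practice.
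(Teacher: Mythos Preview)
Your proposal is correct and matches the paper's approach exactly: the paper's proof is the one-line ``Applying Inequality~\eqref{eq:amgmineqlwg} into Theorem~\ref{thm:wgpayoffbound} gives the result,'' and you have spelled out precisely that substitution, including the sup/sign bookkeeping. No gaps.
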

\begin{proof}
Applying Inequality~\eqref{eq:amgmineqlwg} into Theorem~\ref{thm:wgpayoffbound} gives the result.
\end{proof}

$C_1$ and $C_2$ will be the corresponding regularization coefficients to the ones defined in Theorem~\ref{thm:wgpayoffbound}. Due to the arbitrary $\alpha$ used in the bounding process, no constrain governs the $C_1$ and $C_2$ coefficients. Therefore, $C_1$ and $C_2$ can be chosen without constraining each other.

\subsection{Proof of Theorem~\ref{thm:accuratevinepg}}\label{sec:proofdevine}
Essentially, DeVine rolls out a trajectory and computes the values of each state. Since the transition dynamics and the policy are deterministic, these values are exact. Then, it picks a perturbation state $s_t$ according to the visitation frequencies using importance sampling. A state-reset to $s_t$ is made, a $\sigma$-perturbed action is applied for a single time-step, followed by $\pi_1$ policy. This exactly produces $Q^{\pi_1}(s_t,a_t+\sigma)$. Then, $A^{\pi_1}(s_t,a_t+\sigma)$ can be computed by subtracting the value baseline. Finally, $A^{\pi_1}(s_t,a_t)=0$ and $A^{\pi_1}(s_t,a_t+\sigma)$ define a two-point linear $A^{\pi_1}(s_t,a)$ model with respect to the action. Parallelization can be used to have as many states of the first roll-out included in the estimator as desired. The parameter $\sigma$ acts as an exploration parameter and a finite difference to establish derivatives. While $\sigma\simeq 0$ can produce exact gradients, larger $\sigma$ can build stabler interpolations.

We restate Theorem~\ref{thm:accuratevinepg} below for reference and now prove it.

\begin{customthm}{4.1}
	Assume a finite horizon MDP with both deterministic transition dynamics $P$ and initial distribution $\mu$, with maximal horizon length of $H$. Define $K=H\cdot \dim(\mathcal{A})$, a uniform $\nu$, and $q(s;\sigma)=\pi_1(s) + \sigma \mathbf{e}_j$ in Algorithm 2 with $\mathbf{e}_j$ being the $j^{th}$ basis element for $\mathcal{A}$. If the $(j,t_k)$ pairs are sampled to exactly cover $\{1,\ldots,\dim(\mathcal{A})\} \times \{1,\ldots,H\}$, then we have
	\begin{equation}
	\lim_{\sigma\rightarrow 0}\nabla_{\pi_2} \mathbb{A}^{\pi_1}(\pi_2)\big|_{\pi_2=\pi_1} = \nabla_{\pi_2} \eta_{\pi_2}\big|_{\pi_2=\pi_1}.
	\end{equation}
\end{customthm}

\begin{proof}
	According to the advantage decomposition lemma, we have
	\begin{equation}
	\label{eq:advstep1}
	\nabla_{\pi_2} \eta_{\pi_2}\big|_{\pi_2=\pi_1} = \frac{1}{1-\gamma} \EE_{s\sim \rho_\mu^{\pi_1}}[\nabla_{\pi_2} A^{\pi_1}(s,\pi_2)]\big|_{\pi_2=\pi_1}.
	\end{equation} Due to the fact that the transition dynamics, policies $\pi_1$ and $\pi_2$, and initial state distribution are all deterministic, we can simplify Equation~\eqref{eq:advstep1} to 
	\begin{equation}
	\nabla_{\pi_2} \eta_{\pi_2}\big|_{\pi_2=\pi_1} = \sum_{t=0}^{H-1} \gamma^t \cdot \nabla_{\pi_2} A^{\pi_1}(s_t,\pi_2) \big|_{\pi_2=\pi_1},
	\label{eq:advstep2}
	\end{equation}
	where $s_t$ is the state after applying the policy $\pi_1$ for $t$ time-steps. We can use the chain rule to write 
	\begin{align}
	\nabla_{\pi_2} A^{\pi_1}(s_t,\pi_2) \big|_{\pi_2=\pi_1} &= \nabla_{\pi_2} A^{\pi_1}(s_t,\ba_t) \big|_{\substack{\ba_t = \pi_2(s_t)\\ \pi_2=\pi_1}}\nonumber\\
	&= \sum_{j=1}^{\dim(\mathcal{A})} \nabla_{\pi_2} a_t^{(j)} \cdot \frac{\partial}{\partial a_t^{(j)}} A^{\pi_1}(s_t,\ba_t) \big|_{\substack{\ba_t = \pi_2(s_t)\\ \pi_2=\pi_1}}.
	\label{eq:advstep3}
	\end{align}
	To recap, Equations~\eqref{eq:advstep2},~\eqref{eq:advstep2}, and~\eqref{eq:advstep3} can be summarized as 
	\begin{equation}
	\label{eq:advrhs}
	\nabla_{\pi_2} \eta_{\pi_2}\big|_{\pi_2=\pi_1} = \sum_{t=0}^{H-1} \gamma^t \sum_{j=1}^{\dim(\mathcal{A})} \nabla_{\pi_2} a_t^{(j)} \cdot \frac{\partial}{\partial a_t^{(j)}} A^{\pi_1}(s_t,\ba_t) \big|_{\substack{\ba_t = \pi_2(s_t)\\ \pi_2=\pi_1}}.
	\end{equation}
	
	Under the assumption that the $(j,t)$ pairs are sampled to exactly cover $\{1,\ldots,\dim(\mathcal{A})\} \times \{1,\ldots,H\}$, we can simplify the DeVine oracle to
	\begin{align}
	\label{eq:advlhsstep1}
	\mathbb{A}^{\pi_1}(\pi_2) = \frac{1}{K} \sum_{t=0}^{H-1} \sum_{j=1}^{\dim(\mathcal{A})} \bigg[ & \frac{\dim(\mathcal{A})\cdot\gamma^t}{\nu(t)} \cdot \frac{(\pi_2(s_t) -\pi_1(s_t))^T(q(s_t;j,\sigma) -\pi_1(s_t))}{(q(s_t;j,\sigma) -\pi_1(s_t))^T(q(s_t;j,\sigma) -\pi_1(s_t))} \nonumber\\
	& \cdot A^{\pi_1}(s_t, q(s_t;j,\sigma)) \bigg].
	\end{align} From the $q$ definition, we have $q(s_t;j,\sigma) -\pi_1(s_t) = \sigma \mathbf{e}_j$ and $(q(s_t;j,\sigma) -\pi_1(s_t))^T(q(s_t;j,\sigma) -\pi_1(s_t)) = \sigma^2$. Since $\nu$ is uniform (i.e., $\nu(t) = \frac{1}{H}$) and $K=H\cdot \dim(\mathcal{A})$, we can take the policy gradient of Equation~\eqref{eq:advlhsstep1} and simplify it into 
	\begin{align}
	\label{eq:advlhsstep2}
	\nabla_{\pi_2} \mathbb{A}^{\pi_1}(\pi_2)\big|_{\pi_2=\pi_1} = \sum_{t=0}^{H-1} \sum_{j=1}^{\dim(\mathcal{A})} \bigg[ \gamma^t \cdot \nabla_{\pi_2}  (\pi_2(s_t) -\pi_1(s_t))^T\mathbf{e}_j 
	\cdot \frac{A^{\pi_1}(s_t, q(s_t;j,\sigma))}{\sigma} \bigg].
	\end{align} Since, $A^{\pi_1}(s_t, \pi_1(s_t))=0$, we can write
	\begin{align}
	\lim_{\sigma\rightarrow 0}\frac{A^{\pi_1}(s_t, q(s_t;j,\sigma))}{\sigma} &= \lim_{\sigma\rightarrow 0}\frac{A^{\pi_1}(s_t, \pi_1(s_t) + \sigma \mathbf{e}_j) - A^{\pi_1}(s_t, \pi_1(s_t))}{\sigma} \nonumber\\
	&= \frac{\partial}{\partial a_t^{(j)}} A^{\pi_1}(s_t,\ba_t) \big|_{\ba_t = \pi_1(s_t)}.
	\label{eq:advlhsstep3}
	\end{align} Also, by the definition of the gradient, we can write 
	\begin{equation}
	\nabla_{\pi_2}  (\pi_2(s_t) -\pi_1(s_t))^T\mathbf{e}_j = \nabla_{\pi_2} a_t^{(j)}.
	\label{eq:advlhsstep4}
	\end{equation} Combining Equations~\eqref{eq:advlhsstep3} and~\eqref{eq:advlhsstep4}, and applying them to Equation~\eqref{eq:advlhsstep2}, yields
	\begin{align}
	\label{eq:advlhsstep5}
	\lim_{\sigma\rightarrow 0} \nabla_{\pi_2} \mathbb{A}^{\pi_1}(\pi_2)\big|_{\pi_2=\pi_1} = \sum_{t=0}^{H-1} \sum_{j=1}^{\dim(\mathcal{A})} \gamma^t \cdot \nabla_{\pi_2} a_t^{(j)} \cdot \frac{\partial}{\partial a_t^{(j)}} A^{\pi_1}(s_t,\ba_t) \big|_{\substack{\ba_t = \pi_2(s_t)\\ \pi_2=\pi_1}}.
	\end{align}Finally, the theorem can be obtained by comparing Equations~\eqref{eq:advlhsstep5} and~\eqref{eq:advrhs}.
	
\end{proof}

\subsection{The Discussion of Assumptions}\label{ss:allassumptions}
There are three key groups of assumptions made in the derivation of our policy improvement lower bound. First is the existence of $Q^\pi$-function Lipschitz constants. Second is the transition dynamics Lipschitz-continuity assumptions. Finally, we make an assumption about the stability of the transition dynamics. Next, we will discuss the meaning and the necessity of these assumptions.

\subsubsection{On the Existence of the $\text{Lip}(Q^\pi,a)$ Constant}\label{ss:qlipcontassumption}
The $\text{Lip}(Q^\pi,a)$ constant may be undefined when either the reward function or the transition dynamics are discontinuous. Examples of known environments with undefined $\text{Lip}(Q^\pi,a)$ constants include those with grazing contacts which define a discontinuous transition dynamics. In practice, even for environments that do not satisfy Lipschitz continuity assumptions, there are mitigating factors; practical $Q^\pi$ functions are reasonably narrow-bounded in a small trust-region neighborhood, and since we use non-vanishing exploration scales and trust regions, a bounded interpolation slope can still model the $Q$-function variation effectively. We should also note that a slightly stronger version of this assumption is frequently used in the context of Lipschitz MDPs \citep{pirotta2015policy,oatao17977,asadi2018lipschitz}. In practice, we have not found this to be a substantial limitation.

\subsubsection{The Transition Dynamics Lipschitz Continuity Assumption}\label{ss:lipcontassumptions}
Assumptions~\ref{eq:piassumption} and~\ref{eq:sassumption} of the main paper essentially represent the Lipschitz continuity assumptions of the transition dynamics with respect to actions and states, respectively. If the transition dynamics and the policy are deterministic, then these assumptions are exactly equivalent to the Lipschitz continuity assumptions. Assumptions~\ref{eq:piassumption} and~\ref{eq:sassumption} only generalize the Lipschitz continuity assumptions in a distributional sense. 

The necessity of these assumptions is a consequence of using metric measures for bounding errors. Traditional non-metric bounds force the use of full-support stochastic policies where all actions have non-zero probabilities (e.g., for the KL-divergence of two policies to be defined, TRPO needs to operate on full-support policies such as the Gaussian policies). In those analyses, since all policies share the same support, the next state distribution automatically becomes smooth and Lipschitz continuous with respect to the policy measure even if the transition dynamics were not originally smooth with respect to the input actions. However, metric measures are also defined for policies of non-overlapping support. To be able to provide closeness bounds for future state visitations of two similar policies with non-overlapping support, it becomes necessary to assume that close-enough actions or states must be yielding close-enough next states. In fact, this is a very common assumption in the framework of Lipschitz MDPs (See Section 2.2 of \citet{oatao17977}, Section 3 of \citet{asadi2018lipschitz}, and Assumption 1 of \citet{pirotta2015policy}).

\subsubsection{The Transition Dynamics Stability Assumption}\label{ss:dynamicstability}
Before moving to relax the $\gamma L_\mu < 1$ assumption, we will make a few definitions and restate the previous lemmas and theorems under these definitions. We define $L_{\mu_1,\mu_2,\pi}$ to be the infimum non-negative value that makes the equation $W(\PP(\mu_1, \pi), \PP(\mu_2, \pi)) = L_{\mu_1,\mu_2,\pi} W(\mu_1, \mu_2)$ hold. Similarly, $L_{\mu_1,\mu_2,\pi}$ is defined as the infimum non-negative value that makes the equation $W(\PP(\mu, \pi_1), \PP(\mu, \pi)) = L_{\mu,\pi_1,\pi_2} W(\pi_1(\cdot|\mu), \pi_2(\cdot|\mu))$ hold. For notation brevity, we will also denote $L_{\PP^{t}(\mu, \pi_1), \PP^{t}(\mu, \pi_2), \pi_2}$ and $L_{\PP^{t}(\mu, \pi_1), \pi_1, \pi_2}$ by $\tilde{L}_{\mu, \pi_1, \pi_2}^{(t)}$ and $\hat{L}_{\mu, \pi_1, \pi_2}^{(t)}$, respectively.

Under these definitions, Lemma~\ref{lemma:fullasumption} evolves into 
\begin{equation}
W(\PP(\mu_1, \pi_1), \PP(\mu_2, \pi_2)) \leq L_{\mu_1,\mu_2,\pi} W(\mu_1, \mu_2) + L_{\mu_1, \pi_1, \pi_2} W(\pi_1, \pi_2).
\end{equation}
We can apply a time-point recursion to this lemma and have
\begin{align}
&W(\PP(\PP^{t}(\mu, \pi_1), \pi_1), \PP(\PP^{t}(\mu, \pi_2), \pi_2))\nonumber\\
&\leq L_{\PP^{t}(\mu, \pi_1), \pi_1, \pi_2}W(\pi_1, \pi_2) + L_{\PP^{t}(\mu, \pi_1), \PP^{t}(\mu, \pi_2), \pi_2} W(\PP^{t}(\mu, \pi_1), \PP^{t}(\mu, \pi_2))
\end{align}
, which can be notationally simplified to
\begin{equation}
W(\PP^{t}(\mu, \pi_1), \PP^{t}(\mu, \pi_2)) \leq \hat{L}_{\mu, \pi_1, \pi_2}^{(t-1)} W(\pi_1, \pi_2) + \tilde{L}_{\mu, \pi_1, \pi_2}^{(t-1)} W(\PP^{t-1}(\mu, \pi_1), \PP^{t-1}(\mu, \pi_2)).
\end{equation}

These modifications lead Lemma~\ref{lemma:ptbound} to be updated accordingly into
\begin{equation}
W(\PP^t(\mu, \pi_1), \PP^t(\mu, \pi_2)) \leq C_{L;\mu,\pi_1,\pi_2}^{(t)}\cdot W(\pi_1, \pi_2)
\end{equation}
, where we have
\begin{equation}
C_{L;\mu,\pi_1,\pi_2}^{(t)} := \sum_{k=1}^t \hat{L}_{\mu, \pi_1, \pi_2}^{(t-k)} \prod_{i=1}^{k-1} \tilde{L}_{\mu, \pi_1, \pi_2}^{(t-i)}.
\end{equation}
By a simple change of variables, we can have the equivalent definition of
\begin{equation}
C_{L;\mu,\pi_1,\pi_2}^{(t)} := \sum_{k=1}^t \hat{L}_{\mu, \pi_1, \pi_2}^{(k-1)} \prod_{i=k+1}^{t-1} \tilde{L}_{\mu, \pi_1, \pi_2}^{(i)}.
\end{equation}

Now, we would replace the $\gamma L_\mu < 1$ assumption with the following assumption.

\textbf{The Transition Dynamics Stability Assumption}: A transition dynamics $P$ is called stable if and only if the induced $\{\tilde{L}_{\mu, \pi_1, \pi_2}^{(t)}\}_{t\geq 0}$ and $\{\hat{L}_{\mu, \pi_1, \pi_2}^{(t)}\}_{t\geq 0}$ sequences satisfy
\begin{equation}
C_L := \sup_{\mu, \pi_1, \pi_2, t} C_{L;\mu,\pi_1,\pi_2}^{(t)} = \sup_{\mu, \pi_1, \pi_2, t} \sum_{k=1}^t \hat{L}_{\mu, \pi_1, \pi_2}^{(k-1)} \prod_{i=k+1}^{t-1} \tilde{L}_{\mu, \pi_1, \pi_2}^{(i)} < \infty .
\end{equation}

To help understand which $\{\tilde{L}_{\mu, \pi_1, \pi_2}^{(t)}\}_{t\geq 0}$ and $\{\hat{L}_{\mu, \pi_1, \pi_2}^{(t)}\}_{t\geq 0}$ sequences can satisfy this assumption, we will provide some examples:

\begin{itemize}
\item Having $\forall t: \tilde{L}_{\mu, \pi_1, \pi_2}^{(t)} = c_1 > 1$, $\hat{L}_{\mu, \pi_1, \pi_2}^{(t)} = c_2$ violates the dynamics stability assumption.
\item Having $\forall t: \tilde{L}_{\mu, \pi_1, \pi_2}^{(t)} \leq 1, \hat{L}_{\mu, \pi_1, \pi_2}^{(t)} = O(\frac{1}{t^2})$ sequences satisfy the dynamics stability assumption.
\item Having $\sup_t \tilde{L}_{\mu, \pi_1, \pi_2}^{(t)} < 1$ guarantees the dynamics stability assumption.
\item Having $\forall t\geq t_0: \tilde{L}_{\mu, \pi_1, \pi_2}^{(t)} < 1$ guarantees the dynamics stability assumption no matter (1) how big $t_0$ is (as long as it is finite), or (2) how big the members of the finite set $\{\tilde{L}_{\mu, \pi_1, \pi_2}^{(t)} | t < t_0\}$ are.
\end{itemize}

If the dynamics stability assumption holds with a constant $C_L$, one can define a $\bar{L}_\mu$ constant such that $C_L = L_\pi \sum_{t=0}^{\infty}(\gamma\bar{L}_\mu)^t$.  Then, we can replace all the $L_\mu$ instances in the rest of the proof with the corresponding $\bar{L}_\mu$ constant, and the results will remain the same without any change of format.

The $\tilde{L}_{\mu, \pi_1, \pi_2}^{(t)}$ and  $\hat{L}_{\mu, \pi_1, \pi_2}^{(t)}$ constants can be thought as tighter versions of $L_\mu$ and $L_\pi$, but with dependency on $\pi_1$, $\pi_2$, $\mu$ and the time-point of application. Having $\gamma L_{\mu}<1$ is a sufficient yet unnecessary condition for this dynamics stability assumption to hold. Vaguely speaking, $L_\mu$ is an expansion rate for the state distribution distance; it tells you how much the divergence in the state distribution will expand after a single application of the transition dynamics. Having effective expansion rates that are larger than one throughout an infinite horizon trajectory is a sign of the system instability; some change in the initial state's distribution could cause the observations to diverge exponentially. While controlling unstable systems is an important and practical challenge, none of the existing reinforcement learning methods is capable of learning effective policies in such environments. Roughly speaking, having the dynamics stability assumption guarantees that the expansion rates cannot be consistently larger than one for infinite time-steps.

\newpage
\section{Implementation Details and Supplementary Results}

\subsection{Implementation Details for the Environment with Non-local Rewards}
\label{sec:impl-deta-envir}

We used the stable-baselines implementation \citep{stablebaselines}, which has the same structure as the original OpenAI baselines \citep{baselines} implementation. We used the ``ppo1'' variant since no hardware acceleration was necessary for automatic differentiation and MPI parallelization was practically efficient. TDPO, TRPO, and PPO used the same function approximation architecture with two hidden layers, 64 units in each layer, and the tanh activation. TRPO, PPO, DDPG, and TD3 used their default hyper-parameter settings.  We used the same method of network initialization as TRPO and PPO (Xavier initialization~\citep{glorot2010understanding} with default gains for the inner layers, and smaller gain for the output layer). TD3's baseline implementation was amended to support MPI parallelization just like TRPO, PPO, and DDPG. To produce the results for DDPG and TD3, we used hyper-parameter optimization both with and without the $\tanh$ final activation function that is common for DDPG and TD3 (this causes the difference in initial payoff in the figures). However, under no conditions were DDPG and TD3 able to solve these environments effectively, suggesting that the deterministic search used by TDPO is operating in a qualitatively different way than the stochastic policy optimization used by DDPG and TD3. Note that we made a thorough attempt to compare DDPG and TD3 fairly, including trying different initializations, different final layer scalings/activations, different network architectures, and performing hyperparameter optimization. Mini-batch selection was unnecessary for TDPO since optimization for samples generated by DeVine was fully tractable. The confidence intervals in all figures were generated using 1000 samples of the statistics of interest.


For designing the environment, we used \citet{baselines}'s pendulum dynamics and relaxed the torque thresholds to be as large as $40\rm~N~m$. The environment also had the same episode length of 200 time-steps. We used the reward function described by the following equations:
\begin{align}
R(s_t,a_t) &= C_R \cdot R(\tau) \cdot \mathbf{1}\{t=200\}\nonumber\\
R(\tau) &=  R_{\text{Freq}}(\tau) + R_{\text{Offset}}(\tau) + R_{\text{Amp}}(\tau)\nonumber\\
R_{\text{Freq}}(\tau) &= 0.1 \cdot \bigg[\sum_{f=f_{\min}}^{f_{\max}} \Theta^+_{\text{std}}(f)^2 -1\bigg]\nonumber\\
R_{\text{Offset}}(\tau) &= -\bigg|\frac{\Theta(f=0)}{200} - \theta_{\text{Target Offset}}\bigg| = -\bigg|\bigg(\frac{1}{200}\sum_{t=0}^{199}\theta_t\bigg)  - \theta_{\text{Target Offset}}\bigg|\nonumber\\
R_{\text{Amp}}(\tau) &=  h_{\text{piecewise}}\bigg(\frac{\Theta_{\text{AC}}}{\theta_{\text{Target Amp.}}} - 1\bigg)\nonumber\\
\end{align}
where
\begin{itemize}
	\item $\theta$ is the pendulum angle signal in the time domain.
	\item $\Theta$ is the magnitude of the Fourier transform of $\theta$.
	\item $\Theta^{+}$ is the same as $\Theta$ only for the positive frequency components.
	\item $\Theta_{\text{AC}}$ is the normalized oscillatory spectrum of $\Theta$:
	\begin{equation}
	\Theta_{\text{AC}}= \frac{\sqrt{{\Theta^{+}}^T{\Theta^{+}}}}{200}.
	\end{equation}
	\item $h_{\text{piecewise}}$ is a piece-wise linear error penalization function:
	\begin{equation}
	h_{\text{piecewise}}(x) = -x\cdot \mathbf{1}\{x\geq 0\} + 10^{-4} x \cdot\mathbf{1}\{-x\geq 0\} .
	\end{equation}
	\item $\Theta^+_{\text{std}}$ is the standardized positive amplitudes vector:
	\begin{equation}
	\Theta^+_{\text{std}} = \frac{\Theta^+}{\sqrt{{\Theta^{+}}^T{\Theta^{+}}} + 10^{-6}} .
	\end{equation}
	\item $C_R=1.3\times 10^{4}$ is a reward normalization coefficient and was chosen to yield approximately the same payoff as a null policy would yield in the typical pendulum environment of \citet{baselines}.
	\item $\theta_{\text{Target Offset}}$ is the target offset, $\theta_{\text{Target Amp.}}$ is the target amplitude, and $[f_{\min}, f_{\max}]$ is the target frequency range of the environment.
\end{itemize} 

All methods used 48 parallel workers. The machines used Xeon E5-2690-v3 processors and 256 GB of memory. Each experiment was repeated 25 times for each method, and each run was given 6 hours or 500 million samples to finish.

\subsection{Implementation Details for the Environment with Long Horizon and Resonant Frequencies}\label{legexpdetails}

For the robotic leg, we used exactly the same algorithms with the same parameters as described in Section~\ref{sec:impl-deta-envir} above.

We used the reward function described by the following equations:
\begin{equation}
R = R_{\text{posture}} + R_{\text{velocity}} + R_{\text{foot offset}} + R_{\text{foot height}} + R_{\text{ground force}} + R_{\text{knee height}} + R_{\text{on-air torques}}
\end{equation}
with
\begin{align}
R_{\text{posture}} &= -1 \times \bigg[\bigg|\theta_{\text{knee}} + \frac{\pi}{2}\bigg| + \bigg|\theta_{\text{hip}} + \frac{\pi}{4}\bigg|\bigg]\nonumber\\
R_{\text{velocity}} &= -0.08\times \big[|\omega_{\text{knee}}| + |\omega_{\text{hip}}|\big]\nonumber\\
R_{\text{foot offset}} &= -10\times \big[|x_{\text{foot}}| \cdot \mathbf{1}\{z_{\text{knee}} < 0.2\}\big]\nonumber\\
R_{\text{ground force}} &= -1\times \big[|f_z-m g| \cdot \mathbf{1}\{f_z < m g\} \cdot \mathbf{1}_{\text{touchdown}}\big]\nonumber\\
R_{\text{foot height}} &= -1 \times \big[|z_{\text{foot}}|\cdot\mathbf{1}_{\text{touchdown}}\big]\nonumber\\
R_{\text{knee height}} &= -15 \times \big[ \big|z_{\text{knee}}-z^{\text{target}}_{\text{knee}}\big| \cdot \mathbf{1}_{\text{touchdown}} \big]\nonumber\\
R_{\text{on-air torques}} &= -10^{-4} \times \big[ (\tau_{\text{knee}}^2+\tau_{\text{hip}}^2) \cdot (1-\mathbf{1}_{\text{touchdown}}) \big]
\end{align}
where 
\begin{itemize}
	\item $\theta_{\text{knee}}$ and $\theta_{\text{hip}}$ are the knee and hip angles in radians, respectively.
	\item $\omega_{\text{knee}}$ and $\omega_{\text{hip}}$ are the knee and hip angular velocities in radians per second, respectively.
	\item $x_{\text{foot}}$ and $z_{\text{foot}}$ are the horizontal and vertical foot offsets in meters from the desired standing point on the ground, respectively.
	\item $x_{\text{knee}}$ and $z_{\text{knee}}$ are the horizontal and vertical knee offsets in meters from the desired standing point on the ground, respectively.
	\item $f_z$ is the vertical ground reaction force on the robot in Newtons.
	\item $m$ is the robot weight in kilograms (i.e., $m=0.76$ kg).
	\item $g$ is the gravitational acceleration in meters per second squared.
	\item $\mathbf{1}_{\text{touchdown}}$ is the indicator function of whether the robot has ever touched the ground.
	\item $z^{\text{target}}_{\text{knee}}$ is a target knee height of 0.1~m.
	\item $\tau_{\text{knee}}$ and $\tau_{\text{hip}}$ are the knee and hip torques in Newton meters, respectively.
\end{itemize}

All methods used 72 full trajectories between each policy update, and each run was given 16 hours of wall time, which corresponded to almost 500 million samples. This experiment was repeated 75 times for each method. The empirical means of the discounted payoff values were reported without any performance or seed filtration. The same hardware as the non-local rewards experiments (i.e., Xeon E5-2690-v3 processors and 256 GB of memory) was used.

\subsection{Gym Suite Benchmarks}\label{ss:gymbench}

While it is clear that our deterministic policy gradient performs well on the new control environments we consider, one may naturally wonder about its performance on existing RL control benchmarks. To show our method's core capability, we ran the basic variant of our method on a suite of Gym environments and include four representative examples in Figure~\ref{fig:gym}. Broadly speaking, our method (TDPO) performs similar to others, but occasionally performs much better as seen in the Swimmer-v3 environment. Such an improvement is quite surprising since (a) we used the basic variant of our method without any line-search for the update coefficient or adaptively tuning the exploration scale, and (b) we did not perform any hyper-parameter optimization on our method. On the other hand, TRPO, PPO, and TD3 have been highly optimized in the prior work, and include many tuned adaptive processes internally. We speculate that many of these gym environments are reasonably robust to any injected noise, and this may mean that stochastic policy gradients can more rapidly and efficiently explore the policy space than in our new control environments. The experiments granted each method 72 parallel MPI workers for about 144 million steps (i.e., 2 million sequential steps), and the returns were averaged over 100 different seeds for each method. Since the computational cost of running both DDPG and TD3 was high, we only included TD3 since it was shown to outperform DDPG in earlier benchmarks.

\begin{figure}[h]
	\centering
	\includegraphics[width=0.95\linewidth]{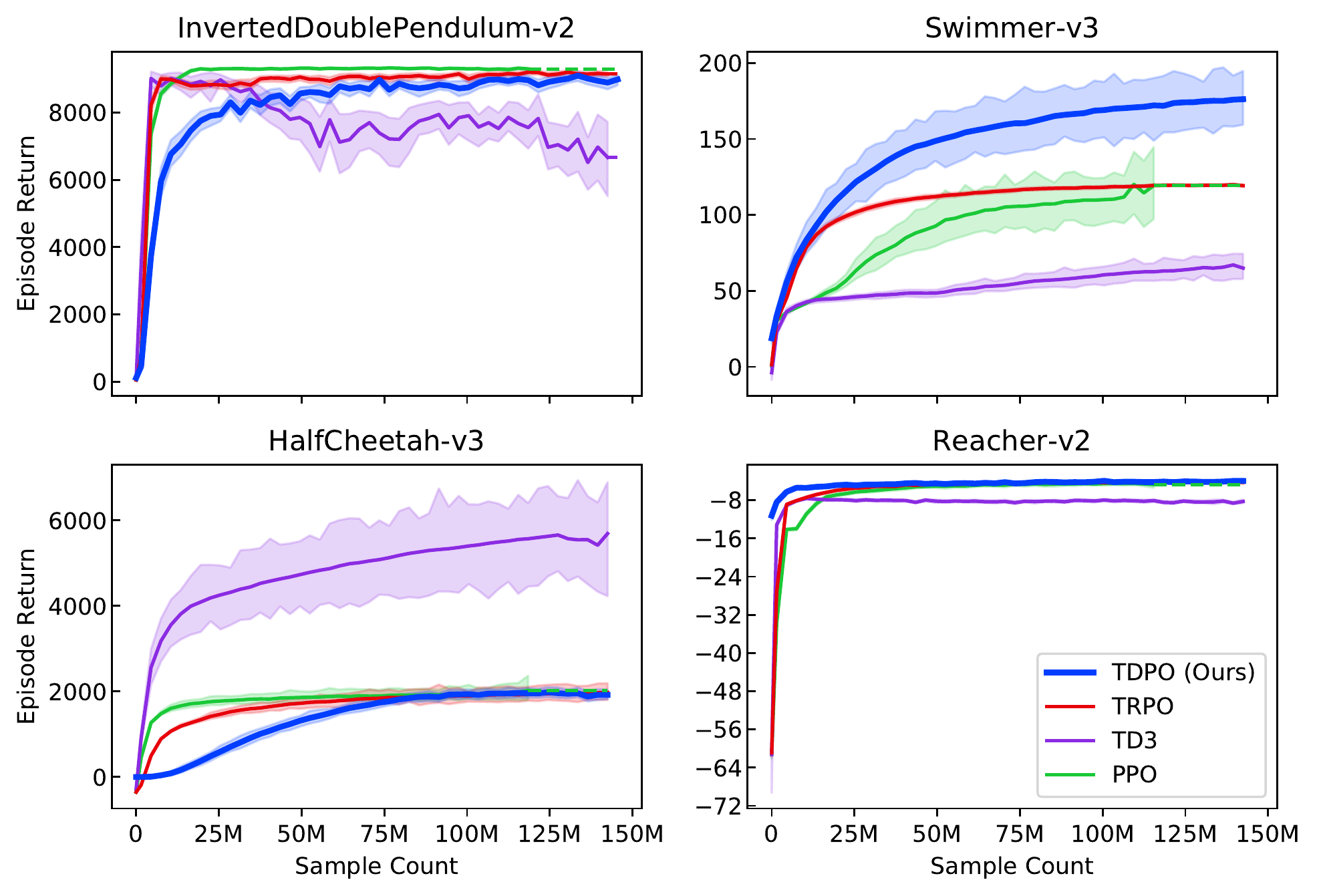}
	\caption{Results for the gym suite benchmarks. The basic variant of our method (TDPO with no line search processes or adaptive exploration scale parameters) with no hyper-parameter optimization was used in this experiment. Nevertheless, TRPO, PPO, and TD3 have many code-level optimizations and were meticulously tuned in the prior work. Still, TDPO managed to perform similar to other methods and even outperform the other methods in the Swimmer-v3 environment. While using the advanced variant of our method or performing hyper-parameter optimization can certainly improve our method further, we avoided them to showcase our core policy gradient method's capabilities.}
	\label{fig:gym}
	\vspace{-4mm}
\end{figure}


\subsection{Running Time Comparison}
Figure~\ref{fig:runningtime} depicts a comparison of each method's running time per million steps. These plots show the combination of both the simulation (i.e., environment sampling) and the optimization (i.e., computing the policy gradient and running the conjugate gradient solver) time. It is clear that our method (TDPO) is generally faster than the other algorithms. This is mainly due to the computational efficiency of the DeVine gradient estimator, which summarizes two full trajectories in a single state-action-advantage tuple which can significantly reduce the optimization time. That being said, these relative comparisons could vary to a large extent (1) under different processor architectures, (2) with more (or less) efficient implementations, or (3) when running environments whose simulation time constitutes a significantly larger (or smaller) portion of the total running time.
\begin{figure}[!ht]
	\centering
	\includegraphics[width=1\linewidth]{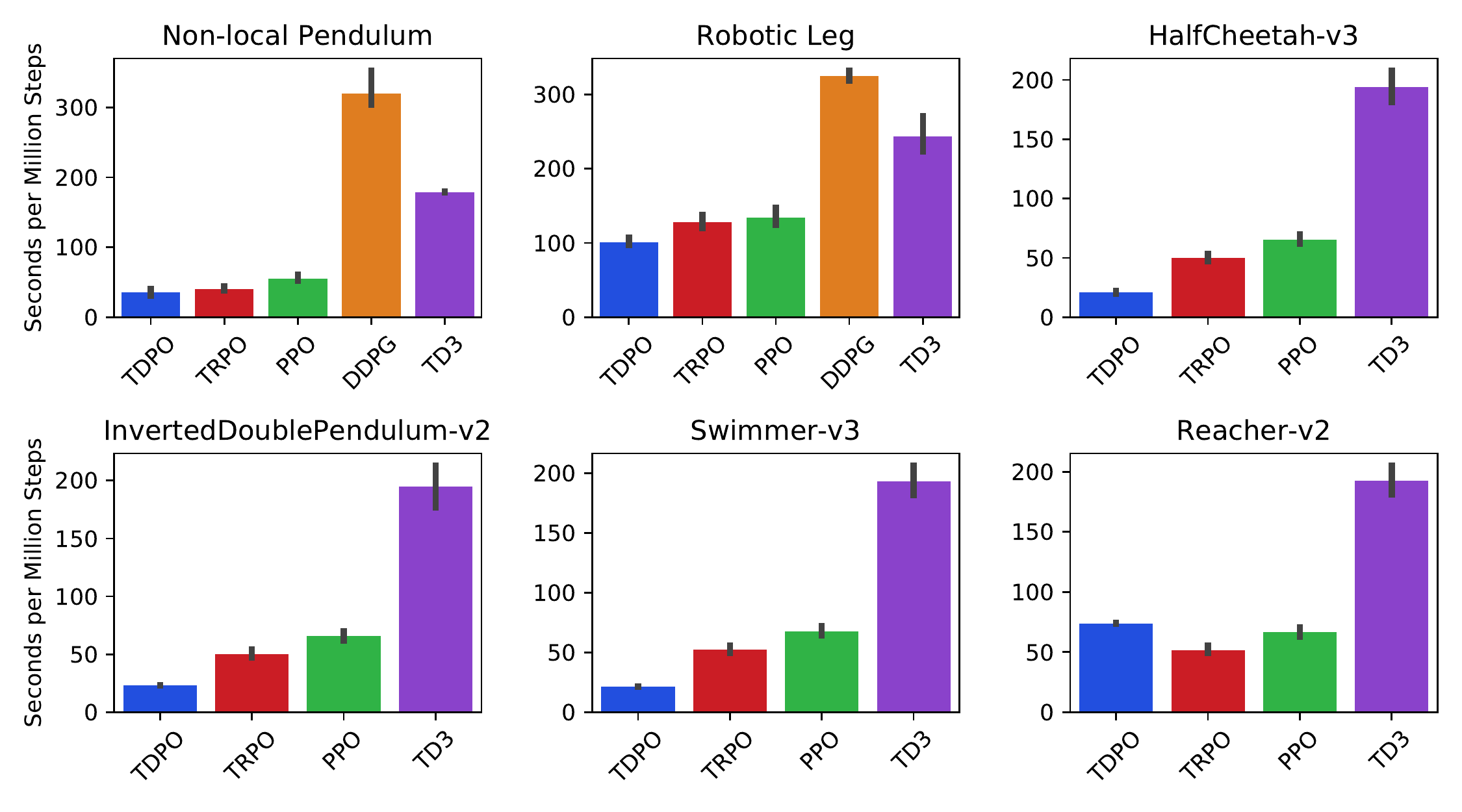}
	\caption{Training time comparison in different environments. The lower the bar, the faster the method. The vertical axis shows the time in seconds needed to consume one million state-action pairs for training. Each environment was shown separately in a different subplot.}
	\label{fig:runningtime}
\end{figure}

\subsection{Other Swinging Pendulum Variants}

Multiple variants of the pendulum with non-local rewards were used, each with different frequency targets and the same reward structure. Table~\ref{tab:nlrpendtarget} summarizes the target characteristics of each variant. The main variant was shown in the paper. Figures~\ref{fig:nlrv9}, ~\ref{fig:nlrv4}, ~\ref{fig:nlrv3},~\ref{fig:nlrv8},~\ref{fig:nlrv1},~\ref{fig:nlrv6},~\ref{fig:nlrv5}, and ~\ref{fig:nlrv7} show similar results for the second to ninth variants. To focus on our method's ability to solve all these variants efficiently, we only show the performance of our method (TDPO) on all variants in Figure~\ref{fig:nlrtdpo}. Overall, we found TRPO, PPO, DDPG, and TD3 to occasionally find the correct offset. They either excited the natural or the maximum (not the desired) frequency of the pendulum, but they were not able to drive the desired frequency and amplitude. TDPO was able to achieve the desired oscillations (and thus high rewards) in all variants.

\renewcommand{\arraystretch}{1.}
\begin{table}[h]
	\centering	
	\begin{tabular}{c|c|c|c}
		Pendulum Variant & Desired Frequency & Desired Offset & Desired Amplitude \\
		\hline
		Main & 1.7--2 Hz & 0.524 rad & 0.28 rad \\ 
		Second & 0.5--0.7 Hz & 1.571 rad & 1.11 rad \\ 
		Third & 2.5--3 Hz & 0.524 rad & 0.28 rad \\ 
		Fourth & 2--2.4 Hz & 0.785 rad & 0.28 rad \\ 
		Fifth & 2--2.4 Hz & 1.571 rad & 0.74 rad \\ 
		Sixth & 2--2.4 Hz & 0.524 rad & 0.28 rad \\ 
		Seventh & 2--2.4 Hz & 1.047 rad & 0.28 rad \\ 
		Eighth & 2--2.4 Hz & 0.785 rad & 0.74 rad \\ 
		Ninth & 2--2.4 Hz & 1.309 rad & 0.28 rad \\ 
	\end{tabular}%
	\caption{The target oscillation characteristics defining different pendulum swinging environments. }
	\label{tab:nlrpendtarget}%
\end{table}%

\insertnlrfig{9}{second}
\insertnlrfig{4}{third}
\insertnlrfig{3}{fourth}
\insertnlrfig{8}{fifth}
\insertnlrfig{1}{sixth}

\begin{figure}
	
	\begin{minipage}[c]{1\textwidth}
		\centering
		\subfloat[]{
			\includegraphics[width=1\linewidth]{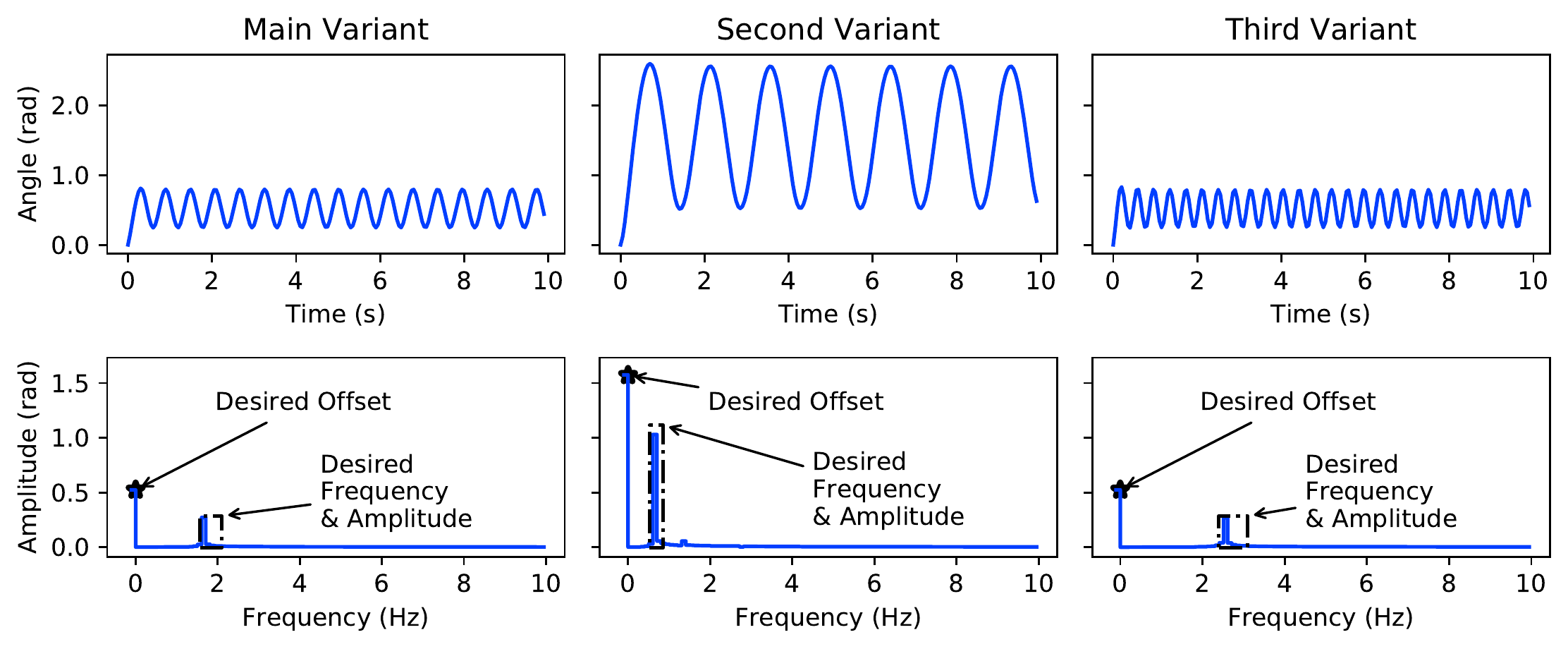}
			\label{fig:nlrtraincurvesv1}
		}
	\end{minipage}%
	\\
	\begin{minipage}[c]{1\textwidth}
		\centering
		\subfloat[]{
			\includegraphics[width=1\linewidth]{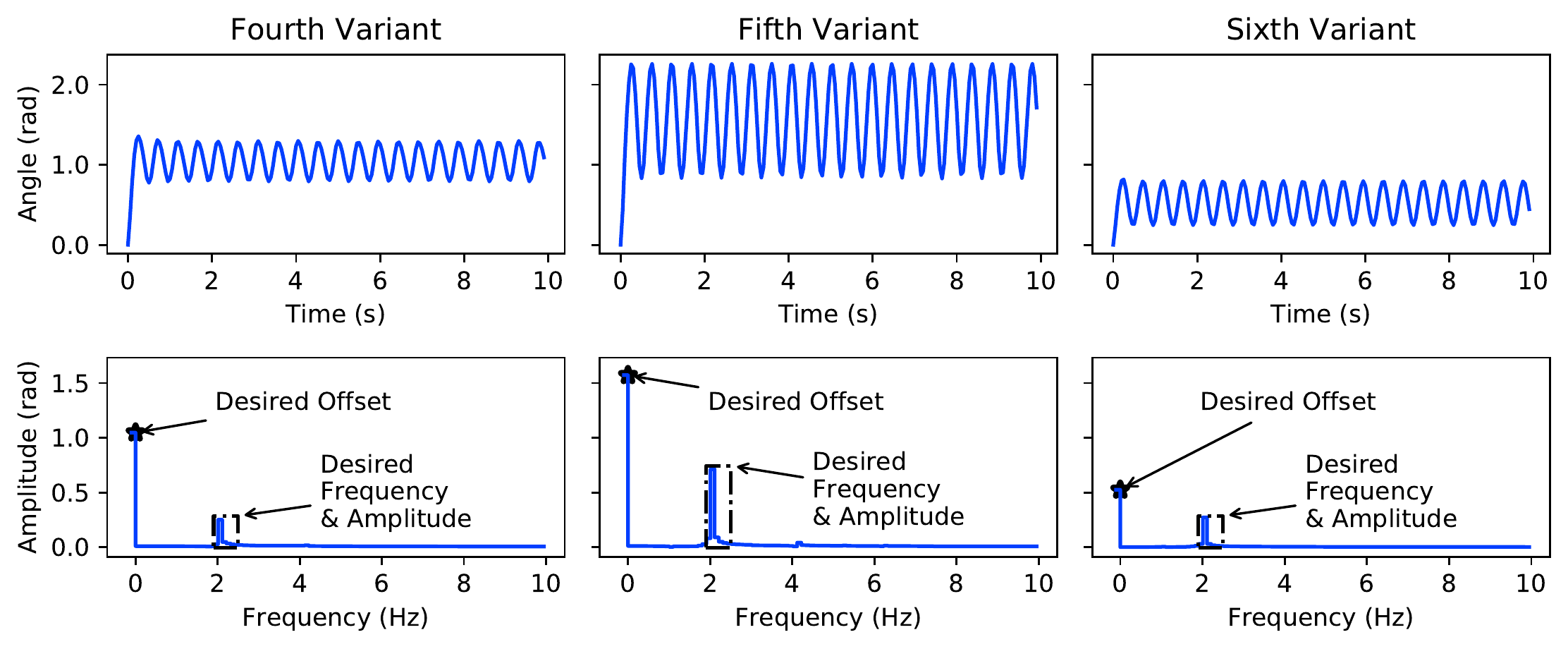}
			\label{fig:nlrtrajsv1}
		}
	\end{minipage}%
	\\
	\centering
	\begin{minipage}[c]{1\textwidth}
		\centering
		\subfloat[]{
			\includegraphics[width=1\linewidth]{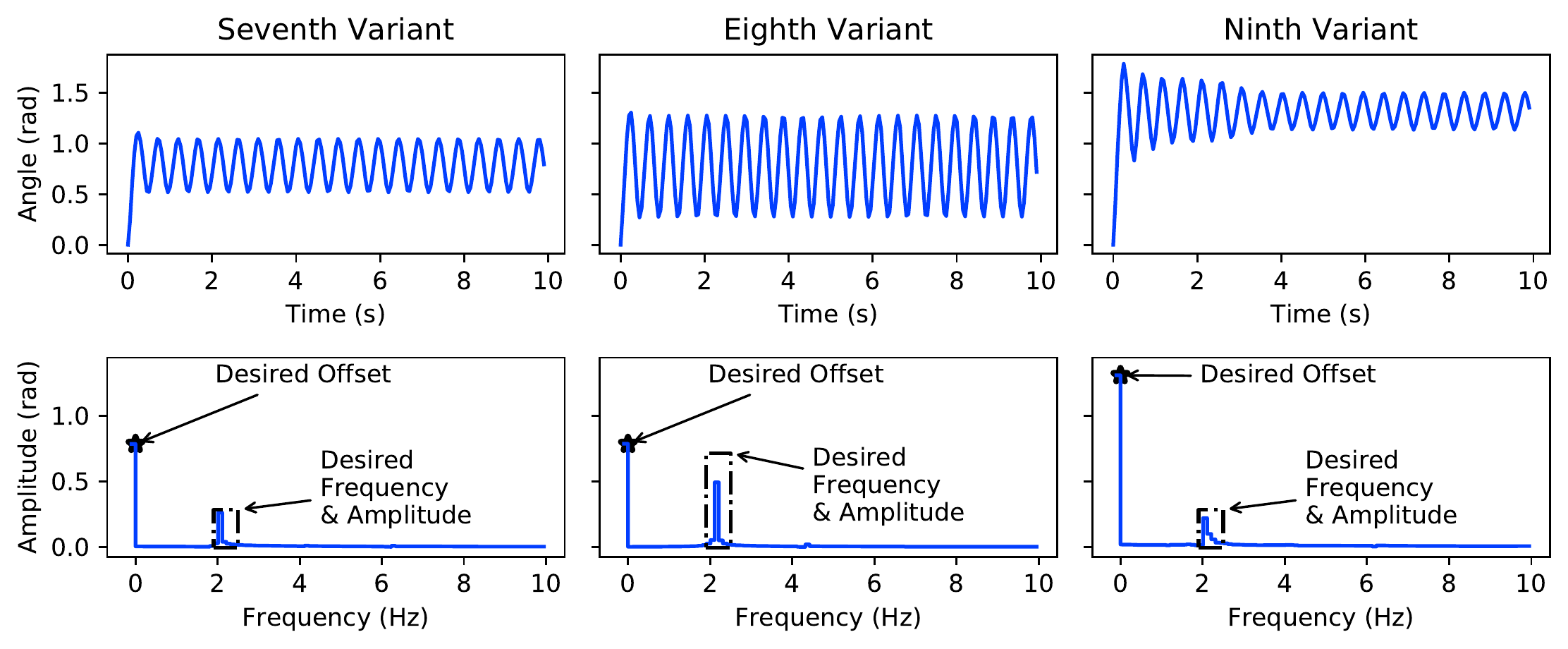}
			\label{fig:nlrtrajsv2}
		}
	\end{minipage}%
	\caption{Time and frequency domain trajectories for our method (TDPO) on multiple variants of the simple pendulum with non-local rewards. (a) The high-reward trajectories for the first group of variants, (b) the high-reward trajectories for the second group of variants, and (c) the high-reward trajectories for the third group of variants. Target values of oscillation frequency, amplitude, and offset were annotated in the frequency domain plots.}
	\label{fig:nlrtdpo}
\end{figure}

\insertnlrfig{6}{seventh}
\insertnlrfig{5}{eighth}
\insertnlrfig{7}{ninth}

\subsection{Notes on How to Implement TDPO}
In short, our method (TDPO) is structured in the same way TRPO was structured; both TDPO and TRPO use policy gradient estimation, and a conjugate-gradient solver utilizing a Hessian-vector product machinery. On the other hand, there are some algorithmic differences that distinguish the basic variant of TDPO from TRPO. First of all, TRPO uses line-search heuristics to adaptively find the update scale; no such heuristics are applied in the basic variant of TDPO. Second, TDPO uses the DeVine advantage estimator, which requires storing and reloading pseudo-random generator states. Finally, the Hessian-vector product machinery used in TDPO computes Wasserstein-vector products, which is slightly different from those used in TRPO. The hyper-parameter settings and notes on how to choose them were discussed in Sections~\ref{sec:impl-deta-envir},~\ref{sec:c1c2}, and~\ref{legexpdetails}. We will describe how to implement TDPO, and focus on the subtle differences between TDPO and TRPO next.

As for the state-reset capability, our algorithm does not require access to a reset function for arbitrary states. Instead, we only require to be able to start from the prior trajectory's initial state. Many environments, including the Gym environments, instantiate their own pseudo-random generators and only utilize that pseudo-random generator for all randomized operations. This facilitates a straightforward implementation of the DeVine oracle; in such environments, implementing an arbitrary state-reset functionality is unnecessary, and only reloading the pseudo-random generator to its configuration prior to the trajectory would suffice. In other words, the DeVine oracle can store the initial configuration of the pseudo-random generator before asking for a trajectory reset and then start sampling. Once the main trajectory is finished, the pseudo-random generator can be reloaded, thus producing the same initial state upon a reset request. Other time-step states can then be recovered by applying the same proceeding action sequence.

To optimize the quadratic surrogate, the conjugate gradient solver was used. Implementing the conjugate gradient algorithm is fairly straightforward, and is already included in many common automatic differentiation libraries. The conjugate gradient solver is perfect for situations where (1) the Hessian matrix is larger than can efficiently be stored in the memory, and (2) the Hessian matrix includes many nearly identical eigenvalues. Both of these conditions apply well for TDPO, as well as for TRPO. Instead of requiring the full Hessian matrix to be stored, the conjugate gradient solver only requires a Hessian-vector product machinery $v\rightarrow Hv$, which must be specifically implemented for TDPO. Our surrogate function can be viewed as $$\mathcal{L}(\dth) = g^T \dth + \frac{C'_2}{2}\dth^T H \dth$$
where the Hessian matrix can be defined as
\begin{align}
	H = H_2 + \frac{C_1'}{C_2'} H_1, \qquad H_1 := \nabla^2_{\theta'} \mathbb{E}_{s\sim \rho^{\pi_k}_\mu} \bigg[ \mathcal{L}_{G^2}(\pi', \pi_k; s) \bigg], \nonumber
\end{align}
\begin{align}
	H_2 := \nabla^2_{\theta'} \mathbb{E}_{s\sim \rho^{\pi_k}_\mu} \bigg[ W(\pi'(a|s), \pi_k(a|s))^2 \bigg].
\end{align}


\begin{algorithm}[t]
	\caption{Wasserstein-Vector-Product Machinery}
	\begin{algorithmic}[1]
		\label{alg:wvp}
		\REQUIRE Current Policy $\pi_1$ with parameters $\theta_1$.
		\REQUIRE The vector $v$ with the same dimensions as  $\theta_1$.
		\REQUIRE An observation $s$.
		
		\STATE Compute the action for the observation $s$ with $|A|$ elements.
		\begin{equation}
		a_{|A|\times 1}:=\begin{bmatrix}
		\pi^{(1)}(s)\\
		\vdots \\
		\pi^{(|A|)}(s)\\
		\end{bmatrix} .
		\end{equation}
		This vector should be capable of propagating gradients back to the policy parameters when used in automatic differentiation software.
		\STATE Define $t$ to be a constant vector with the same shape as $a$. It could be populated with any values such as all ones.
		\STATE Define the scalar $\tilde{a} := a^T t$.
		\STATE Using back-propagation, find the gradient 
		\begin{equation}
		\nabla_\theta \tilde{a} = \sum_{i=1}^{|A|} t_{i} \nabla_\theta a_{i} = \sum_{i=1}^{|A|} t_{i} \begin{bmatrix} \frac{\partial a_i}{\partial \theta_1} &\cdots& \frac{\partial a_i}{\partial \theta_{|\Theta|}} \end{bmatrix} .
		\end{equation}
		\STATE Compute the following dot-product:
		\begin{equation}
		\ip{\nabla_\theta \tilde{a}, v} = (\sum_{i=1}^{|A|} t_{i} \cdot \frac{\partial a_{i}}{\partial \theta_1})\cdot v_1 + \cdots + (\sum_{i=1}^{|A|} t_{i} \cdot \frac{\partial a_{i}}{\partial \theta_{|\Theta|}})\cdot v_{|\Theta|}.
		\end{equation}
		\STATE Using automatic differentiation, take the gradient w.r.t. the $t$ vector.
		\begin{equation}
		\tilde{a}_{\theta,v} := \nabla_t \ip{\nabla_\theta \tilde{a}, v} = \begin{bmatrix}
		\frac{\partial a_{1}}{\partial \theta_1} \cdot v_1 + \cdots + \frac{\partial a_{1}}{\partial \theta_{|\Theta|}} \cdot v_{|\Theta|}\\
		\vdots \\
		\frac{\partial a_{|A|}}{\partial \theta_1} \cdot v_1 + \cdots + \frac{\partial a_{|A|}}{\partial \theta_{|\Theta|}} \cdot v_{|\Theta|}\\
		\end{bmatrix}
		= \begin{bmatrix}
		\frac{\partial a_{1}}{\partial \theta_1} & \cdots & \frac{\partial a_{1}}{\partial \theta_{|\Theta|}} \\
		\vdots & & \vdots\\
		\frac{\partial a_{|A|}}{\partial \theta_1} & \cdots & \frac{\partial a^{|A|}}{\partial \theta_{|\Theta|}} \\
		\end{bmatrix} v
		\end{equation}
		\STATE Compute the dot product $\ip{\tilde{a}_{\theta,v}, \tilde{a}}$.
		\STATE Using back-propagation, take the gradient w.r.t. $\theta$, and return it as the gain-vector-product.
		\begin{equation}
		\nabla_\theta \ip{\tilde{a}_{\theta,v}, \tilde{a}} = \begin{bmatrix}
		\frac{\partial a_{1}}{\partial \theta_1} & \cdots & \frac{\partial a_{1}}{\partial \theta_{|\Theta|}} \\
		\vdots & & \vdots\\
		\frac{\partial a_{|A|}}{\partial \theta_1} & \cdots & \frac{\partial a^{|A|}}{\partial \theta_{|\Theta|}} \\
		\end{bmatrix}^T \begin{bmatrix}
		\frac{\partial a_{1}}{\partial \theta_1} & \cdots & \frac{\partial a_{1}}{\partial \theta_{|\Theta|}} \\
		\vdots & & \vdots\\
		\frac{\partial a_{|A|}}{\partial \theta_1} & \cdots & \frac{\partial a^{|A|}}{\partial \theta_{|\Theta|}} \\
		\end{bmatrix} v
		\end{equation}

	\end{algorithmic}
\end{algorithm}



\begin{algorithm}
	\caption{Sensitivity-Vector-Product Machinery}
	\begin{algorithmic}[1]
		\label{alg:gvp2}
		\REQUIRE Current Policy $\pi_1$ with parameters $\theta_1$.
		\REQUIRE The vector $v$ with the same dimensions as  $\theta_1$.
		\REQUIRE An observation $s$.
		
		\STATE Compute the action to observation Jacobian matrix
		\begin{equation}
		J_{|A|\times |S|}:=\begin{bmatrix}
		\frac{\partial \pi^{(1)}(s)}{\partial s^{1}} & \cdots & \frac{\partial \pi^{(1)}(s)}{\partial s^{(|S|)}}\\
		\vdots & & \vdots \\
		\frac{\partial \pi^{(|A|)}(s)}{\partial s^{(1)}} & \cdots & \frac{\partial \pi^{(|A|)}(s)}{\partial s^{|S|}}\\
		\end{bmatrix} .
		\end{equation}
		This can either be done using finite-differences in the observation using
		\begin{equation}
		\frac{\partial \pi^{(i)}(s)}{\partial s^{(j)}} \simeq \frac{\pi^{(i)}(s+ds\cdot \mathbf{e_j}) - \pi^{(i)}(s)}{ds}
		\end{equation} (which may be a bit numerically inaccurate), or using automatic differentiation. In any case, this matrix should be a parameter tensor capable of propagating gradients back to the parameters when used in automatic differentiation software.
		
		\STATE Define $\tilde{J}$ to be the vectorized (i.e., reshaped into a column) $J$ matrix, with $|AS|=|A|\times|S|$ rows and one column.
		\STATE Define $t$ to be a constant vector with the same shape as $\tilde{J}$. It could be populated with any values such as all ones.
		\STATE Define the scalar $J_t := \tilde{J}^T t$.
		\STATE Using back-propagation, find the gradient 
		\begin{equation}
		\nabla_\theta J_t = \sum_{i=1}^{|A|} \sum_{j=1}^{|S|} t_{i,j} \nabla_\theta J_{i,j} = \sum_{i=1}^{|A|} \sum_{j=1}^{|S|} t_{i,j} \begin{bmatrix} \frac{\partial J_{i,j}}{\partial \theta_1} &\cdots& \frac{\partial J_{i,j}}{\partial \theta_{|\Theta|}} \end{bmatrix} .
		\end{equation}
		\STATE Compute the following dot-product.
		\begin{equation}
		\ip{\nabla_\theta J_t, v} = (\sum_{i=1}^{|A|} \sum_{j=1}^{|S|} t_{i,j} \cdot \frac{\partial J_{i,j}}{\partial \theta_1})\times v_1 + \cdots + (\sum_{i=1}^{|A|} \sum_{j=1}^{|S|} t_{i,j} \cdot \frac{\partial J_{i,j}}{\partial \theta_{|\Theta|}})\times v_{|\Theta|}
		\end{equation}
		\STATE Using automatic differentiation, take the gradient w.r.t. the $t$ vector.
		\begin{equation*}
		(\nabla_\theta J) v := \nabla_t \ip{\nabla_\theta J_t, v} = \begin{bmatrix}
		\frac{\partial J_{1,1}}{\partial \theta_1} \cdot v_1 + \cdots + \frac{\partial J_{1,1}}{\partial \theta_{|\Theta|}} \cdot v_{|\Theta|}\\
		\vdots \\
		\frac{\partial J_{|A|,|S|}}{\partial \theta_1} \cdot v_1 + \cdots + \frac{\partial J_{|A|,|S|}}{\partial \theta_{|\Theta|}} \cdot v_{|\Theta|}\\
		\end{bmatrix}
		\end{equation*}
		\begin{equation}
		= \begin{bmatrix}
		\frac{\partial J^{(1,1)}}{\partial \theta_1} & \cdots & \frac{\partial J^{(1,1)}}{\partial \theta_{|\Theta|}} \\
		\vdots & & \vdots\\
		\frac{\partial J^{(|A|,|S|)}}{\partial \theta_1} & \cdots & \frac{\partial J^{(|A|,|S|)}}{\partial \theta_{|\Theta|}} \\
		\end{bmatrix} v
		\end{equation}
		
		\STATE Reshape $(\nabla_\theta J) v$ into a column vector and name it $\tilde{J}_{\theta,v}$.
		\STATE Compute the dot product $\ip{\tilde{J}_{\theta,v}, \tilde{J}}$.
		\STATE Using back-propagation, take the gradient w.r.t. $\theta$, and return it as the gain-vector-product.
		\begin{equation}
		\nabla_\theta \ip{\tilde{J}_{\theta,v}, \tilde{J}} = \begin{bmatrix}
		\frac{\partial J^{(1,1)}}{\partial \theta_1} & \cdots & \frac{\partial J^{(1,1)}}{\partial \theta_{|\Theta|}} \\
		\vdots & & \vdots\\
		\frac{\partial J^{(|A|,|S|)}}{\partial \theta_1} & \cdots & \frac{\partial J^{(|A|,|S|)}}{\partial \theta_{|\Theta|}} \\
		\end{bmatrix}^T \begin{bmatrix}
		\frac{\partial J^{(1,1)}}{\partial \theta_1} & \cdots & \frac{\partial J^{(1,1)}}{\partial \theta_{|\Theta|}} \\
		\vdots & & \vdots\\
		\frac{\partial J^{(|A|,|S|)}}{\partial \theta_1} & \cdots & \frac{\partial J^{(|A|,|S|)}}{\partial \theta_{|\Theta|}} \\
		\end{bmatrix} v
		\end{equation}
		
	\end{algorithmic}
\end{algorithm}


In order to construct a Hessian-vector product machinery $v\rightarrow Hv$, one can design an automatic-differentiation procedure that returns the Hessian-vector product. Many automatic-differentiation packages already include functionalities that can provide a Hessian-vector product machinery of a given scalar loss function without computing the Hessian matrix. This can be used to implement the Hessian-vector product machinery in a straightforward manner; one only needs to provide the scalar quadratic terms of our surrogate and would obtain the Hessian-vector product machinery in return. On the other hand, this may not be the most computationally efficient approach, as our problem exhibits a more specific structure. Alternatively, one can implement a more elaborate and specifically designed Hessian-vector product machinery by following these three steps:

\begin{itemize}
	\item Compute the Wasserstein-vector product $v\rightarrow H_2v$ according to Algorithm~\ref{alg:wvp}.
	\item Compute the Sensitivity-vector product $v\rightarrow H_1v$ according to Algorithm~\ref{alg:gvp2}.
	\item Return the weighted sum of $H_1v$ and $H_2v$ as the final Hessian-vector product $Hv$.
\end{itemize}

One may also need to add a conjugate gradient damping to the conjugate gradient solver (i.e., return $\beta v + Hv$ for some small $\beta$ as opposed to returning $Hv$ purely), which is also done in the TRPO method. This may be important when the number of policy parameters is much larger than the sample size. Setting $\beta=0$ may yield poor numerical stability if $H$ had small eigenvalues, and setting large $\beta$ will cause the conjugate gradient optimizer to mimic the gradient descent optimizer by making updates in the same direction as the gradient. The optimal conjugate gradient damping may depend on the problem and other hyper-parameters such as the sample size. However, it can easily be picked to be a small value that ensures numerical stability.

Once the conjugate gradient solver returned the optimal update direction $H^{-1}g$, it must be scaled down by a factor of $C_2'$ (i.e., $\dth^* = H^{-1}g/C_2'$). If $\dth^*$ satisfied the trust region criterion (i.e., $\frac{1}{2} \dth^{*^T} H \dth^* \leq \delta_{\max}^2$), then one can make the parameter update (i.e., $\theta_{\text{new}} = \theta_{\text{old}} + \dth^*$) and proceed to the next iteration. Otherwise, the proposed update $\dth^*$ must be scaled down further, namely by $\alpha$, such that the trust region condition would be satisfied (i.e., $\frac{1}{2} (\alpha \delta\theta^*)^T H (\alpha \delta\theta^*) = \delta_{\max}^2$) before making the update $\theta_{\text{new}} = \theta_{\text{old}} + \alpha\dth^*$.

\subsection{Manual Choice of $C_1$ and $C_2$}\label{sec:c1c2}
Since the TDPO algorithm operates using the metric Wasserstein distance, thinking about how normalizing actions and rewards affect the corresponding optimization objective builds insight into how to set these coefficients properly. Say we use the same dynamics, only the new actions are scaled up by a factor of $\beta$, and the rewards are scaled up by a factor of $\alpha$:
\begin{equation}
a_{\text{new}} = \beta\cdot a_{\text{old}}\qquad r_{\text{new}}=\alpha\cdot r_{\text{old}}.
\end{equation}
If the policy function approximation class remained the same, the policy gradient would be scaled by a factor of $\frac{\alpha}{\beta}$ (i.e., $\frac{\partial \eta_{\text{new}}}{\partial a_{\text{new}}} = \frac{\alpha}{\beta} \cdot \frac{\partial \eta_{\text{old}}}{\partial a_{\text{old}}}$). Therefore, one can easily show that the corresponding new regularization coefficient and trust region sizes can be obtained by
\begin{equation}
C_{\text{new}}=\frac{\alpha}{\beta^2} \cdot C_{\text{old}}
\end{equation}
and
\begin{equation}
\delta_{\max}^{\text{new}} = \beta\cdot  \delta_{\max}^{\text{old}}.
\end{equation}

We used equal regularization coefficients (i.e., $C_1=C_2=C$), and the process to choose them can be summarized as follows: (1) Define $C=3600\cdot\alpha\cdot\beta^{-2}$, $\delta_{\max} = \beta/600$ and $\sigma_q=\beta/60$ (where $\sigma_q$ is the action disturbance parameter used for DeVine), (2) using prior knowledge or by trial and error determine appropriate action and reward normalization coefficients. The reward normalization coefficient $\alpha$ was sought to be approximately the average per-step discounted reward difference between a null policy and an optimal policy. We used a reward scaling value of $\alpha=5$ and an action scaling value of $\beta=5$ for the non-locally rewarded pendulum and $\beta=1.5$ for the long-horizon legged robot. Both environments had a per-step discounted reward of approximately $-5$ for a null policy and non-positive rewards, justifying the choice of $\alpha$.

\subsection{Implementation Details for the Practical Training of the Robotic Leg}

\begin{figure}[t]
		\includegraphics[width=0.19\linewidth]{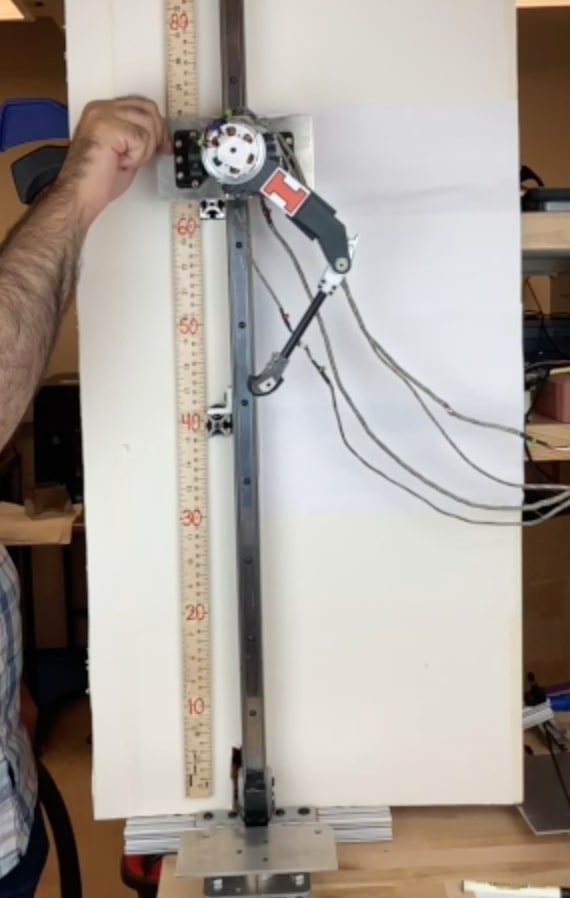}
		\includegraphics[width=0.19\linewidth]{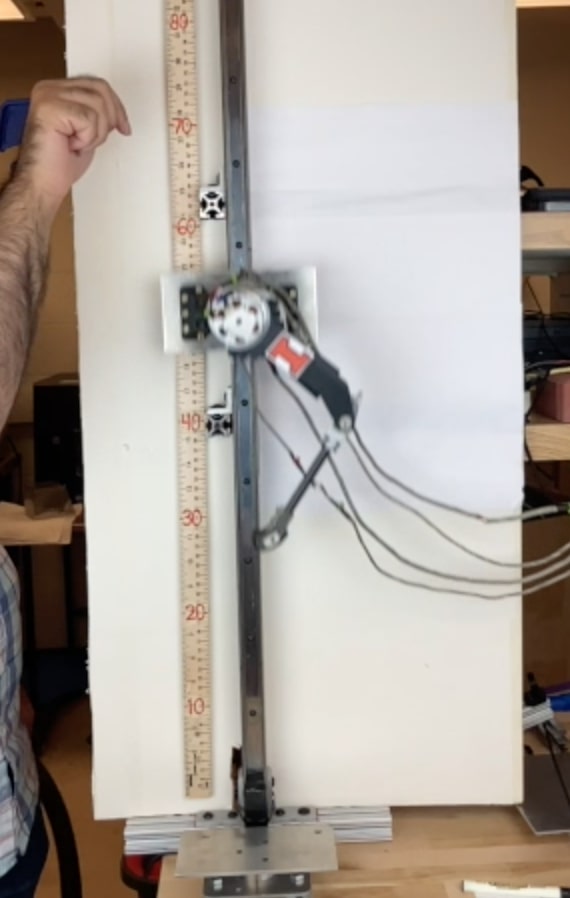}
		\includegraphics[width=0.19\linewidth]{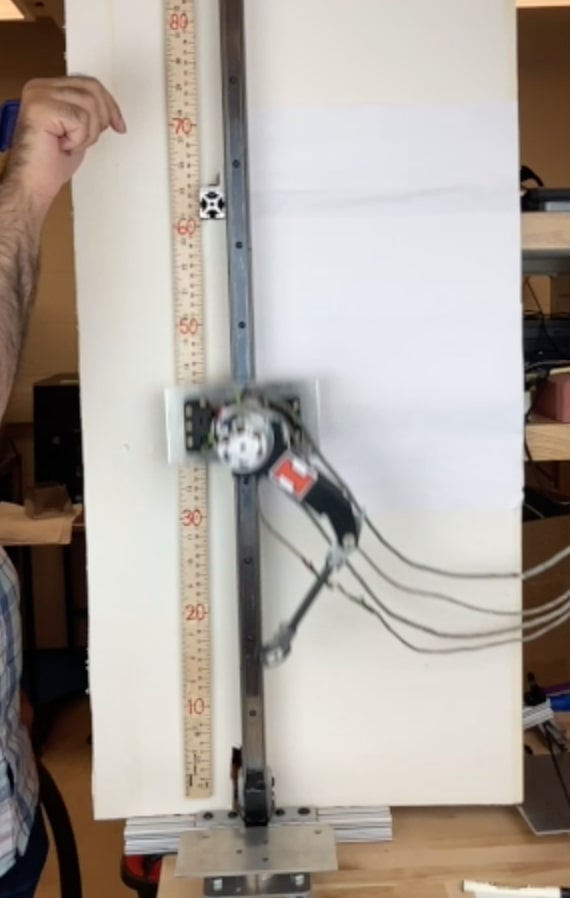}
		\includegraphics[width=0.19\linewidth]{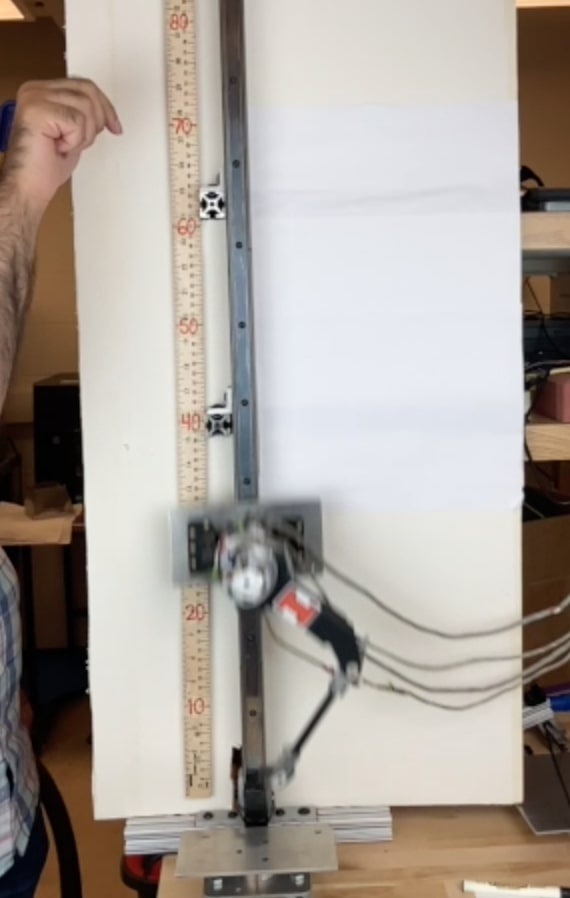}
		\includegraphics[width=0.19\linewidth]{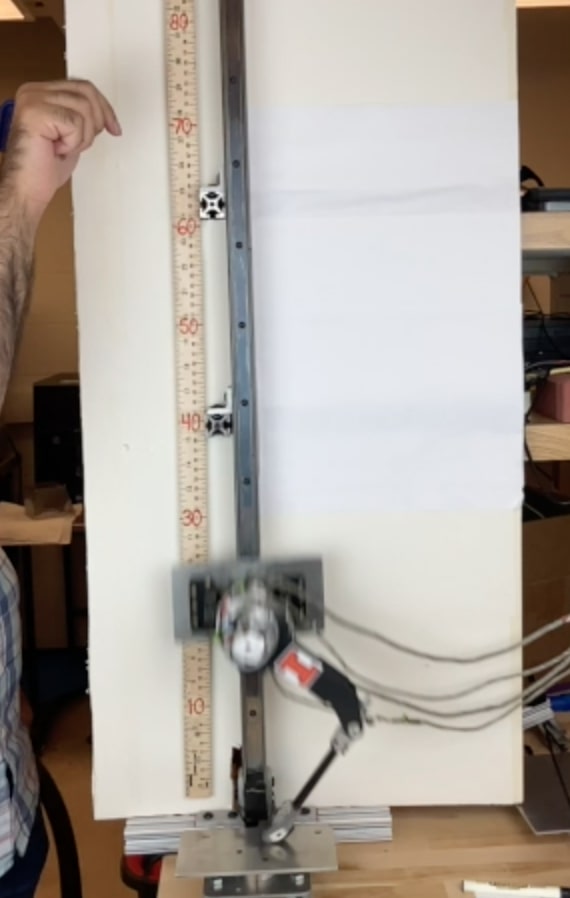}\\
		\includegraphics[width=0.19\linewidth]{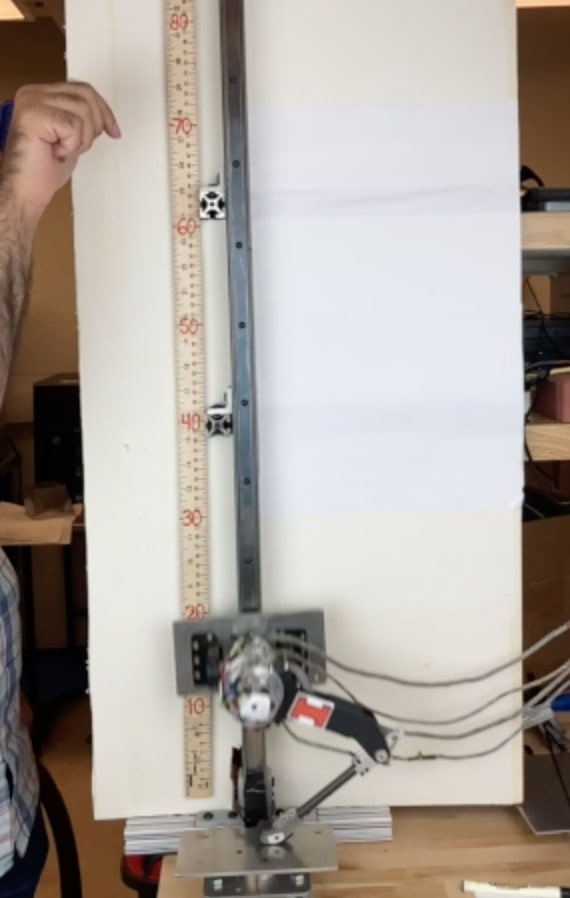}
		\includegraphics[width=0.19\linewidth]{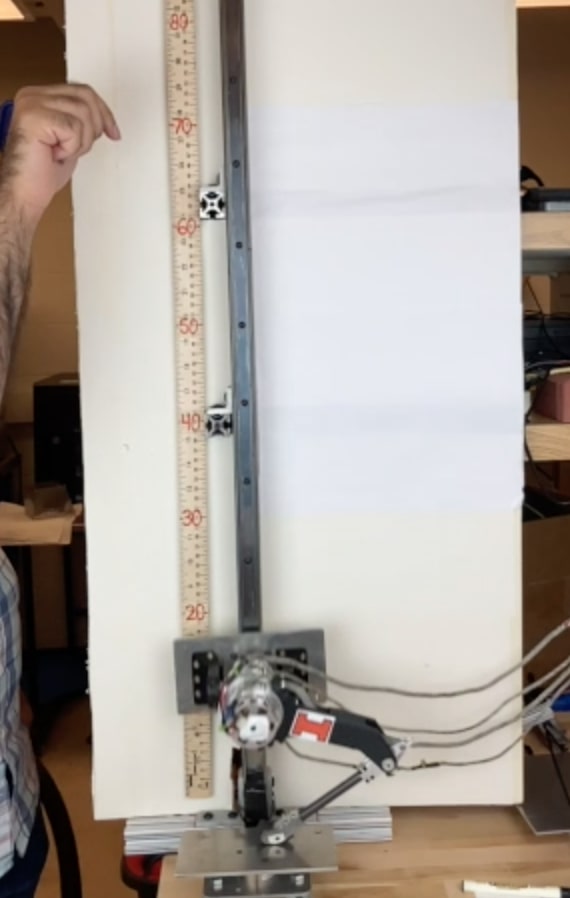}
		\includegraphics[width=0.19\linewidth]{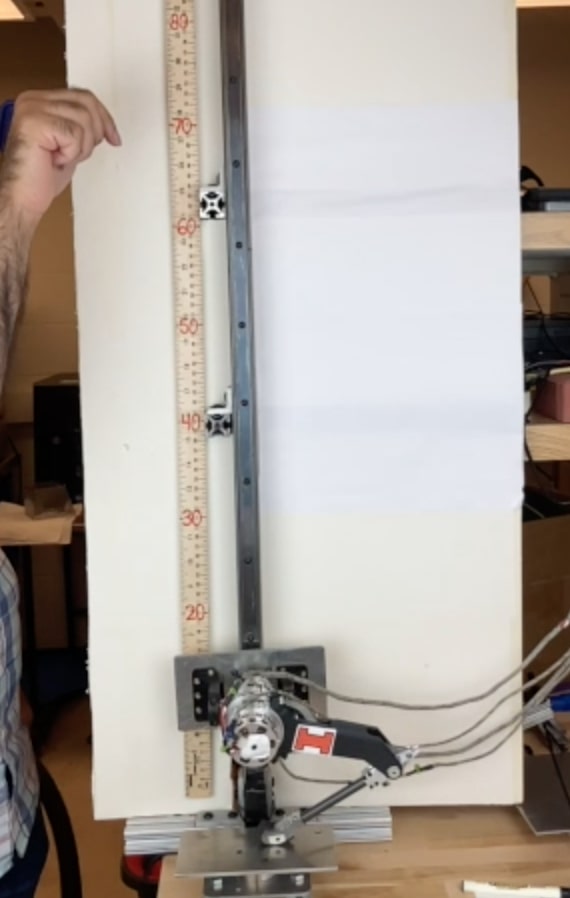}
		\includegraphics[width=0.19\linewidth]{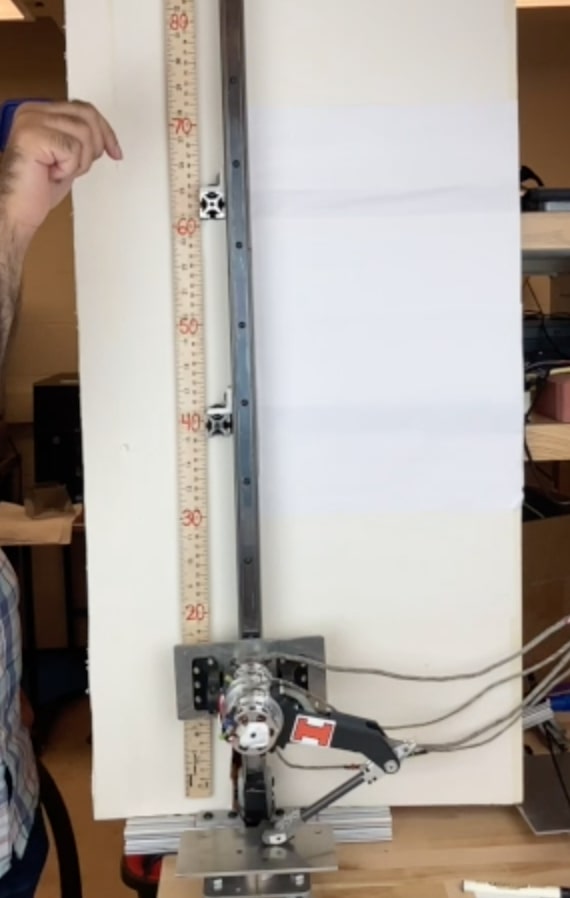}
		\includegraphics[width=0.19\linewidth]{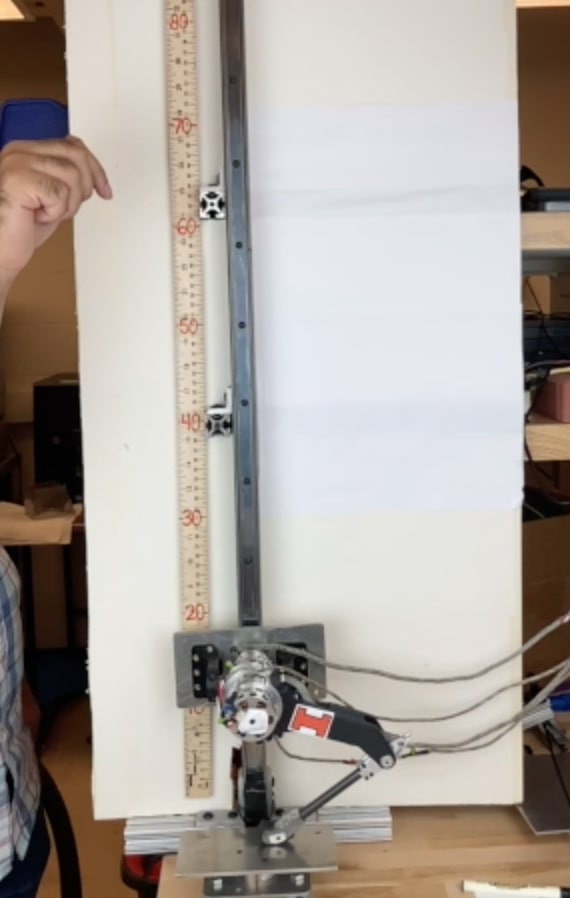}
		\caption{The frame sequence of a physical drop-and-catch test from a height of 0.7 m using the best agent trained by our method (TDPO) at 4 kHz control frequency. A short video of multiple drops from different heights is also included in our code repository. This unique simulation-to-real transfer is evidence of the practicality of our method, and the high-fidelity simulated dynamics in our environments. It is worth noting that TRPO, PPO, and TD3 could not produce any agents safe enough for physical tests neither at 4 kHz nor at 100 Hz control frequencies; the long-horizon agents could not land safely in simulation, and the best short-horizon agents failed at controlling the resonant oscillations due to the slow control frequency leading to physical damage to the hardware. On the other hand, TDPO's agent was capable of landing smoothly with no over- or under-shoots, resonant oscillations, or physical damage to the hardware.}
		\label{fig:camseq}
	\end{figure}


\begin{figure}[t]
	\begin{minipage}[c]{0.98\textwidth}
		\centering
		\subfloat[HPO curves for PPO and TRPO]{
			\includegraphics[width=1.00\linewidth]{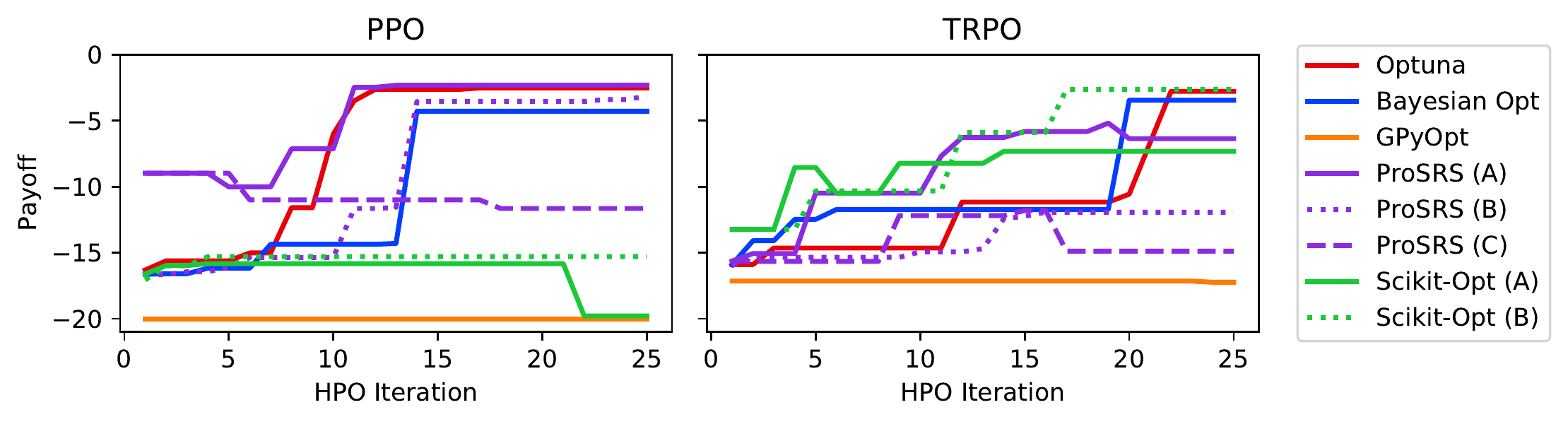}
		\label{fig:hpocurvesupp}
		}
	\end{minipage}
	\\
	\begin{minipage}[c]{0.48\textwidth}
		\centering
		\subfloat[PPO performance after 12 HPO iterations]{
			\includegraphics[width=1\linewidth]{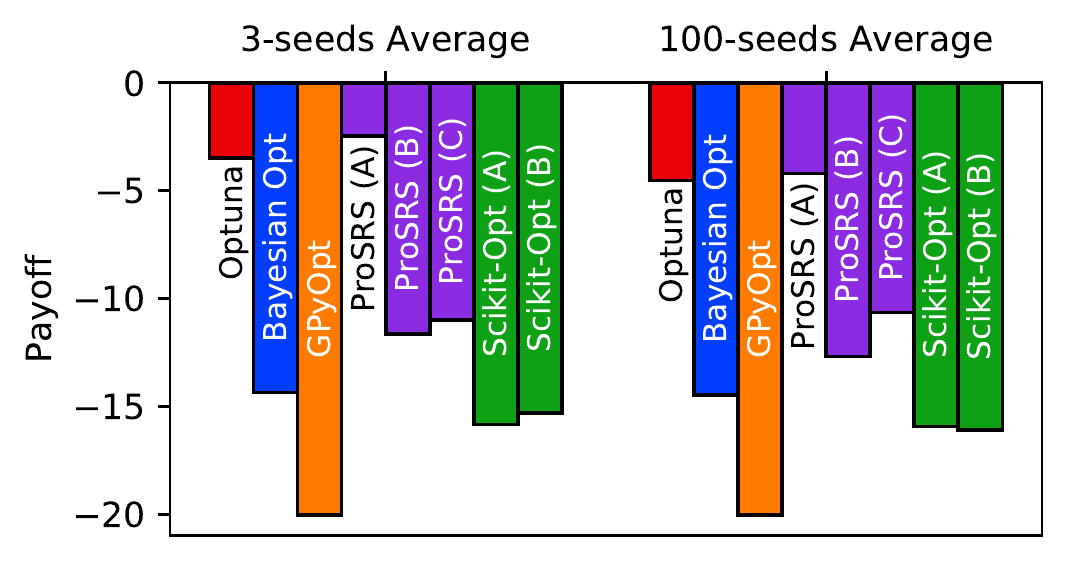}
		}
	\end{minipage}%
	\quad
	\begin{minipage}[c]{0.48\textwidth}
		\centering
		\subfloat[PPO performance after 25 HPO iterations]{
			\includegraphics[width=1\linewidth]{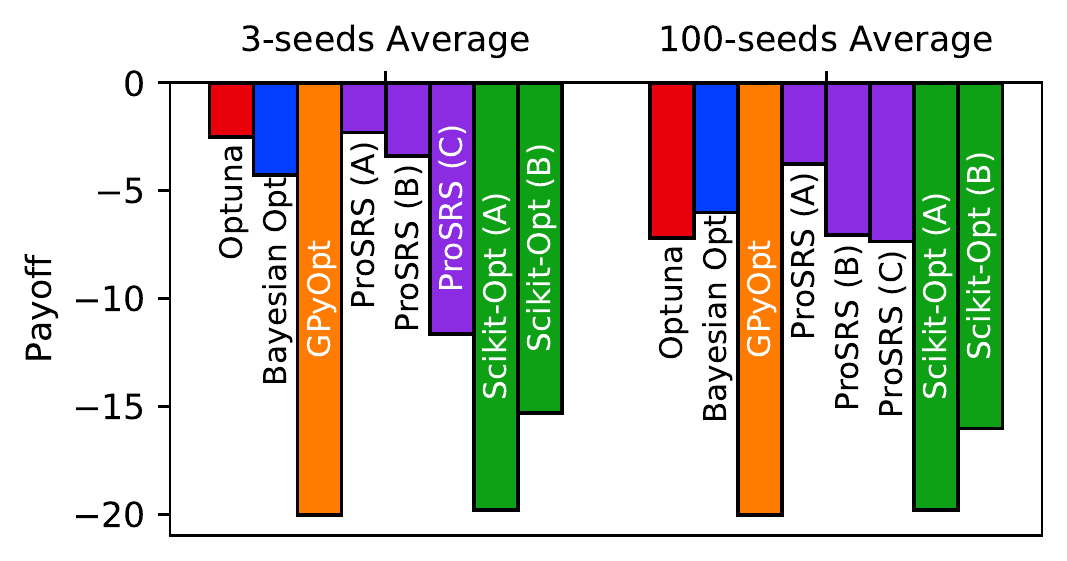}
		}
	\end{minipage}
	\\
	\begin{minipage}[c]{0.48\textwidth}
		\centering
		\subfloat[TRPO performance after 12 HPO iterations]{
			\includegraphics[width=1\linewidth]{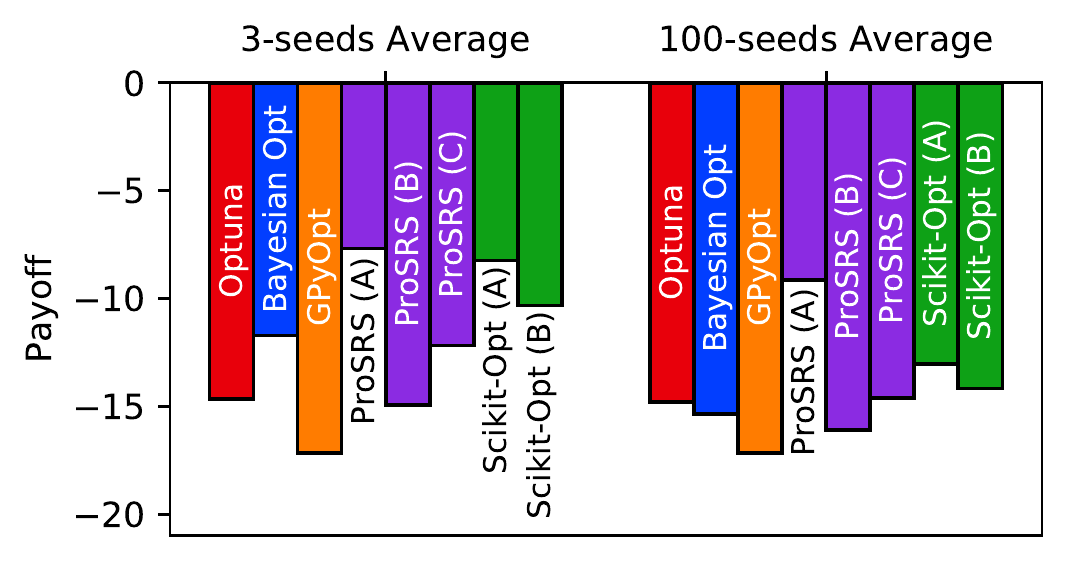}
		}
	\end{minipage}%
	\quad
	\begin{minipage}[c]{0.48\textwidth}
		\centering
		\subfloat[TRPO performance after 25 HPO iterations]{
			\includegraphics[width=1\linewidth]{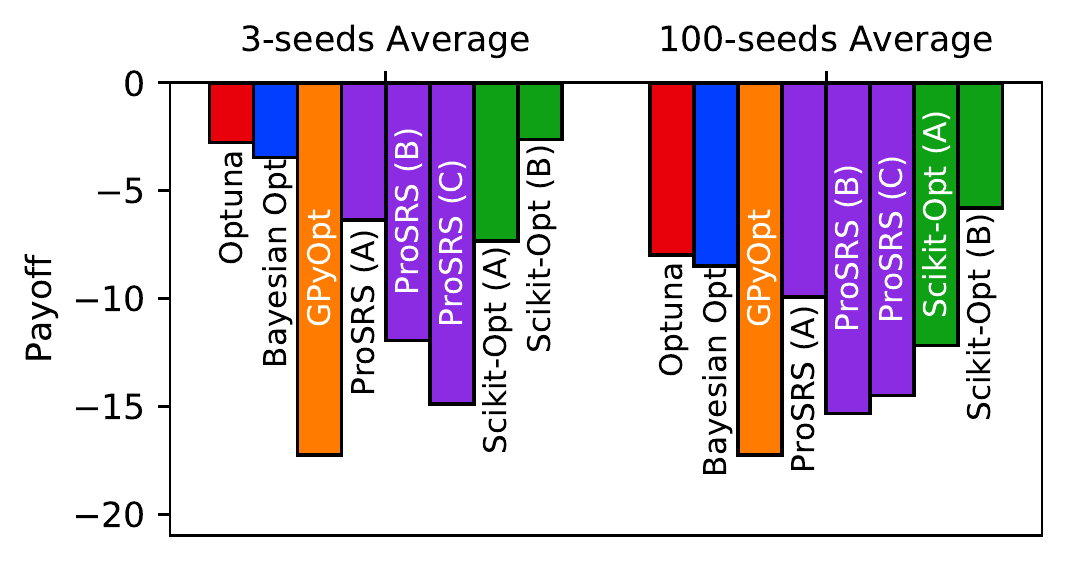}
		}
	\end{minipage}%
	
	\caption{The top line-plots show the HPO payoff curves for PPO and TRPO. For each iteration, the best agent obtained so far was evaluated using the 3 HPO seeds. In the top-left bar plot, the best agent after 12 iterations of HPO was selected and evaluated twice for each method; once only with the 3 HPO seeds, and once with 100 different random seeds. The shorter the bars, the better the agent's performance. Three ProSRS and two Scikit-Opt variants were considered. The A variants made 30 parallel proposals per iteration, where each proposal was evaluated with a single seed randomly chosen from the 3 HPO seeds. The B Variants made 10 parallel proposals per iteration, where each proposal was evaluated with all 3 seeds. Variant C made 30 parallel proposals per iteration, where each proposal was evaluated with a single seed randomly chosen between 1 and 100. Similarly, Sub-figures the top-right, bottom-left, and bottom-right bar plots were drawn for PPO after 25 iterations, TRPO after 12 iterations, and TRPO after 25 iterations, respectively. Overall, Variant A offers the best balance between the number of proposals and the induced stochasticity. The bar plots suggest a high ordering correlation between the evaluations on 3 seeds and 100 seeds, which further validates the "optimize the hyper-parameters for 3-seeds and finally evaluate on many seeds" approach.}
	\label{fig:supphpolf}
	\vspace{-6mm}
\end{figure}

Section~\ref{sec:ovatdetails} details the one variable at a time parameter sweep experiments which served as the initial guess set for HPO and defined the hyper-parameter search domain. Section~\ref{sec:indtraininglegsto} details the individual training settings for each set of proposed hyper-parameters. Section~\ref{sec:hpodetailssupp} discusses the HPO and the log-space search details. Finally, Section~\ref{sec:stolegexpdetails} describes the specifications of the transition dynamics, the reward definition, and the imposed stochasticity in the observations and the initial state distribution. We used the advanced variant of our method (TDPO) in this experiment, and did not perform any HPO on it due to the computational resource limitations and since the initial choice of the hyper-parameters was already outperforming the other methods. Although TD3 was included as a candidate method for HPO on the final long-horizon environment, the one variable at a time parameter sweep experiments revealed extremely poor performance on its behalf, and performing a few HPO iterations did not improve its performance significantly. Considering our computational resource limitations, we excluded TD3 from further HPO analysis due to its poor initial performance.

\subsubsection{One Variable at a Time Parameter Sweeps}\label{sec:ovatdetails}
For better performance, HPO methods need a reasonable set of initial hyper-parameter guesses. For this, we perform a one-variable-at-a-time parameter sweep around the central (default) setting of the reinforcement learning method. The central HP values, the search domain, and the sweep values are shown in Tables~\ref{tab:ppo1shorthorovat},~\ref{tab:trposhorthorovat}, and ~\ref{tab:td3shorthorovat} for the PPO, TRPO, and TD3 methods, respectively. Each HP was swept with the candidate values while keeping all the other HPs fixed at their central value. The benchmarking problem for determining the best HPO method had a short horizon, so the default HPs of each method were used as the central value. The only exception for this was the entropy coefficient, which was set to zero due to the default value causing significant instability and performance deterioration in the central setting for both TRPO and PPO. For TD3, we implemented two common exploration noise types: the Ornstein-Uhlenbeck noise and the pink noise. Each of these noises was parametrized by their relative bandwidth, where the relative bandwidth lies within $[0,1]$. This relative bandwidth was treated as a hyper-parameter for TD3. Since the Ornstein-Uhlenbeck noise with a relative bandwidth of 1 is the same as the white Gaussian noise (TD3's default exploration noise), we set it as the default. All settings were repeated with 3 random seeds, and the best agent's return during each individual training was reported as the performance metric for HPO. The parameter sweep values were extended on both ends until a clear peek in performance was detected. These parameter sweeps created an initial set of HPs with their corresponding performances, which were initially input to all HPO methods.

Since there was a 40-fold increase in the number of time-steps per trajectory between the short- and long-horizon environments, relevant HPs were proportionally scaled in the long-horizon. In particular, the central value, domain, and parameter sweep values for all (a) batch-sizes, (b) initial pre-training samples, and (c) training intervals in samples were multiplied by 40. Due to computational resource limitations, TD3's default optimization batch-size was not scaled proportional to the horizon, since such scaling would make the optimization costs alone more than 100 times the sampling costs. We still used the same number of parallel workers for TD3, and the HPO methods were still allowed to propose optimization batch-sizes tens of times larger than the default value. We also adjusted the MDP and GAE discount factors so that their respective horizon lengths are multiplied by 40.

\begin{table}[t]
  \centering
    \begin{tabular}{|P{2.1cm}|P{0.9cm}|P{1.8cm}|P{7.4cm}|}
    \toprule
    Hyper-Param. & Center & Domain & Parameter Sweep Values \\
    \midrule
    MDP Discount & \multicolumn{1}{c|}{$0.99$} & $[0.36,$ $\phantom{[}0.99984375]$ & {$0.36, 0.68, 0.84, 0.92, 0.96, 0.98, 0.99, 0.995, 0.9975,$ $0.99875, 0.999375, 0.9996875, 0.99984375$} \\
    \midrule
    GAE Discount   & \multicolumn{1}{c|}{$0.98$} & $[0.36,$ $\phantom{[}0.99984375]$ & {$0.36, 0.68, 0.84, 0.92, 0.96, 0.98, 0.99, 0.995, 0.9975,$ $0.99875, 0.999375, 0.9996875, 0.99984375$} \\
    \midrule
    Sampling BS & \multicolumn{1}{c|}{$256$} & $[64, 16384]$ & {$64, 128, 256, 512, 1024, 2048, 4096, 8192, 16384$} \\
    \midrule
    Clip Param. & \multicolumn{1}{c|}{$0.2$} & $[0.02, 200]$ & {$0.02, 0.06, 0.2, 0.6, 2.0, 6, 20, 60, 200$} \\
    \midrule
    Entropy Coef. & \multicolumn{1}{c|}{$0$} & $[0, 0.1]$ & {$0, 0.001, 0.01, 0.1$} \\
    \midrule
    Opt. Epochs & \multicolumn{1}{c|}{$4$} & $[1, 128]$ & {$1, 2, 4, 8, 16, 32, 64, 128$} \\
    \midrule
    Opt. MBs & \multicolumn{1}{c|}{$4$} & $[1, 64]$ & {$1, 2, 4, 8, 16, 32, 64$} \\
    \midrule
    Opt. LR & \multicolumn{1}{c|}{$10^{-3}$} & $[10^{-5}, 10^{-1}]$ & {$10^{-5}, 3\times10^{-5}, 10^{-4}, 3\times10^{-4}, 10^{-3},$ $3\times10^{-3}, 10^{-2}, 3\times10^{-2}, 10^{-1}$} \\
    \midrule
    ADAM $\epsilon$ & \multicolumn{1}{c|}{$10^{-5}$} & $[10^{-8}, 10^{-4}]$ & {$10^{-8}, 3\times10^{-8}, 10^{-7},  3\times10^{-7}, 10^{-6},$ $3\times10^{-6}, 10^{-5}, 3\times10^{-5}, 10^{-4}$} \\
    \midrule
    LR Schedule & Linear & -     & {Constant, Linear} \\
    \bottomrule
    \end{tabular}%
  \caption{The one-variable-at-a-time parameter sweep details for the PPO method on the short-horizon leg benchmark environment. BS, MB, and LR are short for batch-size, mini-batch, and learning rate, respectively. See Section~\ref{sec:ovatdetails} for more information.}
  \label{tab:ppo1shorthorovat}%
  \vspace{-6mm}
\end{table}%

\begin{table}[t]
  \centering
    \begin{tabular}{|M{2.1cm}|P{0.9cm}|P{1.8cm}|P{7.4cm}|}
    \toprule
    Hyper-Param. & Center & Domain & Parameter Sweep Values \\
    \midrule
    Sampling BS & $1024$  & $[64, 16384]$ & {$64, 128, 256, 512, 1024, 2048, 4096, 8192, 16384$} \\
    \midrule
    MDP Discount & $0.99$  & $[0.36,$ $0.99984375]$ & {$0.36, 0.68, 0.84, 0.92, 0.96, 0.98, 0.99, 0.995,$ $0.9975, 0.99875, 0.999375, 0.9996875, 0.99984375$} \\
    \midrule
    GAE Discount   & $0.98$  & $[0.36,$ $0.99984375]$ & {$0.36, 0.68, 0.84, 0.92, 0.96, 0.98, 0.99, 0.995,$ $0.9975, 0.99875, 0.999375, 0.9996875, 0.99984375$} \\
    \midrule
    Max KL & $0.01$  & $[0.00125,$ $0.64]$ & {$0.00125, 0.0025, 0.005, 0.01, 0.02,$ $0.04, 0.08, 0.16, 0.32, 0.64$} \\
    \midrule
    CG Iterations & $10$    & $[1, 20]$ & {$1, 2, 5, 10, 20$} \\
    \midrule
    Entropy Coef. & $0$     & $[0.0, 10^{-3}]$ & {$0.0, 10^{-5}, 10^{-4}, 10^{-3}$} \\
    \midrule
    CG Damping & $0.01$  & $[10^{-4}, 1]$ & {$10^{-4}, 10^{-3}, 10^{-2}, 10^{-1}, 1$} \\
    \midrule
    VF LR & $0.0003$ & $[3\times 10^{-6},$ $3\times 10^{-2}]$ & {$3\times 10^{-6}, 10^{-5}, 3\times 10^{-5}, 10^{-4}, 3\times 10^{-4},$ $10^{-3}, 3\times 10^{-3}, 10^{-2}, 3\times 10^{-2},$} \\
    \midrule
    VF Iterations & $3$     & $[1, 24]$ & {$1, 3, 6, 12, 24$} \\
    \midrule
    VF MBs & $8$     & $[1, 64]$ & {$1, 2, 4, 8, 16, 32, 64$} \\
    \bottomrule
    \end{tabular}%
    \caption{The one variable at a time parameter sweep details for TRPO on the short-horizon leg benchmark environment. BS, MB, LR, VF, and CG are short for batch-size, mini-batch, learning-rate, value function, and conjugate gradient, respectively. See Section~\ref{sec:ovatdetails} for more information.}
  \label{tab:trposhorthorovat}%
  \vspace{-6.5mm}
\end{table}%

\begin{table}[t]
  \centering
    \begin{tabular}{|P{2.0cm}|P{1.1cm}|P{1.8cm}|P{7.3cm}|}
    \toprule
    Hyper-Param. & Center & Domain & Parameter Sweep Values \\
    \midrule
    MDP Discount & $0.99$ & $[0.36,$ $0.99984375]$ & {$0.36, 0.68, 0.84, 0.92, 0.96, 0.98, 0.99, 0.995, 0.9975,$ $0.99875, 0.999375, 0.9996875, 0.99984375$} \\
    \midrule
    Buffer Size & $50000$ & $[3125,$ $800000]$ & {$3125, 6250, 12500, 25000, 50000, 100000,$ $200000, 400000, 800000$} \\
    \midrule
    Pre-training & $100$ & $[25, 6400]$ & {$25, 50, 100, 200, 400, 800, 1600, 3200, 6400$} \\
    \midrule
    Train Interval & $100$ & $[25, 6400]$ & {$25, 50, 100, 200, 400, 800, 1600, 3200, 6400$} \\
    \midrule
    Opt. BS & $128$ & $[8, 2048]$ & {$8, 16, 32, 64, 128, 256, 512, 1024, 2048$} \\
    \midrule
    Opt. LR & $0.0003$ & $[3\times 10^{-6},$ $3\times 10^{-2}]$ & {$3\times 10^{-6}, 10^{-5}, 3\times 10^{-5}, 10^{-4},$ $3\times 10^{-4}, 10^{-4}, 3\times 10^{-4}, 10^{-2}, 3\times 10^{-2}$} \\
    \midrule
    GD Iterations & $100$ & $[6, 400]$ & {$6, 12, 24, 50, 100, 200, 400$} \\
    \midrule
    Soft Update Coefficient & $0.005$ & $[0.000625,$ $0.08]$ & {$0.000625, 0.00125, 0.0025, 0.005,$ $0.01, 0.02, 0.04, 0.08$} \\
    \midrule
    Policy Delay & $2$ & $[1, 16]$ & {$1, 2, 4, 8, 16$} \\
    \midrule
    Noise Type & Ornstein & -     & {Ornstein, Pink} \\
    \midrule
    Noise RFB & $1$ & $[2^{-8}, 1]$ & {$2^{-8}, 2^{-7}, 2^{-6}, 2^{-5}, 2^{-4}, 2^{-3}, 2^{-2}, 2^{-1}, 1$} \\
    \midrule
    Noise std & $0.1$ & $[0.00625, 0.8]$ & {$0.00625, 0.0125, 0.025, 0.05, 0.1, 0.2, 0.4, 0.8$} \\
    \midrule
    Target Noise std & $0.2$ & $[0.00625, 3.2]$ & {$0.00625, 0.0125, 0.025, 0.05, 0.1,$ $0.2, 0.4, 0.8, 1.6, 3.2$} \\
    \midrule
    Target Noise Clipping & $0.5$ & $[0.0625, 4]$ & {$0.0625, 0.125, 0.25, 0.5, 1, 2, 4$} \\
    \bottomrule
    \end{tabular}%
    \caption{The one-variable-at-a-time parameter sweep details for the TD3 method on the short-horizon robotic leg test environment. BS, LR, GD, and RFB are short for batch-size, learning rate, gradient descent, and relative frequency bandwidth, respectively. See Section~\ref{sec:ovatdetails} for more information.}
  \label{tab:td3shorthorovat}%
  \vspace{-6mm}
\end{table}%

\begin{table}[htbp]
  \centering
    \begin{tabular}{|P{1.0cm}|P{2.05cm}|P{1.1cm}|P{1.65cm}|P{6.05cm}|}
    \toprule
    Method & Hyper-Param. & Center & Domain & Parameter Sweep Values \\
    \midrule
    PPO, TRPO, TD3 & MDP Discount & $0.99975$ & $[0.488,$ $0.99996875]$ & {$0.488, 0.744, 0.872, 0.936, 0.968, 0.984,$ $0.992, 0.996, 0.998, 0.999, 0.9995, 0.99975,$ $0.999875, 0.9999375, 0.99996875$} \\
    \midrule
    PPO, TRPO & GAE Discount & $0.9995$ & $[0.488,$ $0.9999375]$ & {$0.488, 0.744, 0.872, 0.936, 0.968, 0.984,$ $0.992, 0.996, 0.998, 0.999, 0.9995, 0.9995,$ $0.99975, 0.999875, 0.9999375$} \\
    \midrule
    PPO, TRPO & Sampling BS & $40000$ & $[19,$ $160000]$ & {$19, 39, 78, 156, 312, 625, 1250, 2500, 5000,$ $10000, 20000, 40000, 80000, 160000$} \\
    \midrule
    TD3   & Buffer Size & $2000000$ & $[62500,$ $8000000]$ & {$62500, 125000, 250000, 500000, 1000000,$ $2000000, 4000000, 8000000$} \\
    \midrule
    TD3   & Pre-training, Train Interval & $4000$ & $[500, 64000]$ & {$500, 1000, 2000, 4000, 8000,$ $16000, 32000, 64000$} \\
    \midrule
    TD3   & Opt. BS & $128$ & $[8, 2048]$ & {$8, 16, 32, 64, 128, 256, 512, 1024, 2048$} \\
    \bottomrule
    \end{tabular}%
  \caption{The one-variable-at-a-time parameter sweep details for the long-horizon robotic leg environment. Only the HPs in need of horizon-scaling were given here, and the rest of the HPs used the same settings as Tables~\ref{tab:ppo1shorthorovat},~\ref{tab:trposhorthorovat}, and ~\ref{tab:td3shorthorovat}. 144 parallel workers were used in the long-horizon environment trainings, so the collective batch-sizes are 144 times the values in this table.}
  \label{tab:hpo5bovat}%
  \vspace{-6mm}
\end{table}%

\subsubsection{Individual Training Details}\label{sec:indtraininglegsto}
Our experiments in the short-horizon test environment showed that the TD3-specific architecture (i.e., using ReLU activations and a tanh output activation and normalizing the actions) resulted in harming the performance of the TD3 method, and was worse than using the neural architecture of other methods. We speculate that this is due to the robotic leg being high-powered making initial attempts at exploring the full actuation amplitude futile and thus resulting in poor performance. Therefore, we used the same neural architecture (a 3-layer MLP with 64 units in the hidden layers and tanh activation) for all methods. As shown in Figure~\ref{fig:hpopicking}, using this architecture, TD3 managed to outperform the other methods upon full HPO on the short-horizon test benchmark.

\paragraph{The short-horizon test environment:} We used four parallel workers in all trainings. Therefore, all the relevant batch-sizes in Tables~\ref{tab:ppo1shorthorovat},~\ref{tab:trposhorthorovat}, and ~\ref{tab:td3shorthorovat} must be quadrupled to reveal the collective values. Each training was performed for one million time-steps per worker (i.e., four million collective training steps). Since (a) some HPO methods were intolerant of evaluation stochasticity and (b) performing HPO with many seeds was intractable, the HPO's metric was defined as the average performance on 3 pre-determined random seeds. This environment defined trajectories with a duration of 2 seconds and a control frequency of 100 Hz, corresponding to a total of 200 time-steps per trajectory.

\paragraph{The long-horizon leg environment:}  We used a large number of parallel workers in this problem to make the training as stable as possible. In particular, we used 144 parallel workers, which means that the collective batch-sizes can be obtained by multiplying the values in Table~\ref{tab:hpo5bovat} by 144. We trained each set of proposed hyper-parameters for 5 billion collective time-steps. Similar to the short-horizon environment, the HPO's metric was defined as the average performance on 3 pre-determined random seeds. However, the final set of best hyper-parameters for each method was trained with 25 random seeds to be shown in Figure~\ref{fig:legsto} of the main paper. This environment defined rollout durations of 2 s and a control frequency of 4 kHz, corresponding to a total of 8000 time-steps per trajectory.

\subsubsection{The HPO Details}\label{sec:hpodetailssupp}

We used the default settings with each implementation. These default settings were specifically as follows: Optuna used a Tree Parzen Estimation (TPE) method, Bayesian Optimization used a Gaussian Process (GP) with Upper Confidence Bound (UCB) acquisitions and the Mattern kernel, Scikit-Opt used a GP with a hedge acquisition function (i.e., automatically determining the acquisition function from a pre-defined set), ProSRS used a GP with Radial Basis Function (RBF) acquisitions, and GPyOpt used a GP with Local Penalization (LP), UCB acquisitions, and a white noise kernel. All HPO methods parameters were left as their default values.

ProSRS and Scikit-Opt HPO implementations could tolerate evaluation noise. That is, these implementations were programmed to handle stochasticity in their evaluation metric (according to their documentation). On the other hand, Optuna, Bayesian Optimization, and GPyOpt's documentation suggested running them on non-noisy evaluation functions only. Since each HPO training in the long-horizon environment could take 10 hours, running thousands of sequential HPO iterations is impractical. Therefore, we used the HPO implementations in a ``batched'' capacity, where each HPO method proposed multiple sets of HPs for parallel evaluation and then received their performance values simultaneously. We allowed all HPO methods to ask for 30 parallel trainings in each proposal.

The noise-tolerant HPO implementations (ProSRS and Scikit-Opt) were allowed to propose 30 different HPs. We ran each of these 30 with one of the 3 pre-determined random seeds (picked at random) and returned the result to the HPO method. We experimentally validated that this is a practical choice leading to the best results for the noise-tolerant HPO implementations, as shown in Figure~\ref{fig:supphpolf}. On the other hand, the noise-intolerant HPO implementations (Optuna, Bayesian Optimization, and GPyOpt) could only propose 10 HP sets, since each set needed to be trained on all 3 pre-determined random seeds, and their average would have to be reported to the HPO method. This ``batched'' HPO approach allowed us to effectively optimize TRPO, PPO, and TD3 on the short horizon benchmark in 25 HPO iterations as shown in Figure~\ref{fig:legsto}. Due to resource limitations, we only ran 11 HPO iterations for PPO and TRPO on the long-horizon environment. 

\paragraph{Hyper-parameter pre-processing transformations:} Performing HPO on the original HP space uniformly can make (a) the search domain narrow, and (B) the HP landscape difficult to navigate. The results suggested that PPO, TRPO, and TD3 needed the HPO to go beyond merely fine-tuning, so we performed the HP search in the log-space to cover as large a domain as possible and make the search landscape smoother. Therefore, the natural log of all numeric hyper-parameters was taken before passing them to the HPO method. We made two exceptions to this rule. Instead of searching for the MDP and GAE discount factors in the log-space, we transformed them into their respective horizons and then searched for the horizons logarithmically. In other words, instead of searching for $\log(\gamma)$, we searched for $\log(1/{(1-\gamma)})$. Since we wanted the HPO method to be able to set the entropy coefficient $C_{\text{entropy}}$ to zero, we searched for $\log(C_{\text{entropy}} + 10^{-5})$ instead of searching for $\log(C_{\text{entropy}})$. This allowed the HPO method to be able to disable the entropy regularization without having to stretch the search domain to negative infinity. The search domains for all HPs were identical to the one variable at a time parameter sweep domains in Section~\ref{sec:ovatdetails}, where the domain bounds were extended until a clear peek in performance was detected.

\subsubsection{The Environment Specification}\label{sec:stolegexpdetails}

For the dynamics, we used the same physical model as the one in Sections~\ref{sec:legsimple} and ~\ref{legexpdetails}. For the initial state distribution, the hip and knee angles were uniformly chosen from the $[-180^{\circ}, -30^{\circ}]$ and $[-155^{\circ}, -35^{\circ}]$ intervals, respectively. The initial drop height of the robotic leg was uniformly chosen between 0.4 and 0.8 meters. To make the physical model realistically stochastic, we extracted physical sensing noises from the hardware and used them as a template for generating stochastic observation noise in our simulated model. Since the agent was to be implemented on physical hardware, we modified the reward definition to penalize non-optimal behavior, such as violating the physical constraints, more severely. In particular, we used the reward function described by the following equations:
\begin{equation}
R = R_{\text{torque sm.}} + R_{\text{foot offset}} + R_{\text{posture}} + R_{\text{velocity}} + R_{\text{torque}} + R_{\text{constraints}} 
\end{equation}
with
\begin{align}
R_{\text{torque sm.}} &= -2\times \big[ (\tau_{\text{knee}} - \tau_{\text{knee}}^{\text{old}})^2 + (\tau_{\text{hip}} - \tau_{\text{hip}}^{\text{old}})^2\big]\nonumber\\
R_{\text{foot offset}} &= -1\times x_{\text{foot}}^2 \nonumber\\
R_{\text{posture}} &= -0.1 \times \big[ (z_{\text{hip}} - z_{\text{foot}}) -  z^{\text{target}}_{\text{posture}}\big]^2 \nonumber\\
R_{\text{torque}} &= -10^{-7}\times \big[ \tau_{\text{knee}}^2 + \tau_{\text{hip}}^2  \big]\nonumber\\
R_{\text{velocity}} &= -10^{-4}\times \big[\omega_{\text{knee}}^2 + \omega_{\text{hip}}^2 \big]\nonumber\\
R_{\text{constraints}} &= -0.1 \times \mathbf{1}_{\text{phys. violation}}
\end{align}

where 
\begin{itemize}
	\item $\omega_{\text{knee}}$ and $\omega_{\text{hip}}$ are the knee and hip angular velocities in radians per second, respectively.
	\item $\tau_{\text{knee}}$ and $\tau_{\text{hip}}$ are the knee and hip torques in Newton meters, respectively.
	\item $x_{\text{foot}}$ and $z_{\text{foot}}$ are the horizontal and vertical foot offsets in meters from the desired standing point on the ground, respectively.
	\item $z_{\text{hip}}$ is the vertical hip offset in meters from the desired standing point on the ground.
	\item $z^{\text{target}}_{\text{posture}}$ is a target posture height of 0.1~m.
	\item $\tau_{\text{knee}}^{\text{old}}$ and $\tau_{\text{hip}}^{\text{old}}$ are the values of $\omega_{\text{knee}}$ and $\omega_{\text{hip}}$ from the previous time-step, respectively.
	\item $\mathbf{1}_{\text{phys. violation}}$ is an indicator variable only being one when the agent violates the physical safety bounds of the robotic leg hardware. This involves exceeding the limits of allowed hip or knee angles and angular velocities, or the vertical offsets of the hip and knee. Such violations could result in physical damage to the hardware and were penalized during the training.
\end{itemize}

\section{Broader Impact}

This work provides foundational theoretical results and builds upon reinforcement learning techniques within the area of machine learning. Reinforcement learning methods can provide stable control and decision-making processes for a range of challenging applications in robotics~\citep{hwangbo2019learning}, computer vision~\citep{yun2017action}, advertisement and recommendation systems~\citep{rohde2018recogym}, human search and rescue in natural disasters~\citep{niroui2019deep,doroodgar2014learning}, automated resource management~\citep{mao2016resource}, chemistry~\citep{zhou2017optimizing}, computational biology~\citep{popova2018deep}, and even clinical surgeries~\citep{nguyen2019manipulating}. 

Although many implications could result from the application of reinforcement learning, in this work we focused especially on settings where intelligence, precision, and speed are required for controlling robotic movements. Our work particularly investigated methods for controlling oscillation characteristics using reinforcement learning agents. Such improvements could help stabilize high-bandwidth robotic environments and may facilitate the training of intelligent agents for robotic hardware where safety is of concern~\citep{amodei2016concrete}. The negative consequences of this work could include the removal of human decision-making from the controller design loop, the unknown existence of unforeseen loopholes in the engineered behavior, and vulnerability to policy induction attacks~\citep{behzadan2017vulnerability}.

To mitigate the risks, we encourage further research to develop methods to provide guarantees and definitive answers about agent behavior. In other words, a general framework for making guaranteed statements about the behavior of the trained agents is missing. For instance, one cannot currently guarantee that the agent would act safely under all circumstances and perturbations even if it achieves high performances in practice. Understanding exploitation techniques of such reinforcement learning agents and designing processes to prevent such abuses could be of paramount societal concern. 



\end{document}